\newtheorem{theorem}{Theorem}
\newtheorem{conclusion}{Conclusion}
\newtheorem{lemma}{Lemma}
\newtheorem{experiment}{Experiment}
\newcommand{\iu}{\mathrm{i}\mkern1mu}
\title{On the Activation Function Dependence of the Spectral Bias of Neural Networks}
\author{%
  Qingguo Hong\\
  Department of Mathematics, Pennsylvania State University\\
  University Park, PA 16801 \\
  \texttt{huq11@psu.edu} \\
   \And
 Jonathan W. Siegel\\
  Department of Mathematics, Pennsylvania State University\\
  University Park, PA 16801 \\
  \texttt{jus1949@psu.edu} \\
   \And
  Qinyang Tan \\
  Department of Mathematics, University of Southern California\\
  Los Angeles, CA 90089 \\
  \texttt{qinyangt@usc.edu} \\
  \And
  Jinchao Xu\\
  Department of Mathematics, Pennsylvania State University\\
  University Park, PA 16801 \\
  \texttt{xu@math.psu.edu} \\
}
\begin{document}

\maketitle

\begin{abstract}
 Neural networks are universal function approximators which are known to generalize well despite being dramatically overparameterized. We study this phenomenon from the point of view of the spectral bias of neural networks. Our contributions are two-fold. First, we provide a theoretical explanation for the spectral bias of ReLU neural networks by leveraging connections with the theory of finite element methods. Second, based upon this theory we predict that switching the activation function to a piecewise linear B-spline, namely the Hat function, will remove this spectral bias, which we verify empirically in a variety of settings. Our empirical studies also show that neural networks with the Hat activation function are trained significantly faster using stochastic gradient descent and ADAM. Combined with previous work showing that the Hat activation function also improves generalization accuracy on image classification tasks, this indicates that using the Hat activation provides significant advantages over the ReLU on certain problems.
\end{abstract}

\section{Introduction}
\label{intro}
Despite being heavily overparameterized, deep neural networks have been shown to be remarkably good at generalizing to natural data. This has raised the important problem of understanding the implicit regularization effect of deep neural networks \cite{poggio2018theory,rahaman2019spectral,soudry2018implicit,xu2018understanding}. Recently, it has been proposed that the training of deep neural networks exhibits a spectral bias \cite{rahaman2019spectral,xu2018understanding}, in other words that low frequencies are learned faster via training by stochastic gradient descent. This is proposed as a mechanism which biases networks toward low-complexity solutions \cite{rahaman2019spectral}, and also helps explain the recent success in using neural networks to solve PDEs \cite{han2018solving,raissi2019physics}.

In this work, we study the dependence of the spectral bias phenomenon on the activation function of the neural network. We begin by giving a \textit{theoretical derivation} of the spectral bias phenomenon for neural networks with ReLU activation function. This analysis is based on the connections between ReLU networks and finite element methods \cite{he2020relu}. Based upon this theory, we predict that a simple modification of the ReLU activation function, called the Hat activation function, will remove the spectral bias of neural networks. We empirically verify these predictions on a variety of problems, including both shallow and deep networks and real and simulated data. Moreover, we demonstrate that removing the spectral bias has the effect that neural networks with the Hat activation function are trained much faster than ReLU neural networks. Combined with recent results showing that the Hat activation can improve performance on image classification tasks \cite{wang2022cnns}, this demonstrates a significant advantage to using the Hat activation function over the ReLU. 

\textbf{Contributions}
\begin{enumerate}
 \item We provide a theoretical explanation for the spectral bias of ReLU neural networks by drawing a comparison with the analysis of finite element methods. Our analysis applies to networks of arbitrary width, i.e. not only in the infinite or large width limit, and also shows how the spectral bias increases with increasing width. 
 \item Based upon this theory, we predict that neural networks with a simple modification of the ReLU, called the Hat activation function, will not experience the spectral bias observed in ReLU networks.
 \item We validate this prediction empirically. Specifically, we show that the spectral bias disappears when the ReLU activation function is replaced by the Hat activation function or scaled Hat activation function. This shows that the spectral bias of neural networks depends critically upon the activation function used. Our empirical studies also show that the networks with Hat activation function are much easier to train by gradient descent type training algorithm. 
\end{enumerate}

\section{Related Work}
Deep neural networks have been shown to be able to fit random data with perfect accuracy \cite{zhang2021understanding,arpit2017closer}. This has motivated the study of the implicit regularization which prevents overfitting for such networks \cite{poggio2018theory,rahaman2019spectral,cao2019towards}. The spectral bias as a mechanism for regularization has been proposed and studied in \cite{rahaman2019spectral,xu2018understanding,cai2019multi,li2020gradient,xu2022overview,fridovich2021spectral,zhang2020rethink,yang2019fine}. The convergence of neural networks on pure frequencies, which supports the spectral bias phenomenon, has been studied in \cite{basri2020frequency}. For applications of neural networks to differential equations, it has been observed that high-frequencies are often missing or appear late in training, and potential solutions, including novel activation functions, have been proposed \cite{sitzmann2020implicit,biland2019frequency,cai2020phase}. The eigenstructure of the Fisher information matrix of neural network models has been studied in \cite{karakida2019pathological}, where it has been shown that only a few eigenvalues are large for overparameterized neural networks. The spectral bias has also been related to kernel methods and the neural tangent kernel \cite{jacot2018neural,canatar2021spectral,hu2020surprising,kopitkov2020neural,luo2020exact}, and has been observed for GANs \cite{khayatkhoei2020spatial}. In addition, the spectral bias, or F-principle, has been observed to be robust to the specific optimization methods used \cite{ma2021frequency}.

Our work is primarily concerned with the theoretical underpinning of the spectral bias. This has previously been studied in \cite{xu2018understanding} for the sigmoidal activation function. We provide a theory explaining the spectral bias for the ReLU activation function. However, our theory and empirical work also show that the spectral bias is not \textit{universal}. In particular, it depends significantly upon the activation function used. A similar conclusion was reached by analyzing the spectrum of the neural tangent kernel in \cite{yang2019fine}, where it was shown that a sigmoidal network does not exhibit a spectral bias with appropriate initialization. 

In contrast, in our work we show that neural networks with a Hat activation function do not exhibit a spectral bias. The Hat activation function has been empirically analyzed for computer vision problems in \cite{wang2022cnns}, where is it shown that the can improve the performance of the neural network. The closely related Mexican Hat function has also been proposed as an activation function in \cite{nie2014multistability}. Instead of considering the decay of the eigenvalues of the neural tangent kernel, which is adapted to the network architecture, we consider Fourier modes and eigenfunctions of kernels which only depend upon the data, which relates the spectral bias to more natural notions of frequency. The advantage of our approach is that our analysis holds for networks of arbitrary width and shows how the spectral bias increases with increasing width. Further, our empirical work demonstrates that neural networks with the Hat activation function train significantly more quickly than ReLU networks, which motivates further study into the use of the Hat activation.

\section{Spectral Bias of Neural Networks}

The spectral bias of neural networks has been well-established in prior works. This concept of the spectral bias depends upon an appropriate notion of frequency combined with the observation that certain frequencies converge faster during training \cite{rahaman2019spectral,cao2019towards}.

\subsection{Notion of Frequency}

By frequency we mean the eigenfunctions of a self-adjoint compact operator $T$ on $L^2(d\mu)$ for an appropriate measure $d\mu$. This notion has been well-studied in the machine learning literature in the context of kernels \cite{fasshauer2011positive,hofmann2008kernel}. We begin with two important examples, the first corresponding to a continuous measure $d\mu$ and the second to a discrete measure $d\mu$ on the training data.


\textbf{Continuous measure}: Consider the torus $\mathbb{R}^d/\mathbb{Z}^d$ with the Lebesgue measure $d\mu$. Let $T$ be the convolution operator $T_s$ given by
\begin{equation}
 T_s(f)(x) = \left(\frac{s}{2\pi}\right)^{d/2}\int_{\mathbb{R}^d} e^{-\frac{s}{2} |x-y|^2}f(y)dy.
\end{equation}
Note that in the integral above $f(y)$ is viewed as a periodic function on the whole space. It is well-known that this convolution operator gives the solution to the heat equation with periodic boundary conditions and initial condition $f$. Specifically, if $u$ denotes the solution to
\begin{equation}
 u_t + \frac{1}{2}\Delta u = 0,~u(0,x) = f(x),
\end{equation}
then we have $u(s,x) = T_{s^{-1}}(f)(x)$. Thus, the operator $T_s:L^2(\mathbb{R}^d/\mathbb{Z}^d)\rightarrow L^2(\mathbb{R}^d/\mathbb{Z}^d)$ can be thought of as a smoothing or averaging operator.
The eigenfunctions of this operator are given by the Fourier modes $e^{2\pi \iu \omega\cdot x}$ with $\omega\in \mathbb{Z}^d$ and the eigenvalues in this case are given by the Fourier coefficients of the associated kernel $k_s:(\mathbb{R}^d/\mathbb{Z}^d)\times (\mathbb{R}^d/\mathbb{Z}^d)$ given by $$k(x,y) = \left(\frac{s}{4\pi}\right)^{d/2}\sum_{k\in \mathbb{Z}^d} e^{-(s/2)|x-y-k|^2},$$ which are $(4\pi s)^{-d/2}e^{-(2s)^{-1}|\omega|^2}$ \cite{berlinet2011reproducing,hofmann2008kernel}. High frequencies correspond to small eigenvalues. This is due to the intuitive notion that high frequencies decay rapidly under averaging.

\textbf{Discrete measure}: Consider a finite set of data $\mathcal{D} = \{d_1,...,d_N\}$ in $\mathbb{R}^d$. In high dimensions, the number of data points $N$ must scale exponentially with $d$ in order for traditional Fourier modes to be a useful notion of frequency. In particular, in $d$-dimensions, there are $M = O(2^d)$ vectors $\omega_i\in \mathbb{R}^d$ for $i=1,...,M$ which satisfy $|\omega_i - \omega_j| > \frac{1}{2}$ and $|\omega_i| \leq 1$. However, if we only have $N$ datapoints, at most $N$ of the functions $\cos(2\pi \omega_i\cdot x)$ can be linearly independent when evaluated at the data. Thus, in order to distinguish even low frequencies the number of datapoints must be exponential in the dimension.

Instead of considering Fourier modes in this situation, we follow the approach laid out in \cite{rahaman2019spectral} to define a useful notion of frequency on sparse high-dimensional data. Specifically, our notion of frequency is given by the eigenfunctions of the following kernel map. Let $X=L^2(\mathcal{D})$ be the space of functions defined on the dataset $\mathcal{D}$. Define the kernel map $K:X\rightarrow X$ by
\begin{equation}\label{kernel-definition}
 K(f)(y) = \sum_{z\in \mathcal{D}} k(y,z) f(z)
\end{equation}
for a suitable positive definite kernel $k:\mathcal{D}\times \mathcal{D} \rightarrow \mathbb{R}$. The Gaussian RBF kernel $k(y,z) = e^{-s|y-z|^2}$ is a natural choice. The eigenfunctions $\phi_i$ of the kernel map $K$ resemble sinusoids and can be thought of as a proxy for frequency when the data is sparse in high dimensions \cite{rahaman2019spectral,fasshauer2011positive}. This is due to the fact that eigenfunctions corresponding to small eigenvalues decay rapidly when averaging using the kernel, similar to the high frequency functions considering in the previous two examples. Thus this notion gives a reasonable generalization of the notion of frequency on an arbitrary set of datapoints, which is needed to formulate the spectral bias on high-dimensional real data.

\subsection{Spectral Bias}
The spectral bias of neural networks refers to the observation \cite{rahaman2019spectral} that during training a neural network fits the low frequency components of a target faster than the high-frequency components. Specifically, consider a basis of eigenfunctions $\phi_1,\phi_2,...$ corresponding to the eigenvalues $\mu_1 \geq \mu_2\geq \cdots$ of a suitable compact self-adjoint operator $T$ on $L^2(d\mu)$. Note that since the operator $T$ is self-adjoint the eigenfunctions $\phi_i$ can be taken to be orthonormal. If we let $f^\ell$ denote the neural network function at training iteration $\ell$ and $f^*$ the true empirical minimizer, then expanding the difference in terms of the eigenfunctions $\phi_i$, we get
\begin{equation}
 f^\ell - f^* = \sum_i \alpha_i^\ell\phi_i.
\end{equation}
The spectral bias is the statement that $\alpha_i^\ell$ decays more rapidly in $\ell$ for smaller frequencies $i$. The spectral bias phenomenon has been observed experimentally for deep ReLU networks when the $\phi_i$ are Fourier modes \cite{cao2019towards,rahaman2019spectral} and when the $\phi_i$ are the eigenfunctions of a Gaussian RBF kernel \cite{rahaman2019spectral}. This has been proposed as an explanation for the \textit{implicit regularization} of neural networks \cite{rahaman2019spectral}, since it biases neural networks toward smooth functions in a manner similar to using the $T^{-1}$-norm as a regularizer.

We remark that here the larger eigenvalues $\mu_i$ correspond to lower frequencies, which is due to our definition in terms of the compact operator $T$. In some cases, $T$ will be (formally) given by $T = e^{-S}$ for an operator $S$ with eigenfunctions $\phi_i$ and eigenvalues $\lambda_i = e^{-\mu_i}$. In this case, smaller eigenvalues $\lambda_i$ correspond to lower frequencies. A typical example of this is to take $T$ to be the heat kernel and $S$ to be the Laplacian. The advantage of using the operator $T$ instead of $S$ is that $S$ is usually an unbounded operator and thus requires additional technical machinery to interpret correctly.

\section{Spectral Analysis of Shallow Neural Networks}

\subsection{Preliminaries}
Throughout this section we consider shallow neural network functions $f:\mathbb{R}^d\rightarrow \mathbb{R}$ defined by a shallow neural network of the form
\begin{equation}
 f(x) = \sum_{i=1}^n a_i\sigma(\omega_i\cdot x + b_i),
\end{equation}
where $\omega_i\in \mathbb{R}^d$ and $b_i\in \mathbb{R}$. The activation functions we will consider are the ReLU activation function $\sigma(p) :=\rm ReLU(p)= \max(0,p)$ and the Hat activation defined by
\begin{equation}\label{hat-definition}
 \sigma_{H}(p) :={\rm Hat}(p)= \begin{cases}
   0 & \text{$p < 0$~or~$p \geq 2$}\\
   p & 0\leq p < 1\\
   2-p & 1\leq p < 2.
\end{cases}
\end{equation}
The Hat activation function is not scale invariant. As a result, we will also consider more generally a scaled Hat function $\sigma_H(\alpha\cdot)$ as an activation function. It is known that shallow neural networks with either the ReLU or Hat activation function can approximate arbitrary continuous functions \cite{hornik1989multilayer}. This universal approximation property partially explains the success of neural networks. We analyze the spectral bias of shallow neural networks by leveraging its connections with finite element methods.

\subsection{Spectral Bias in 1D for ReLU and Hat Networks}

We consider fitting a one dimensional function $u:[0,1]\rightarrow \mathbb{R}$ with a shallow ReLU neural network. For the theoretical analysis, we consider the simplified situation where the inner weights are fixed and the network is given by
\begin{equation}
 f_{NN}(x,\vec{a}) = \sum_{i=1}^n a_i\sigma\left(x-\frac{i}{n}\right).
\end{equation}
Here only the weights $a_i$ are learned and the $\omega_i = 1$ and $b_i = i/n$ are fixed. We consider minimizing the following loss function using gradient descent
\begin{equation}
 L(\vec{a}) = \frac{1}{2}\int_0^1 (u(x) -f_{NN}(x,\vec{a}))^2 dx.
\end{equation}
Lemma \ref{loss-function-lemma}, which is well-known in the theory of finite elements \cite{ciarlet2002finite}, describes the structure of the loss function $L$ (we refer to the appendix for proofs and technical details).
\begin{lemma}\label{loss-function-lemma}
 The loss function $L$ takes the form
 \begin{equation}
  L(\vec{a}) = \vec{a}^TM_{\sigma}\vec{a} - b_{u,\sigma}^T\vec{a},
 \end{equation}
 where the components of $b_{u,\sigma}$ are given by
 \begin{equation}
  (b_{u,\sigma})_i = \int_0^1 u(x)\sigma\left(x-\frac{i}{n}\right)dx,
 \end{equation}
 and the mass matrix $M_\sigma$ is given by
 \begin{equation}
  (M_\sigma)_{ij} = \int_0^1 \sigma\left(x-\frac{i}{n}\right)\sigma\left(x-\frac{j}{n}\right)dx.
 \end{equation}
 Further, the matrix $M$ is positive definite.

\end{lemma}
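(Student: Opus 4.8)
The plan is a direct expansion of the squared loss followed by a Gram-matrix argument. Expanding $(u-f_{NN})^2$ and using linearity of the integral together with $f_{NN}(x,\vec a)=\sum_{i=1}^n a_i\sigma(x-i/n)$ gives
\[
L(\vec a)=\tfrac12\|u\|_{L^2(0,1)}^2-\sum_{i=1}^n a_i(b_{u,\sigma})_i+\tfrac12\sum_{i,j=1}^n a_ia_j(M_\sigma)_{ij},
\]
where $(b_{u,\sigma})_i$ and $(M_\sigma)_{ij}$ are exactly the quantities defined in the statement. Since $\tfrac12\|u\|_{L^2(0,1)}^2$ is a constant independent of $\vec a$ (and the overall factor $\tfrac12$ is harmless for the minimization), this is the asserted form $L(\vec a)=\vec a^TM_\sigma\vec a-b_{u,\sigma}^T\vec a$ up to that constant. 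This step is pure bookkeeping.

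The content of the lemma is positive definiteness of $M_\sigma$. The key observation is that $M_\sigma$ is the Gram matrix, in $L^2(0,1)$, of the translated activation functions $\varphi_i:=\sigma(\cdot-i/n)$, $i=1,\dots,n$; indeed $\vec a^TM_\sigma\vec a=\left\|\sum_{i=1}^n a_i\varphi_i\right\|_{L^2(0,1)}^2\ge 0$. Hence $M_\sigma$ is automatically symmetric positive semidefinite, and $\vec a^TM_\sigma\vec a=0$ if and only if $\sum_i a_i\varphi_i=0$ almost everywhere on $[0,1]$. So the whole claim reduces to showing that $\varphi_1,\dots,\varphi_n$ are linearly independent in $L^2(0,1)$.

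For linear independence I would exploit the piecewise-linear structure. Each $\varphi_i$ is continuous and piecewise linear on $[0,1]$ with a single kink at the node $i/n$, so for $g:=\sum_i a_i\varphi_i$ the distributional second derivative on $(0,1)$ equals $\sum_i a_i\delta_{i/n}$ (restricted to $(0,1)$); as the nodes are distinct, $g\equiv 0$ forces $a_i=0$ for every $i$ whose node lies in the open interval, and the remaining affine part must then vanish as well. An equivalent and perhaps more elementary formulation is to restrict $g$ to the mesh cells from right to left: on each successive cell exactly one new $\varphi_i$ contributes a nonzero slope, yielding a triangular linear system whose only solution is $\vec a=0$. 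The same argument applies verbatim to the Hat activation and to scaled Hats, since on $[0,1]$ their translates are again continuous piecewise linear with distinct breakpoints.

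I expect the linear-independence step to be the only real obstacle; positive semidefiniteness is free from the Gram-matrix structure. The one point requiring care is a node sitting at an endpoint of $[0,1]$, where a translated activation can degenerate (for the ReLU, the translate centered at $x=1$ is identically zero on $[0,1]$); this is why one should take the active nodes so that all $n$ translates are nontrivial and linearly independent on the domain, after which the kink/triangular argument goes through unchanged.
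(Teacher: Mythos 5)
Your proof is correct, and it takes a genuinely different route from the paper. The paper treats this lemma as standard finite-element material and does not expand the square explicitly; its actual route to positive definiteness, buried in the appendix, is concrete and quantitative: it writes down the tridiagonal Hat mass matrix $M_\Phi=\frac{h}{6}M$ explicitly, bounds its spectrum in $[\frac{h}{6},h]$ via the Gershgorin circle theorem, and then transfers positive definiteness to the ReLU mass matrix through the congruence $M_\Psi=C^{-1}M_\Phi C^{-T}$ with the explicit invertible change-of-basis matrix $C$. Your abstract Gram-matrix argument (positive semidefiniteness for free, definiteness $\Leftrightarrow$ linear independence of the translates, established by the kink/triangular-system argument) is cleaner and more general --- it works for any activation whose translates are independent, without computing anything --- but it only yields $\lambda_{\min}>0$, whereas the paper's computation produces the explicit eigenvalue bounds that Theorems \ref{relu-matrix-theorem} and \ref{hat-matrix-theorem} need anyway. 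Two small remarks. First, your flag about the endpoint node is a genuine catch: with the main text's indexing $\sigma(x-i/n)$, $i=1,\dots,n$, the ReLU translate at $i=n$ vanishes identically on $[0,1]$ and $M_\sigma$ would be singular; the appendix silently uses nodes $x_{i-1}=(i-1)/n$ instead, which is the convention under which the lemma is true. Second, your phrase ``single kink'' and ``distinct breakpoints'' is accurate for the ReLU but not for the Hat translates, which each have three kinks and share breakpoints with their neighbours; the triangular (leftmost-support) version of your argument, or nodal evaluation $\varphi_i(j/n)=\delta_{ij}$, is the one that actually closes the Hat case. The factor-of-$\tfrac12$ and additive-constant discrepancy you note is an imprecision in the paper's own statement, not in your expansion.
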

The eigenvalues of the mass matrix $M_\sigma$ play a key role in explaining the spectral bias. Theorems \ref{relu-matrix-theorem} and \ref{hat-matrix-theorem} describe the eigenstructure of $M_\sigma$ for the ReLU activation function and for the Hat activation function, respectively. The detailed proof of Theorem \ref{relu-matrix-theorem} can be found in Appendix \ref{Proofrelu-matrix-theorem} and \ref{spectral}. Theorem \ref{hat-matrix-theorem} can be obtained in a straightforward manner from Theorem \ref{MPhieigen}  in Appendix \ref{spectral}.

\begin{theorem}\label{relu-matrix-theorem}
 Let $\sigma$ be the ReLU activation function and let $\lambda_1\leq\cdots\leq\lambda_n$ denote the eigenvalues of $M_\sigma$. Then we have
 \begin{equation}
  \frac{\lambda_n}{\lambda_j} \sim \frac{n^4}{j^4}.
 \end{equation}
\end{theorem}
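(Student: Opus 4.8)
The plan is to exploit the finite element connection between the shifted ReLUs $g_i(x):=\sigma(x-i/n)$ and the standard nodal (hat) basis $\phi_k$ of the continuous piecewise linear finite element space on the uniform mesh of size $h=1/n$ on $[0,1]$. The key identity is that each hat function is, up to the factor $n$, the second difference of three consecutive shifted ReLUs, $\phi_k=n\,(g_{k-1}-2g_k+g_{k+1})$ (with the obvious corrections at the two ends). Equivalently, each $g_i$ is a discrete double antiderivative of the $\phi_k$, so that the matrix $W$ representing the $g_i$ in the nodal basis is $W=n^{-1}R^2$ up to a harmless one-index shift, where $R$ is the triangular all-ones (partial summation) matrix, $R^{-1}=I-S$ with $S$ the shift. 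By the form of $M_\sigma$ recorded in Lemma~\ref{loss-function-lemma}, this gives $M_\sigma=W M_\phi W^T$, with $M_\phi$ the usual hat-function mass matrix.

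Next I would eliminate $M_\phi$. Since $M_\phi=\tfrac h6\,\mathrm{tridiag}(1,4,1)$ up to standard boundary modifications, its eigenvalues $\tfrac h6(4+2\cos\theta)\in[\tfrac h3,h]$ (or a one-line Gershgorin estimate) show that $M_\phi$ is spectrally equivalent to $hI$ with absolute constants. Hence $M_\sigma$ is spectrally equivalent, with absolute constants, to $h\,WW^T=n^{-3}R^2(R^2)^T$, and the whole problem collapses to computing the singular values of $R^2$.

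For this I would invert: $\bigl(R^2(R^2)^T\bigr)^{-1}=(I-S^T)^2(I-S)^2$, a pentadiagonal matrix, and compare it with $L^2$, where $L:=(I-S^T)(I-S)$ is the one-dimensional discrete Laplacian $\mathrm{tridiag}(-1,2,-1)$ modified at one endpoint. The two differ by a matrix of rank two, because $(I-S)(I-S^T)-(I-S^T)(I-S)=SS^T-S^TS=\mathrm{diag}(1,0,\dots,0,-1)$, and moreover this rank-two correction splits as a rank-one positive plus a rank-one negative update, so by Cauchy interlacing it moves every eigenvalue by at most one index position. Now $L$ is explicitly diagonalizable with eigenvalues $4\sin^2\!\bigl(\tfrac{(2k-1)\pi}{2(2n+1)}\bigr)$, which are comparable to $(k/n)^2$ for every $k=1,\dots,n$; hence the $k$-th smallest eigenvalue of $(I-S^T)^2(I-S)^2$ is comparable to $(k/n)^4$ (the boundary case $k=1$ also follows directly from $\|R^2\|\le\|R\|^2\sim n^2$). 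Inverting, the $k$-th largest eigenvalue of $R^2(R^2)^T$ is comparable to $(n/k)^4$, so the $k$-th largest eigenvalue of $M_\sigma$ is comparable to $n/k^4$. In particular $\lambda_n\sim n$, the smallest eigenvalue is $\sim n^{-3}$, and the ratio of the top eigenvalue to the $j$-th one grows like the fourth power of the frequency index, which is the asserted estimate $\lambda_n/\lambda_j\sim n^4/j^4$.

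The hard part is the third step. Because $R^2$ is a non-normal triangular Toeplitz matrix, its singular values are not the squares of the eigenvalues of $R$, and one cannot simply pass to Fourier symbols on a finite matrix. The rank-two comparison of $(I-S^T)^2(I-S)^2$ with the diagonalizable matrix $L^2$ is precisely what makes this rigorous and is the technical heart of the argument; an equivalent but more hands-on route is to bracket $\sigma_k(R^2)$ from above and below by inserting $k$-dimensional subspaces spanned by discrete sine/cosine modes into the Courant--Fischer min--max formula. One also has to carry along the boundary terms in $M_\phi$ and the off-by-one coming from the shifted ReLU $\sigma(x-1)$ whose support lies outside $[0,1]$, but these perturb only finitely many eigenvalues and so do not affect any of the $\sim$ asymptotics.
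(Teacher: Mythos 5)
Your proposal is correct and follows essentially the same route as the paper's appendix: change of basis from the shifted ReLUs to the nodal hat functions via the second-difference matrix (the paper's $C$, your $W=n^{-1}R^2$ with $R^{-1}=I-S$), spectral equivalence of the hat mass matrix with $hI$ by Gershgorin plus Courant--Fischer, the decomposition of $CC^T$ as the square of the mixed-boundary discrete Laplacian plus a rank-one-positive-minus-rank-one-negative perturbation, Weyl interlacing to shift each index by at most one, and the explicit eigenvalues $4\sin^2\bigl(\tfrac{(2k-1)\pi}{2(2n+1)}\bigr)\sim (k/n)^2$. The only caveat is one you share with the paper itself: the computation yields that the $k$-th \emph{largest} eigenvalue of $M_\sigma$ is comparable to $h(n/k)^4$, so the displayed ratio $n^4/j^4$ is literally correct only when $\lambda_j$ is read as the $j$-th largest eigenvalue; under the stated increasing ordering $\lambda_1\le\cdots\le\lambda_n$ the ratio $\lambda_n/\lambda_j$ is $\sim (n+1-j)^4$, which agrees with $n^4/j^4$ only at the extremes $j=1$ and $j=n$.
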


\begin{theorem}
\label{hat-matrix-theorem}
 Let $\sigma(x) = \sigma_H(nx)$ be a Hat activation function scaled to match a finite element discretization of $[0,1]$ and let $\lambda_1\leq\cdots\leq\lambda_n$ denote the eigenvalues of $M_\sigma$. Then we have
 \begin{equation}
  \frac{\lambda_n}{\lambda_j} = O(1).
 \end{equation}
\end{theorem}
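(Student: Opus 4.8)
The plan is to recognize $M_\sigma$ as (essentially) the mass matrix of continuous piecewise-linear finite elements on a uniform mesh of $[0,1]$, for which uniform well-conditioning is classical, and then to read off the eigenvalue ratio; the general mechanism is already packaged in Theorem \ref{MPhieigen}, so the real task is to identify the relevant ingredients for the Hat function.

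First I would identify the basis: with the scaling $\sigma(x)=\sigma_H(nx)$, the functions $\phi_i(x):=\sigma(x-i/n)=\sigma_H(nx-i)$ are, up to relabelling and truncation at the endpoints, exactly the nodal ``tent'' basis functions for the uniform partition of $[0,1]$ into subintervals of length $h=1/n$: $\phi_i$ is supported on $[i/n,(i+2)/n]$, rises linearly from $0$ to $1$, then falls linearly back to $0$, peaking at the node $(i+1)/n$. It then follows immediately that $(M_\sigma)_{ij}=\int_0^1\phi_i\phi_j\,dx$ vanishes whenever $|i-j|\ge 2$, so $M_\sigma$ is tridiagonal, and an elementary reference-element computation gives, for interior indices, $(M_\sigma)_{ii}=\tfrac{2h}{3}$ and $(M_\sigma)_{i,i\pm1}=\tfrac{h}{6}$, with only smaller diagonal entries for the one or two truncated basis functions at each endpoint.

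Next I would bound the spectrum from both sides. The upper bound $\lambda_n\le h$ is immediate from Gershgorin's theorem, since every row of $M_\sigma$ has absolute row sum at most $\tfrac{2h}{3}+2\cdot\tfrac h6=h$. For the lower bound I would use that $M_\sigma$ is the Gram matrix of $\{\phi_i\}$ in $L^2(0,1)$, so $\lambda_1=\min_{\|\vec c\|=1}\|\sum_i c_i\phi_i\|_{L^2(0,1)}^2$; since each mesh subinterval is covered by exactly two of the $\phi_i$, whose local $2\times2$ Gram matrix is $\tfrac h6\left(\begin{smallmatrix}2&1\\1&2\end{smallmatrix}\right)$ with smallest eigenvalue $\tfrac h6$, summing over subintervals yields $\lambda_1\ge c\,h$ for an absolute constant $c>0$. (Equivalently, one may cite that the eigenvalues of $\mathrm{tridiag}(1,4,1)$ of any size lie in $[2,6]$, or simply note that the interior part is strictly diagonally dominant with slack $\tfrac{2h}{3}-2\cdot\tfrac h6=\tfrac h3$, treating the boundary rows as a bounded-rank perturbation.) Combining the two bounds gives $\lambda_n/\lambda_j\le\lambda_n/\lambda_1\le h/(ch)=O(1)$ uniformly in $n$ and in $j$, which is the claim.

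Phrased through Theorem \ref{MPhieigen}, the point is that for the Hat activation scaled to the mesh the map relating the activation functions to the finite-element nodal basis is essentially the identity --- rather than the ill-conditioned second-difference operator that appears for ReLU --- so $M_\sigma$ inherits the $O(1)$ condition number of the nodal mass matrix directly; equivalently, the relevant Toeplitz symbol $\tfrac h3(2+\cos\theta)$ is bounded above and bounded below away from $0$ uniformly in $\theta$, whereas the ReLU symbol decays like $\theta^{-4}$ and drives the $\lambda_n/\lambda_j\sim n^4/j^4$ of Theorem \ref{relu-matrix-theorem}. The one genuine subtlety I anticipate is the rigorous treatment of the handful of truncated boundary basis functions, i.e.\ verifying that this $O(1)$-rank modification of the clean Toeplitz matrix cannot spoil the uniform lower bound on $\lambda_1$; everything else is the routine finite-element mass-matrix computation.
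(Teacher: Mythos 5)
Your proposal is correct and follows essentially the same route as the paper: identify $M_\sigma$ with the standard piecewise-linear finite-element mass matrix $\tfrac{h}{6}\,\mathrm{tridiag}(1,4,1)$ (with a truncated corner entry) and bound its spectrum uniformly between multiples of $h$, which the paper does via the Gershgorin circle theorem (Theorem \ref{MPhieigen}) exactly as in your parenthetical diagonal-dominance remark. Your element-wise local Gram-matrix argument for the lower bound is a standard equivalent of that step, so there is no substantive difference.
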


Now consider training the weights $\vec{a}$ using gradient descent for the objective $L$. To guarantee convergence of gradient descent the learning rate $\eta$ will be taken as $\eta \leq \lambda_n^{-1}$. Theorem \ref{gradient-descent-proposition}, which can be obtained by writing out the details of the gradient descent method, describes the evolution of spectrum of the error.
\begin{theorem}\label{gradient-descent-proposition}
 Consider training the parameters $\vec{a}$ using gradient descent on the loss function $L$ with step-size $s = \lambda_n^{-1}$. Here $\lambda_1\leq \cdots \leq \lambda_n$ are the eigenvalues of the matrix $M_\sigma$. Let $\vec{\psi}^1,...,\vec{\psi}^n$ denote the corresponding eigenvectors and define functions $\phi_j$ by
 \begin{equation}
  \phi_j(x) = \sum_{i=1}^n \psi^j_i\sigma\left(x - \frac{i}{n}\right).
 \end{equation}
 Let $\vec{a}_1,...,\vec{a}_k$ denote the iterates generated by gradient descent. Consider the expansion
 \begin{equation}
  u(x) -f_{NN}(x,\vec{a}_\ell) = \sum_{i=1}^n \alpha_{i}^\ell\phi_i(x).
 \end{equation}
 Then the coefficients $\alpha_{i}^\ell$ satisfy
 \begin{equation}
  \alpha_{i}^\ell = \alpha_{i}^\ell\left(1 - \frac{\lambda_i}{\lambda_n}\right)^\ell.
 \end{equation}
\end{theorem}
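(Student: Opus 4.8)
The plan is to reduce the statement to the textbook analysis of gradient descent on a positive-definite quadratic and then transport the resulting contraction into function space through the linear map $\vec a\mapsto f_{NN}(\cdot,\vec a)=\sum_{i=1}^n a_i\sigma(\cdot-\tfrac in)$. First I would make the loss explicit: by Lemma~\ref{loss-function-lemma}, up to an additive ($\vec a$-independent) constant, $L$ is a quadratic form in $\vec a$ with Hessian a positive multiple of $M_\sigma$ and with gradient $\nabla L(\vec a)=M_\sigma\vec a-b_{u,\sigma}$ (after absorbing a harmless scalar into the step size). Since $M_\sigma$ is positive definite, $L$ has the unique minimizer $\vec a^{*}=M_\sigma^{-1}b_{u,\sigma}$, and the gradient descent update with step size $s$ reads $\vec a_{\ell+1}=\vec a_\ell-s(M_\sigma\vec a_\ell-b_{u,\sigma})=\vec a_\ell-sM_\sigma(\vec a_\ell-\vec a^{*})$. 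Writing $\vec e_\ell:=\vec a_\ell-\vec a^{*}$ for the parameter error, this is the linear recursion $\vec e_{\ell+1}=(I-sM_\sigma)\vec e_\ell$, hence $\vec e_\ell=(I-sM_\sigma)^\ell\vec e_0$.

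Next I would diagonalize and translate. Choosing the eigenvectors $\vec\psi^1,\dots,\vec\psi^n$ of the symmetric matrix $M_\sigma$ to be orthonormal and expanding $\vec e_0=\sum_j c_j\vec\psi^j$, we get $\vec e_\ell=\sum_j c_j(1-s\lambda_j)^\ell\vec\psi^j$, and with the stated choice $s=\lambda_n^{-1}$ the contraction factors are $1-\lambda_j/\lambda_n\in[0,1)$ (which also re-derives the learning-rate bound $\eta\le\lambda_n^{-1}$ needed for convergence). By linearity of $\vec a\mapsto f_{NN}(\cdot,\vec a)$ and the definition $\phi_j(x)=\sum_i\psi^j_i\sigma(x-\tfrac in)$,
\begin{equation}
 f_{NN}(\cdot,\vec a^{*})-f_{NN}(\cdot,\vec a_\ell)=-\sum_{i=1}^n(e_\ell)_i\,\sigma\!\left(\cdot-\tfrac in\right)=-\sum_{j=1}^n c_j\left(1-\tfrac{\lambda_j}{\lambda_n}\right)^{\ell}\phi_j .
\end{equation}
Finally, $u-f_{NN}(\cdot,\vec a^{*})$ is exactly the $L^2$-projection residual of $u$ onto $V_n:=\operatorname{span}\{\sigma(\cdot-\tfrac in)\}$ (this is what $\vec a^{*}=M_\sigma^{-1}b_{u,\sigma}$ encodes), so it is independent of $\ell$ and $L^2$-orthogonal to every $\phi_j$; in the simplified setting where $u\in V_n$ this residual vanishes and $u-f_{NN}(\cdot,\vec a_\ell)=\sum_j\alpha_j^\ell\phi_j$ with $\alpha_j^\ell=-c_j(1-\lambda_j/\lambda_n)^\ell$, i.e. $\alpha_j^{\ell}=(1-\lambda_j/\lambda_n)^\ell\alpha_j^{0}$, which is the asserted identity.

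There is no deep obstacle here — as the paper says, the result comes from writing out gradient descent — but two points need care. The first is the bookkeeping that identifies the function-space coefficients $\alpha_j^\ell$ with the eigenvector coordinates $c_j^\ell$ of the parameter error: this uses both $\phi_j=\sum_i\psi^j_i\sigma(\cdot-\tfrac in)$ and the orthonormality of the $\vec\psi^j$, equivalently the identity $\langle\phi_j,\phi_k\rangle_{L^2}=(\vec\psi^j)^TM_\sigma\vec\psi^k=\lambda_j\delta_{jk}$, which makes the expansion in the $\phi_j$ well defined and exhibits $\{\phi_j\}$ as a genuine orthogonal ``frequency'' basis of $V_n$. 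The second is that, for general $u$, $u-f_{NN}(\cdot,\vec a_\ell)$ also carries the fixed term $u-P_{V_n}u$, which is stationary and contributes nothing to the decay; the statement implicitly discards it, and everything else is the routine computation of gradient descent on a positive-definite quadratic.
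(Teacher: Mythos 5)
Your proposal is correct and is essentially the argument the paper intends (the paper only asserts that the theorem follows by ``writing out the details of the gradient descent method''): reduce to gradient descent on the positive-definite quadratic $L$, diagonalize $I-sM_\sigma$ in the eigenbasis of $M_\sigma$, and transport the contraction factors $1-\lambda_j/\lambda_n$ to the functions $\phi_j$ by linearity. Your two points of care are well taken --- in particular you correctly read the paper's displayed identity as the (evidently intended) $\alpha_i^\ell=\alpha_i^0\left(1-\lambda_i/\lambda_n\right)^\ell$ and correctly note that the stationary residual $u-P_{V_n}u$ must be discarded for the expansion in the $\phi_j$ to make sense.
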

We remark that the functions $\phi_j$ appearing in Theorem \ref{gradient-descent-proposition} are orthogonal. This follows since from the definition of $M_\sigma$, we see that
\begin{equation}
\langle \phi_j, \phi_k\rangle_{L^2([0,1])} = (\psi^j)^TM_\sigma (\psi^k) = \lambda_k (\psi^j)^T(\psi^k) = 0,
\end{equation}
where the final equality follows since $\psi^j$ and $\psi^k$ are eigenfunctions of the symmetric matrix $M_\sigma$.

\subsection{Eigenfunction Analysis for ReLU Networks}
Theorem \ref{gradient-descent-proposition} shows that the components of the error corresponding to the large eigenfunction of $M_\sigma$ decay fastest. The following Theorem \ref{relu-eigenfunction-thm}, which can be obtained from Theorem \ref{xi1xn} in Appendix \ref{eigenMPsi}, 
gives the structure of these eigenfunction when $\sigma$ is the ReLU activation function. 
\begin{theorem}\label{relu-eigenfunction-thm}
 Consider the matrix $M_\sigma$ for $\sigma$ to ReLU activation function and let $\phi_j(x)$ be as in Theorem \ref{gradient-descent-proposition}. Then we have
 \begin{equation}
  \frac{\int_0^1 \phi'_j(x)^2 dx}{\int_0^1 \phi_j(x)^2 dx} \sim \frac{n^4}{j^4}.
 \end{equation}
\end{theorem}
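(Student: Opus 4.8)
The plan is to turn the quotient in the statement into a ratio of two quadratic forms in the eigenvector $\vec{\psi}^j$ of $M_\sigma$, and then read off the asymptotics from the structure of those eigenvectors (Theorem~\ref{xi1xn}) together with Theorem~\ref{relu-matrix-theorem}. First I would dispense with the denominator. Since $\phi_j(x) = \sum_i \psi^j_i\sigma(x-i/n)$, the definition of the mass matrix in Lemma~\ref{loss-function-lemma} gives, for normalized eigenvectors $\|\vec{\psi}^j\|_2 = 1$,
\begin{equation}
\int_0^1 \phi_j(x)^2\,dx = (\vec{\psi}^j)^T M_\sigma \vec{\psi}^j = \lambda_j .
\end{equation}
So it suffices to show $\int_0^1 \phi_j'(x)^2\,dx \sim \lambda_j\, n^4/j^4$; by Theorem~\ref{relu-matrix-theorem} ($\lambda_j \sim \lambda_n j^4/n^4$) this is equivalent to the clean statement that the Dirichlet energy $\int_0^1 \phi_j'(x)^2\,dx$ is comparable to $\lambda_n$ for \emph{every} $j$, i.e.\ that the energy of the eigenfunctions is essentially flat across the spectrum.

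Next I would compute the numerator. Because $\sigma$ is ReLU, $\sigma'$ is the Heaviside function, so $\phi_j'(x) = \sum_i \psi^j_i\,\mathbf{1}_{\{x > i/n\}}$ almost everywhere and
\begin{equation}
\int_0^1 \phi_j'(x)^2\,dx = (\vec{\psi}^j)^T S \vec{\psi}^j , \qquad S_{ik} = \int_0^1 \mathbf{1}_{\{x>i/n\}}\mathbf{1}_{\{x>k/n\}}\,dx = \Big(1 - \tfrac{\max(i,k)}{n}\Big)_+ ,
\end{equation}
i.e.\ the Gram (stiffness) matrix of the step functions $\sigma'(\cdot - i/n)$. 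The quotient in the theorem is therefore exactly $(\vec{\psi}^j)^T S\,\vec{\psi}^j\big/\big[(\vec{\psi}^j)^T M_\sigma \vec{\psi}^j\big]$. The structural input is that differentiation together with the discrete second-difference change of basis intertwines the shifted-ReLU frame with the nodal hat basis: if $B$ denotes that second-difference matrix, then $M_\sigma = B^{-T}M_{\mathrm{hat}}B^{-1}$ and $S = B^{-T}S_{\mathrm{hat}}B^{-1}$, with $M_{\mathrm{hat}}$ and $S_{\mathrm{hat}}$ the (well conditioned, tridiagonal) mass and stiffness matrices of the hat basis. In one dimension these are simultaneously diagonalized by the discrete sine modes, which reduces both quadratic forms to scalars once the $\vec{\psi}^j$ are identified.

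Finally I would invoke Theorem~\ref{xi1xn}, which gives the explicit form of the eigenvectors $\vec{\psi}^j$ (the discrete sine modes in the ReLU coordinates, equivalently $\phi_j$ is the piecewise-linear interpolant of a sinusoid after the change of basis). Plugging this into the two quadratic forms collapses the quotient to the ratio of an eigenvalue of $S$ to the corresponding eigenvalue of $M_\sigma$ on a single sine mode; since $S$ carries one fewer inverse power of the discrete Laplacian than $M_\sigma$, this ratio scales like $\lambda_n/\lambda_j$, and Theorem~\ref{relu-matrix-theorem} then delivers the stated $n^4/j^4$ asymptotic.

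I expect the main obstacle to be the boundary terms, which is precisely where the appendix work (Appendices~\ref{spectral} and~\ref{eigenMPsi}) is needed: on $[0,1]$ the span of $\{\sigma(\cdot - i/n)\}$ is not literally the span of the interior hat functions---it has an extra degree of freedom at the right endpoint---so the identities $M_\sigma = B^{-T}M_{\mathrm{hat}}B^{-1}$ and $S = B^{-T}S_{\mathrm{hat}}B^{-1}$ hold only up to low-rank corrections, and one must check these corrections do not perturb the asymptotic ratio. Moreover, pinning the exponent exactly (rather than just the order of magnitude of $\lambda_n/\lambda_j$) requires the precise eigenvector description of Theorem~\ref{xi1xn} and not merely the eigenvalue bound; absent that, the cleanest substitute would be a direct identity showing that on each sine mode the ratio of the $S$-eigenvalue to the $M_\sigma$-eigenvalue telescopes against the mode index.
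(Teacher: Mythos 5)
Your reduction of the denominator is fine ($\int_0^1\phi_j^2 = (\vec\psi^j)^TM_\sigma\vec\psi^j=\lambda_j$ for a normalized eigenvector), but the route you take for the numerator cannot produce the claimed asymptotics, and this points to the real content of the paper's argument. If the numerator really were the first-derivative Dirichlet energy, then your Gram matrix $S$ of the Heaviside functions would be exactly $C^{-1}S_\Phi C^{-T}$ with $S_\Phi$ the hat-basis stiffness matrix, and the generalized eigenvalues of $(S_\Phi,M_\Phi)$ are bounded by $O(n^2)$ --- this is just the inverse inequality $\|v_h'\|_{L^2}\le Cn\|v_h\|_{L^2}$ for continuous piecewise linear functions on a mesh of size $1/n$. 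So $\int_0^1\phi_j'(x)^2\,dx\big/\int_0^1\phi_j(x)^2\,dx$ can never reach $n^4$ for the oscillatory modes; carried out honestly, your computation would yield at most an $n^2/j^2$-type scaling and would contradict, not prove, the stated bound. What the appendix actually proves (Theorem \ref{xi1xn}) replaces $\phi_j'$ by the \emph{second-order finite difference} $\mathcal{D}_h v_h^k$: the identity $(C^Tv)_i=v_{i-2}-2v_{i-1}+v_i$ shows that $(CC^Tv,v)=h^4\sum_i|\mathcal{D}_hv_h^k(x_i)|^2$ is a discrete \emph{biharmonic} form, and it is this fourth-order operator that supplies the $n^4$. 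So the missing idea is not a boundary correction but the identification of the correct (second-difference) energy; the main-text statement's $\phi_j'$ has to be read in that sense.

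Your final step also misattributes the content of Theorem \ref{xi1xn}: it does not give explicit sine-mode eigenvectors of $M_\Psi$, and no such explicit diagonalization is available --- the explicit sine eigenvectors in the appendix are only for the auxiliary matrix $A$, and $M_\Phi$ and $CC^T$ are \emph{not} simultaneously diagonalizable because of their differing boundary rows. The paper avoids needing eigenvectors altogether: it sandwiches the spectrum of $M_\Phi$ in $[h/6,h]$ by Gershgorin so that the mass matrix only contributes a bounded multiplicative constant $m_k\in[1/6,1]$, reduces everything to the eigenvalues $\nu_k$ of $CC^T$ via the Courant--Fischer min-max argument of Theorem \ref{theigenMr}, and controls those by Weyl's perturbation inequality applied to $CC^T=A^2+B$ with $B$ of rank two. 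That is how the low-rank boundary corrections you worry about are handled --- they shift eigenvalue indices by at most one rather than being shown to vanish. If you rebuild your argument around the second-difference quadratic form $(CC^Tv,v)$ with $v=C^{-T}\vec\psi^j$ and the eigenvalue relation $\lambda_{k,M_\Psi}=m_kh\nu_{n+1-k}^{-1}$, the ratio $m_k^{-1}n^4\nu_{n+1-k}$ drops out directly.
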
 
This theorem shows that for the ReLU activation function the eigenfunctions of $M_\sigma$ with large eigenvalue consists of smooth functions, while the eigenfunctions corresponding to small eigenvalues consist of highly oscillatory, high-frequency functions. This can be seen clearly in Figure \ref{relu-eigenfunction-figure}, where we plot both the large and small eigenfunctions $\phi$ corresponding to the ReLU activation function.

\begin{figure*}[!htbp]
	\centering
	\includegraphics[scale=0.2]{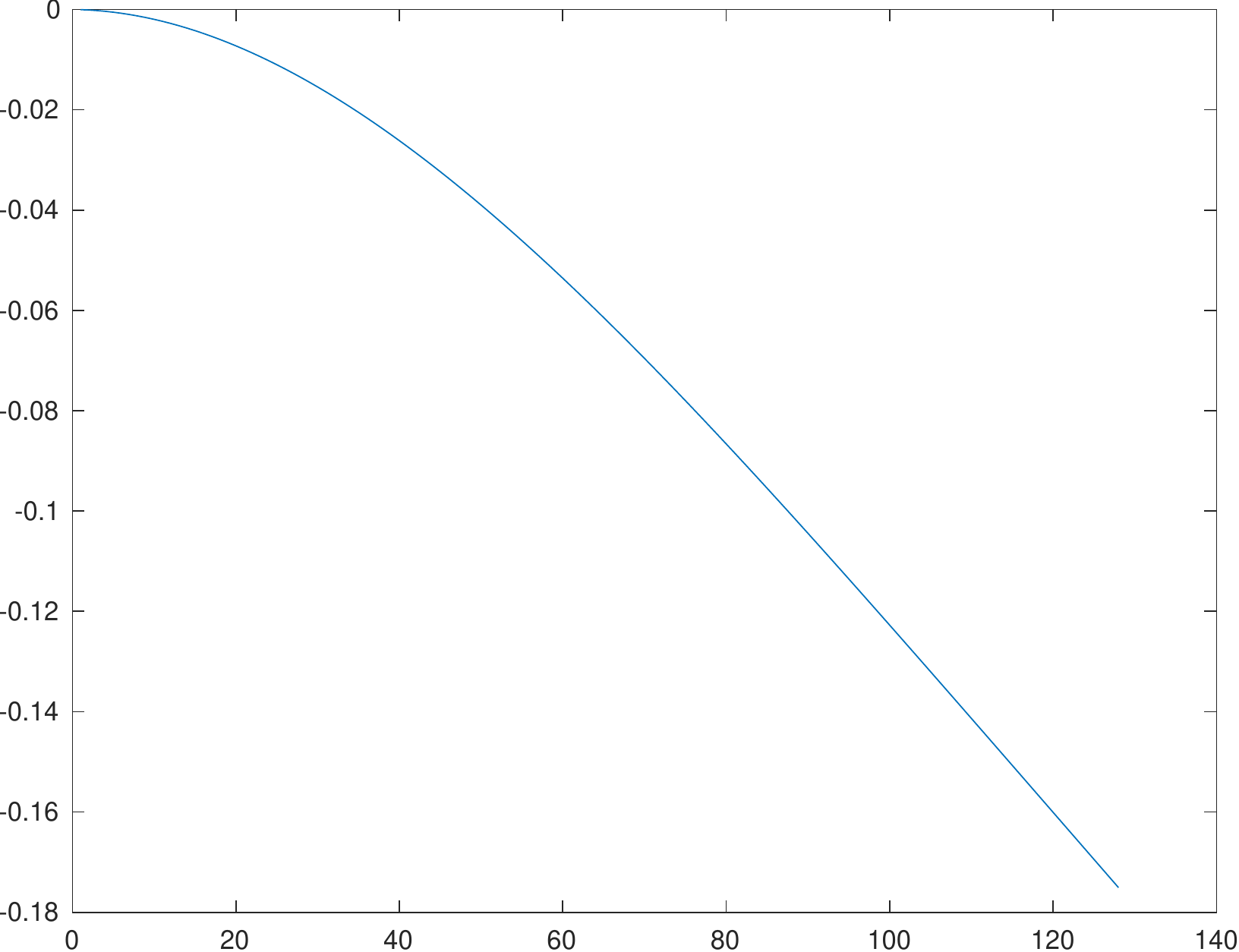}
	\includegraphics[scale=0.2]{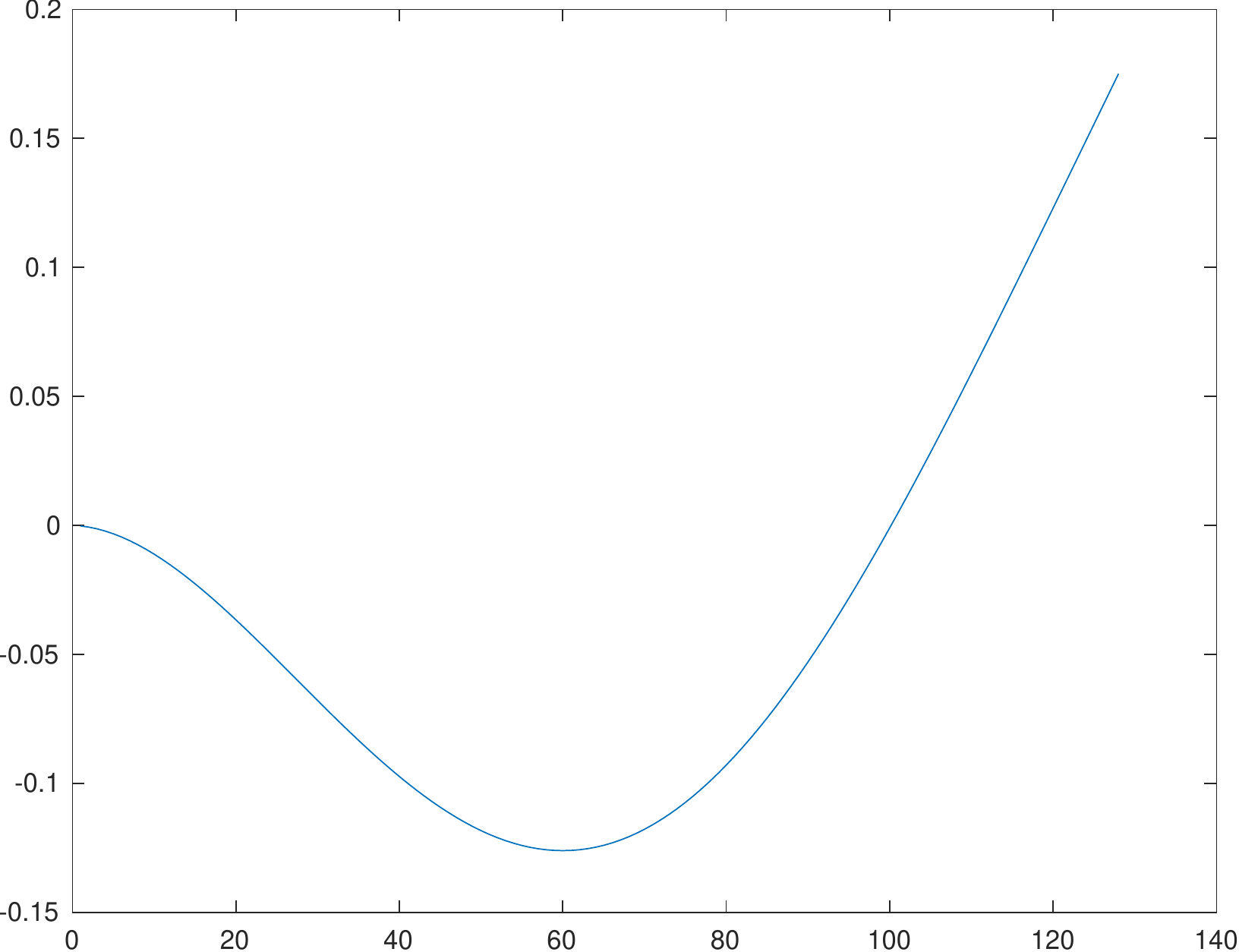}
	\includegraphics[scale=0.2]{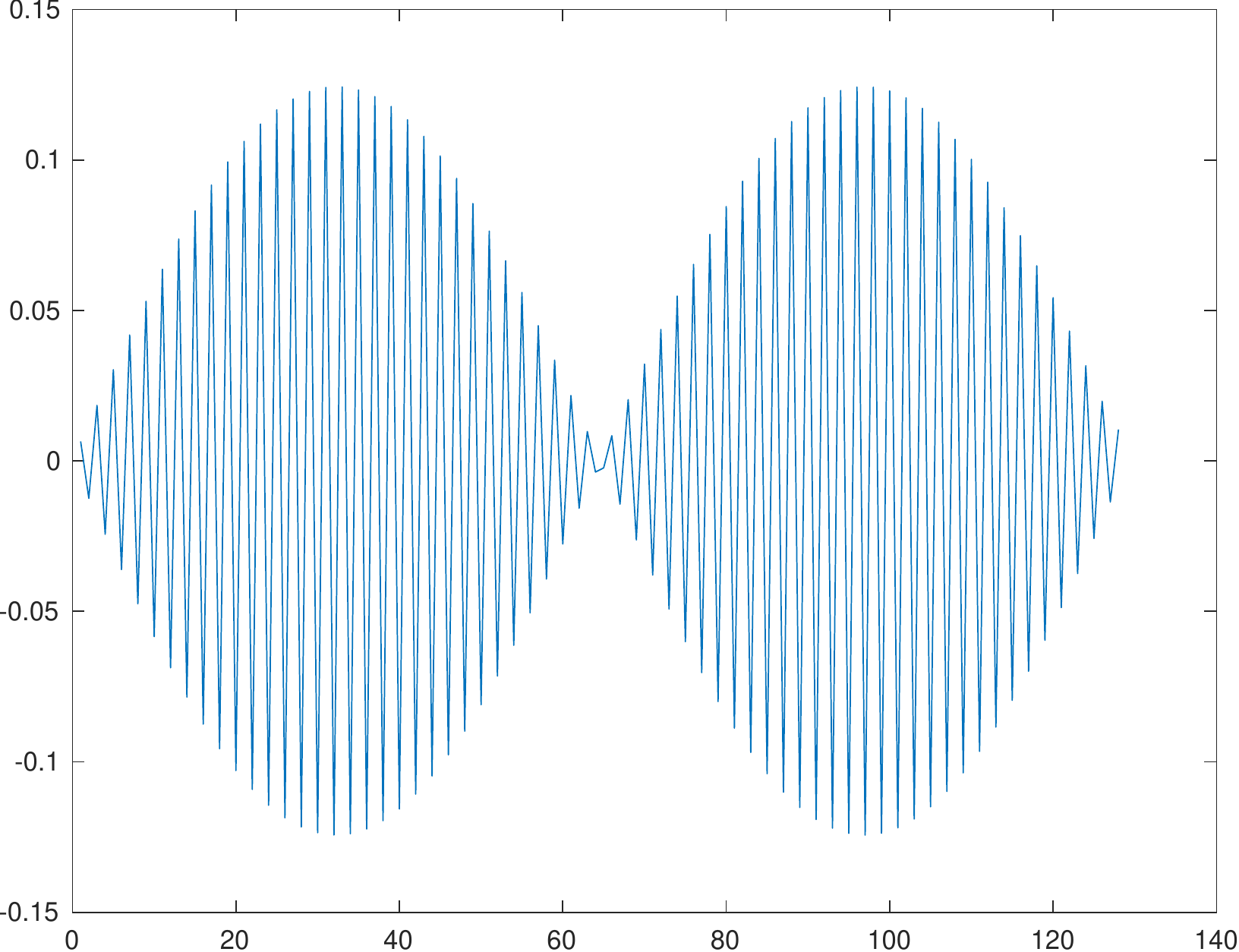}
	\includegraphics[scale=0.2]{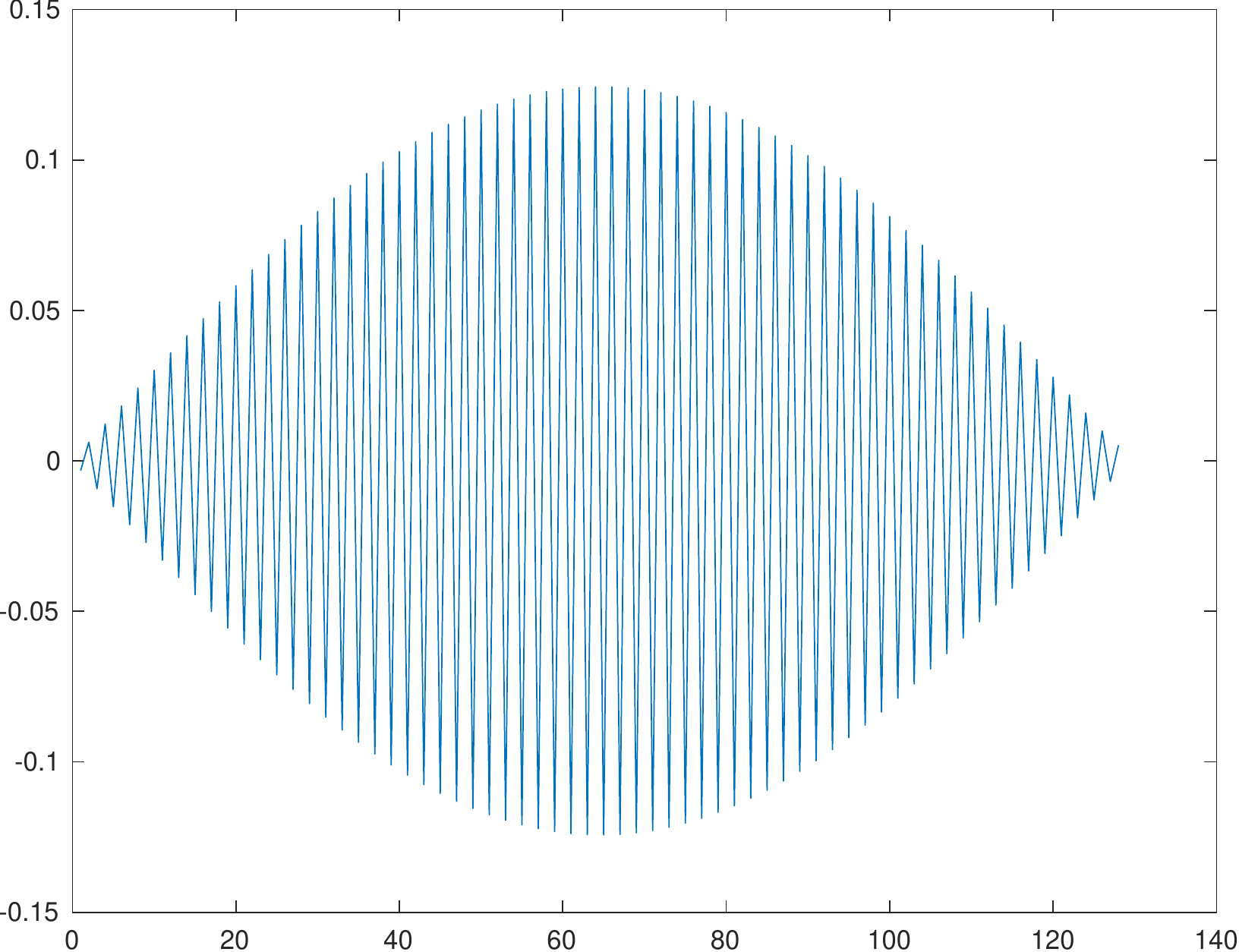}
	\caption{Eigenfunction of $M_\sigma$ when $\sigma$ is a ReLU. Eigenfunctions with the two largest eigenvalues on the left. Eigenfunction with the two smallest eigenvalues on the right.}
	\label{relu-eigenfunction-figure}
\end{figure*}

\subsection{Discussion}
Theorem \ref{gradient-descent-proposition} shows that the eigencomponents of the error decay faster for $\phi_i$ corresponding to large $i$. Moreover, the difference in the convergence rate of different eigencomponents depends upon the ratio $\frac{\lambda_n}{\lambda_1}$. Theorem \ref{relu-matrix-theorem} shows that if the activation function is taken to be the ReLU, then this ratio grows rapidly with the width and the components of the error corresponding to large eigenfunctions decay much faster than the components for small eigenfunction. Further, Theorem \ref{relu-eigenfunction-thm} shows that the large eigenfunction are smooth functions, while the small eigenfunctions are highly oscillatory functions. This means that the high frequency components of the solution are learned much more slowly than the low frequency components. We argue that this is basis behind the spectral bias observed in \cite{rahaman2019spectral}.

On the other hand, Theorem \ref{hat-matrix-theorem} shows that when $\sigma$ is a scaled Hat activation function, then the ratio $\frac{\lambda_n}{\lambda_1}$ remains bounded as the width $n$ increases. In this case the different eigencomponents of the error decay at roughly the same rate. For this reason, we expect that neural networks with Hat activation function will not exhibit the spectral bias observed in \cite{rahaman2019spectral}. 

Finally, we remark that other activation functions can be handled in a similar manner by analyzing the mass matrix $M_\sigma$. For instance, a sinusoidal activation function results in a singular matrix (since the set $\{\sin(x+t),~t\in \mathbb{R}\}$ lies in a two-dimensional linear space) and we have also experimentally observed that such networks also exhibit a spectral bias (see the appendix). Thus, new ideas are required to explain the recent success of sinusoidal representation networks \cite{sitzmann2020implicit}.

\section{Experiments for Shallow Neural Networks}\label{ExperimentsShallow}


In this section, we report the result of two numerical experiments in which we use shallow neural networks to fit target functions in $\mathbb R^d (d=1,2)$. In all experiments in this section, we minimize the mean square error (MSE) using the Adam optimizer \cite{kingma2014adam}. We consider three different activation functions
\begin{equation}\label{activation-functions-experiments}
\sigma(p)=\left\{
\begin{aligned}
		&{\rm tanh}(p) ;\\
		 &{\rm ReLU}(p) ;\\
		&{\rm Hat}(p).\\
	\end{aligned}\right.
\end{equation}
Based upon the results of the following two experiments, we observe that: 
\begin{conclusion}\label{conclusionA}
The spectral bias holds for both tanh and ${\rm ReLU}$ shallow neural networks, but it does not hold for Hat shallow neural networks.  
\end{conclusion}

\begin{experiment}
\normalfont
We use shallow neural networks with each of the activation functions in \eqref{activation-functions-experiments} to fit the following target function $u$ on $[-\pi, \pi]$:
\begin{equation}
u(x) = \sin(x) + \sin(3x)+\sin(5x).
\end{equation}
For each activation function our network has one hidden layer with size 8000.
The mean square error (MSE) loss is given by
\begin{equation}\label{MSEloss}
L(f,u) = \frac1N\sum_{i=1}^{N} (f(x_i)-u(x_i))^2,
\end{equation}
where $x_i$ is a uniform grid of size $N=201$ on $[-\pi,\pi]$. The three networks
are trained using ADAM with a learning rate of 0.0002. When using a tanh or ReLU activation function, all parameters are 
initialized following a Gaussian distribution with mean 0 and 
standard deviation 0.1, while the network with Hat activation function is 
initialized following a Gaussian distribution with mean 0 and 
standard deviation 0.8. 

Denote 
\begin{equation}\label{delta-definition-experiments}
\Delta_{f,u} (k) = \left| \hat{f}_k - \hat{u}_k 
\right|/\left|\hat{u}_k \right|,
\end{equation}
where $k$ represents the frequency, $\left| \cdot \right|$ represents the 
norm of a complex number and $\hat{\cdot}$ represents discrete Fourier 
transform. We plot $\Delta_{f,u} (1)$, $\Delta_{f,u} (3)$, 
and $\Delta_{f,u} (5)$ in Figure \ref{tanhvshat} for each of the three networks.
\begin{figure}[!htbp]
 \centering
 \includegraphics[scale=0.28]{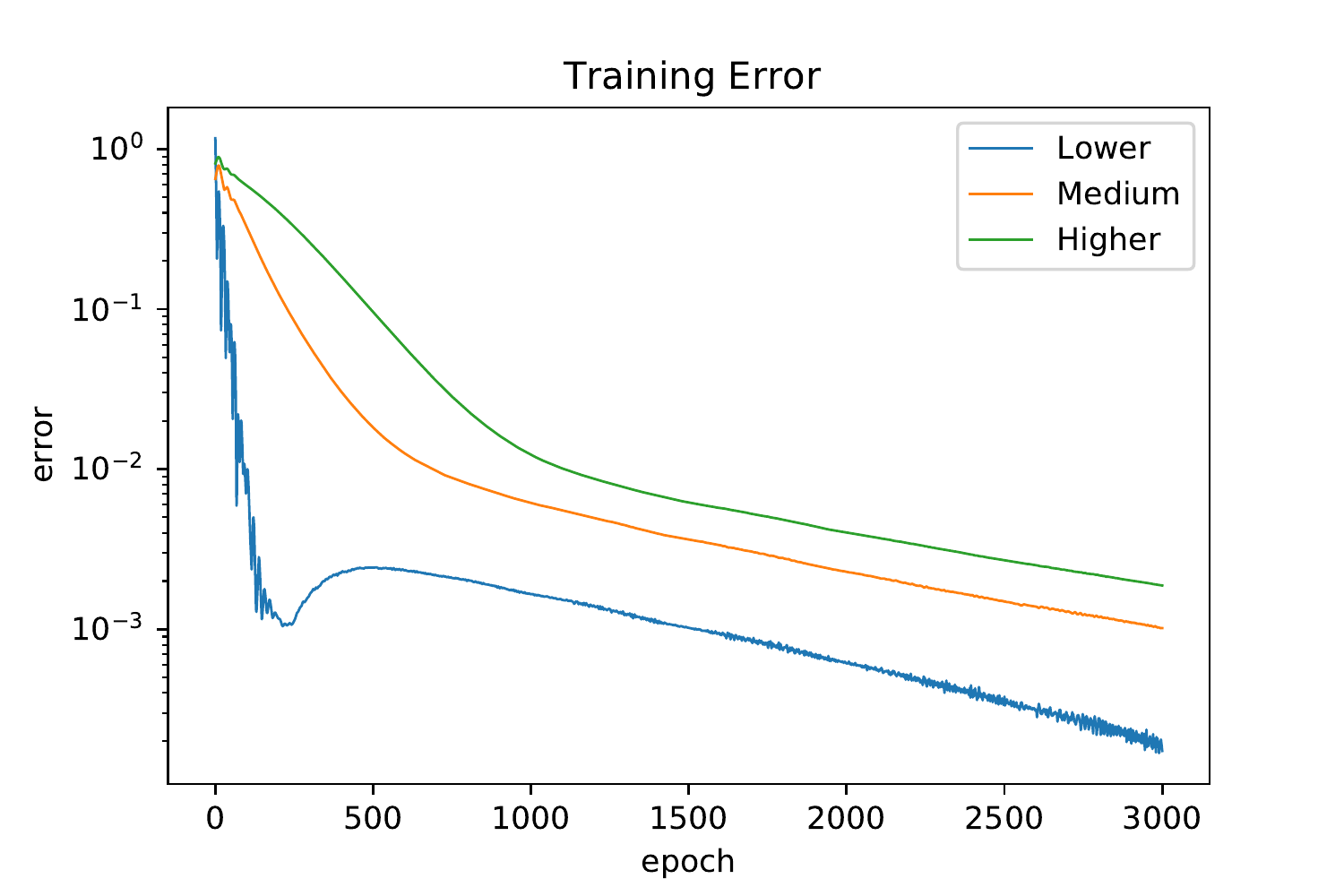}\hspace{-8pt}
  \includegraphics[scale=0.28]{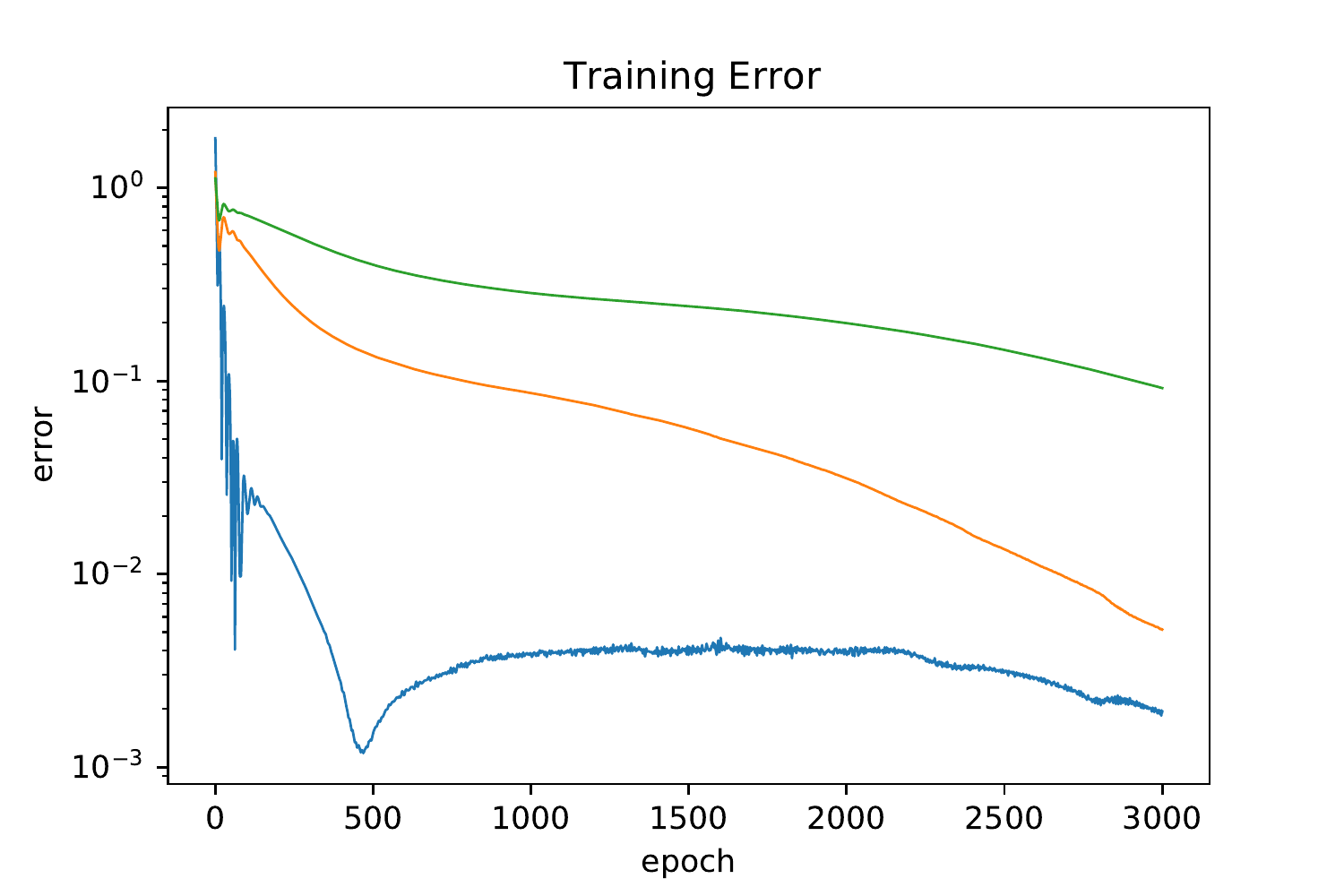}\hspace{-8pt}
 \includegraphics[scale=0.28]{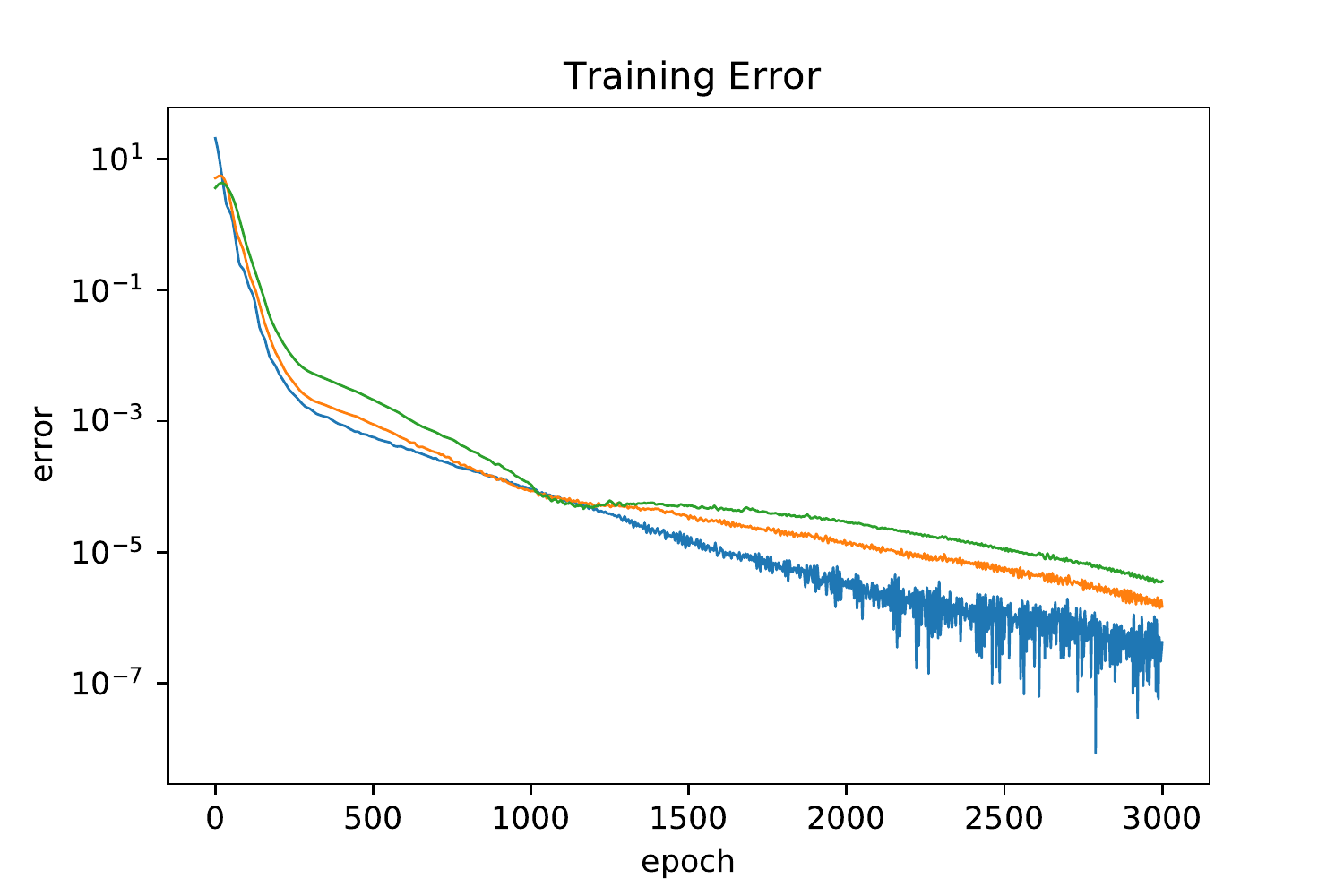}
 	\vspace{-3pt}
	\caption{$\sigma(p)=\text{tanh}(p)$ (left), $\sigma(p)=\text{ReLU}(p)$ (middle), $\sigma(p)=\text{ Hat}(p)$ (right)}
	\label{tanhvshat}
\end{figure}
From these results, we 
observe the spectral bias for tanh  \cite{xu2019frequency} and ReLU neural networks (see left and middle of Figure \ref{tanhvshat}), 
while there is no spectral bias for the Hat neural network as shown in right of Figure \ref{tanhvshat}. 
\end{experiment}

\begin{experiment}\label{2Ddirect-ex}
\normalfont
Next, we fit the target function $u$ on $[0,1]^2$ given by:
\begin{equation*}
u(\boldsymbol x) = \sin(2 \pi x_1)\sin(2 \pi x_2) + \sin(10 \pi x_1)\sin(10 \pi x_2).
\end{equation*} 
Here $\boldsymbol x=(x_1,x_2)\in [0,1]^2$.  For this experiment, we use shallow neural networks with the ReLU $\sigma(p)=\text{ReLU}(p)$ and with a scaled Hat function $\sigma(p)={\rm  Hat}(100p)$ as activation function.
Both models have width 10000 and all parameters are initialized using the default initialization in Pytorch. We use the ADAM optimizer for both experiments. The learning rate for the ReLU neural network is initialized to 0.00005 and and the learning rate for the Hat neural network is initialized to 0.00075. Both learning rates are decayed by half every 1000 epochs. 

\vspace{-8pt}
\begin{figure}[!htbp]
		\centering
\includegraphics[scale=0.30]{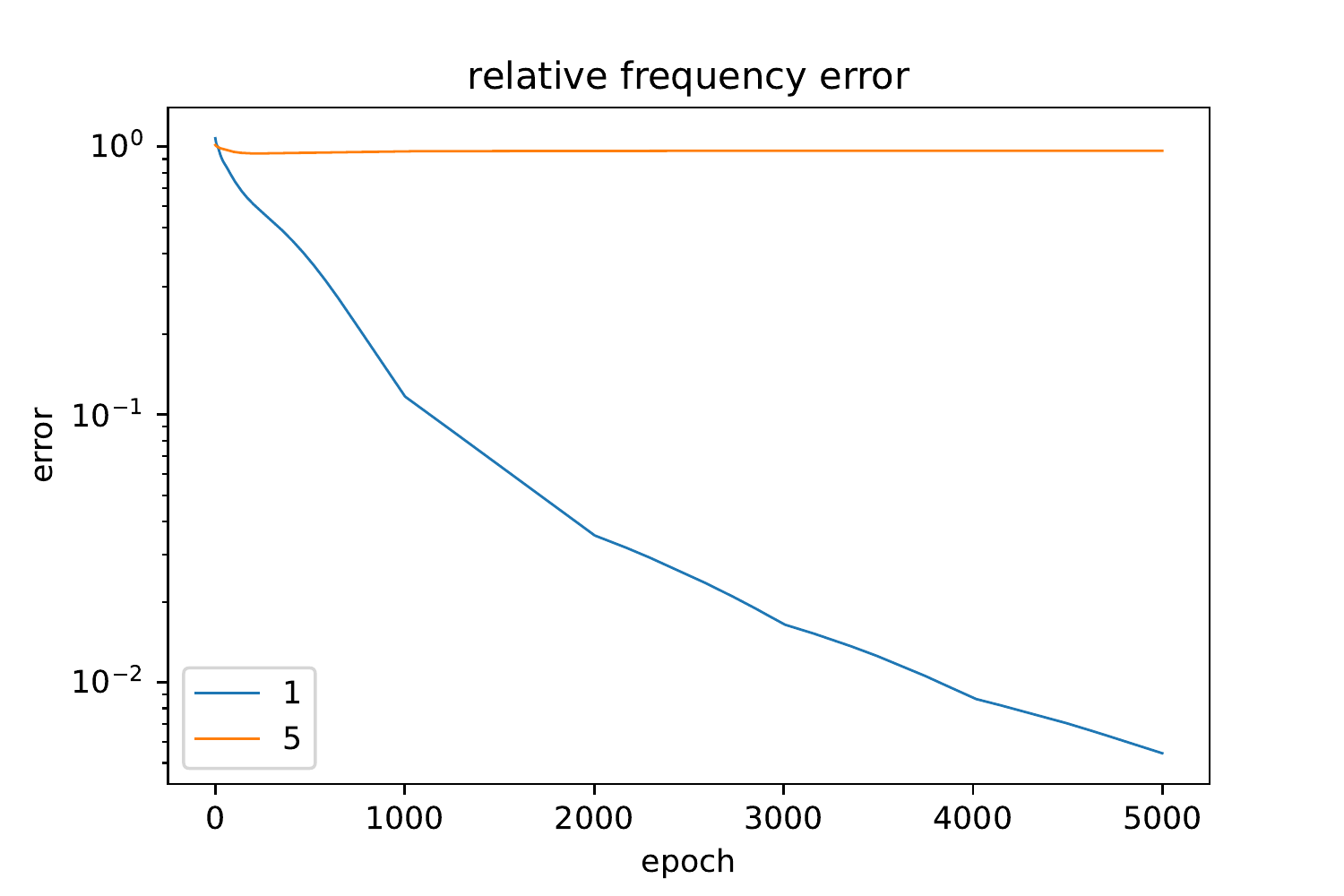}\hspace{-10pt}
\includegraphics[scale=0.30]{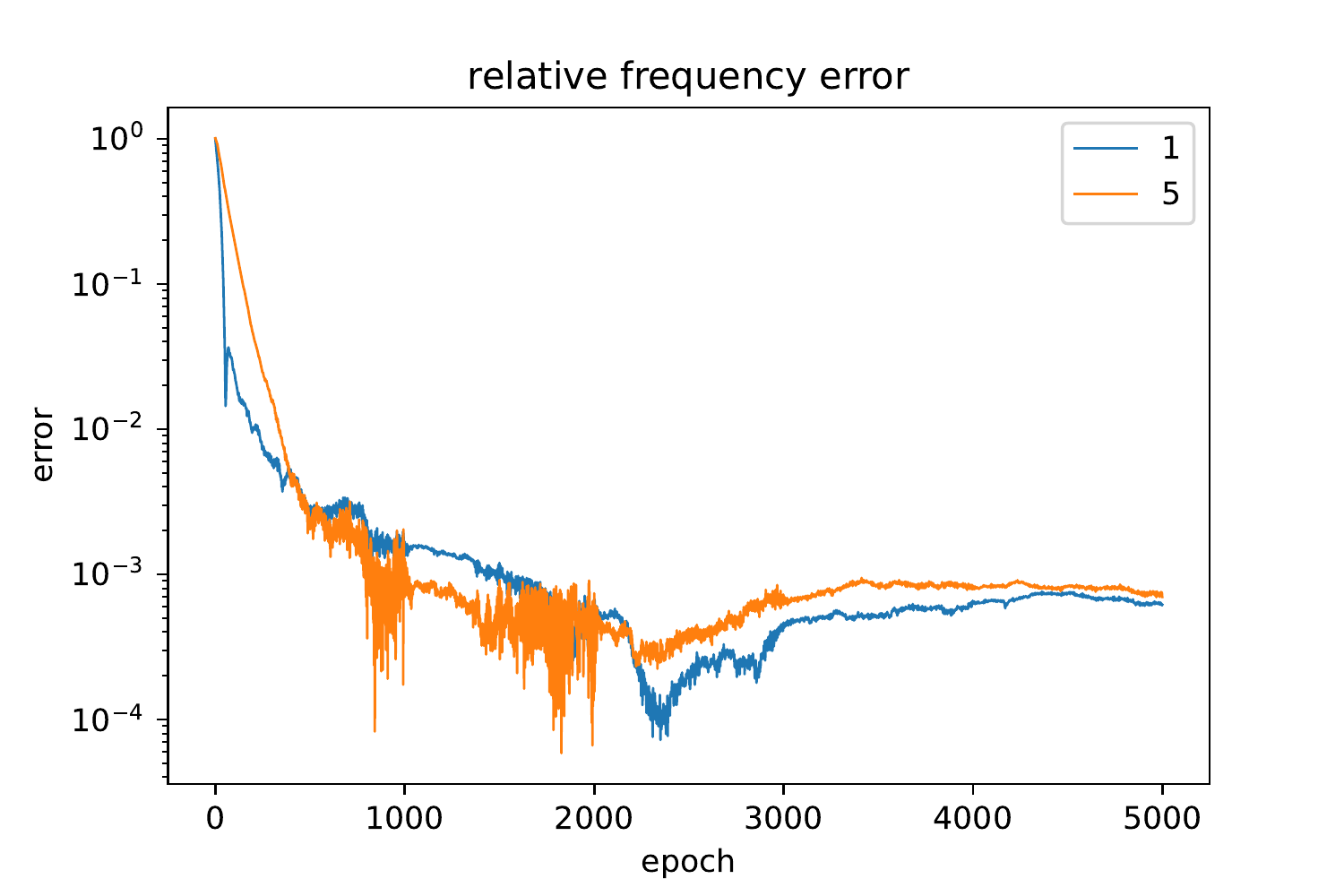}
\vspace{-3pt}
		\caption{$\sigma(p)=\text{ReLU}(p)$ (left), $\sigma(p)=\text{ Hat}(100 p)$ (right)}
		\label{2Ddirect}
\end{figure}
\vspace{-2pt}
To observe the spectral bias, we look at a slice of the target and network functions, defined by setting $x_2 = 31/128$. For this one dimensional function, we plot $\Delta_{f,g} (1)$ and $\Delta_{f,g} (5)$ for both neural network models with ReLU and Hat activation functions. Here $\Delta_{f,g}$ is defined in the same way as in equation \eqref{delta-definition-experiments}.
\end{experiment}

\section{Experiments for Deep Neural Networks}\label{ExperimentsDeep}

\subsection{Experiments with Synthetic Data}\label{ExperimentsDeep1}

\begin{experiment}\label{1D10fre}
\normalfont
The experimental setup here is an extension of the experiment presented in \cite{rahaman2019spectral}. The target function is:
\begin{align}
	u(x) = \sum_{k=1}^{10} \sin{(10\pi kx +  c_k)},
\end{align}
where $c_k, i=1,2,\cdots, 10$ are sampled from the uniform distribution $ U(0, 2\pi)$. We fit this target function using a squared error loss sampled at 200 equally spaced points in [0, 1] using a neural network with 6 layers. Three networks are trained, one with a ReLU activation function and two with the Hat activation function. We train the models with Adam using the following hyperparameters:
\begin{itemize}
\item ReLU activation function with 256 units per layer: all parameters are initialized from a Gaussian distribution $\mathcal{N} (0, 0.04)$. The learning rate is 0.001 and decreased by half every 10000 epochs. This corresponds to the experiment in \cite{rahaman2019spectral}.
\item Hat activation function with 256 units per layer: all parameters are initialized from the uniform distribution $\mathcal{U} (-1.0, 1.0)$. The learning rate is 0.00001 and is decreased by half every 250 epochs. 
\item Hat activation function with 128 units per layer:  all parameters are initialized from the uniform distribution $\mathcal{U} (-2.0, 2.0)$. The learning rate is 0.0001 and decreased by half every 200 epochs. 


\end{itemize}
\vspace{-10pt}
\begin{figure*}[!htbp]
	\centering
	\includegraphics[scale=0.28]{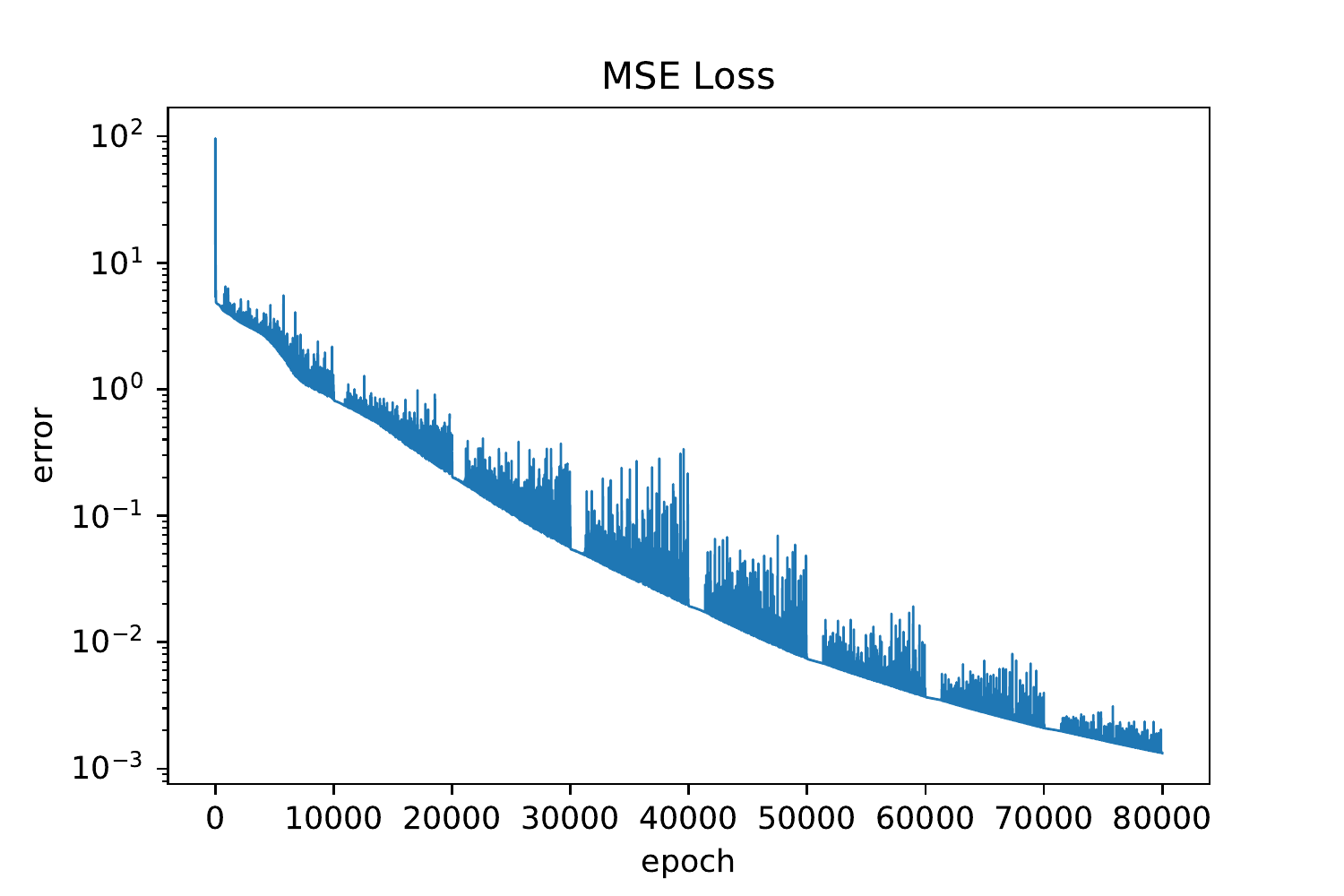}\hspace{-11pt}
	\includegraphics[scale=0.28]{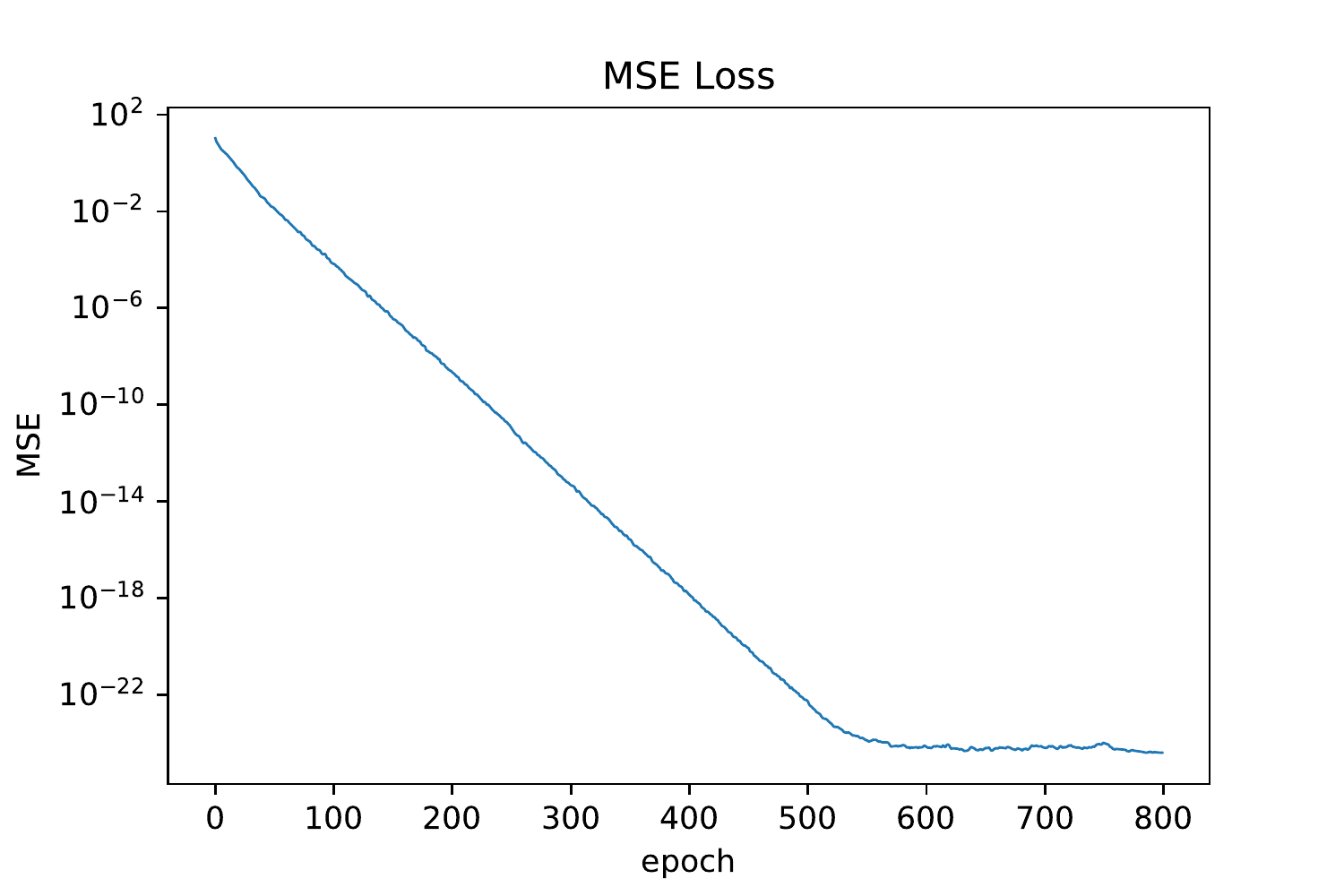}\hspace{-11pt}
	\includegraphics[scale=0.28]{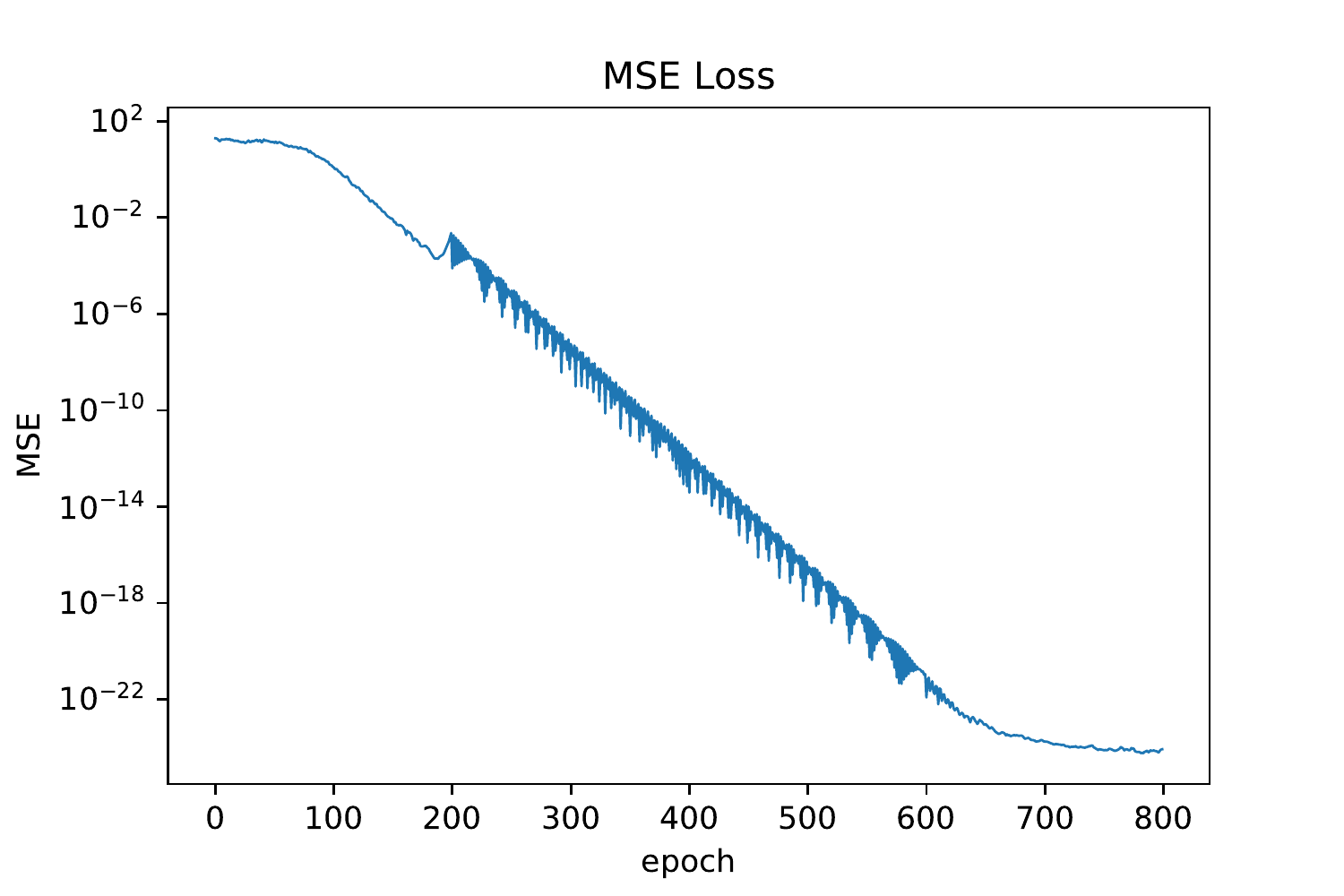}\\
	\includegraphics[scale=0.28]{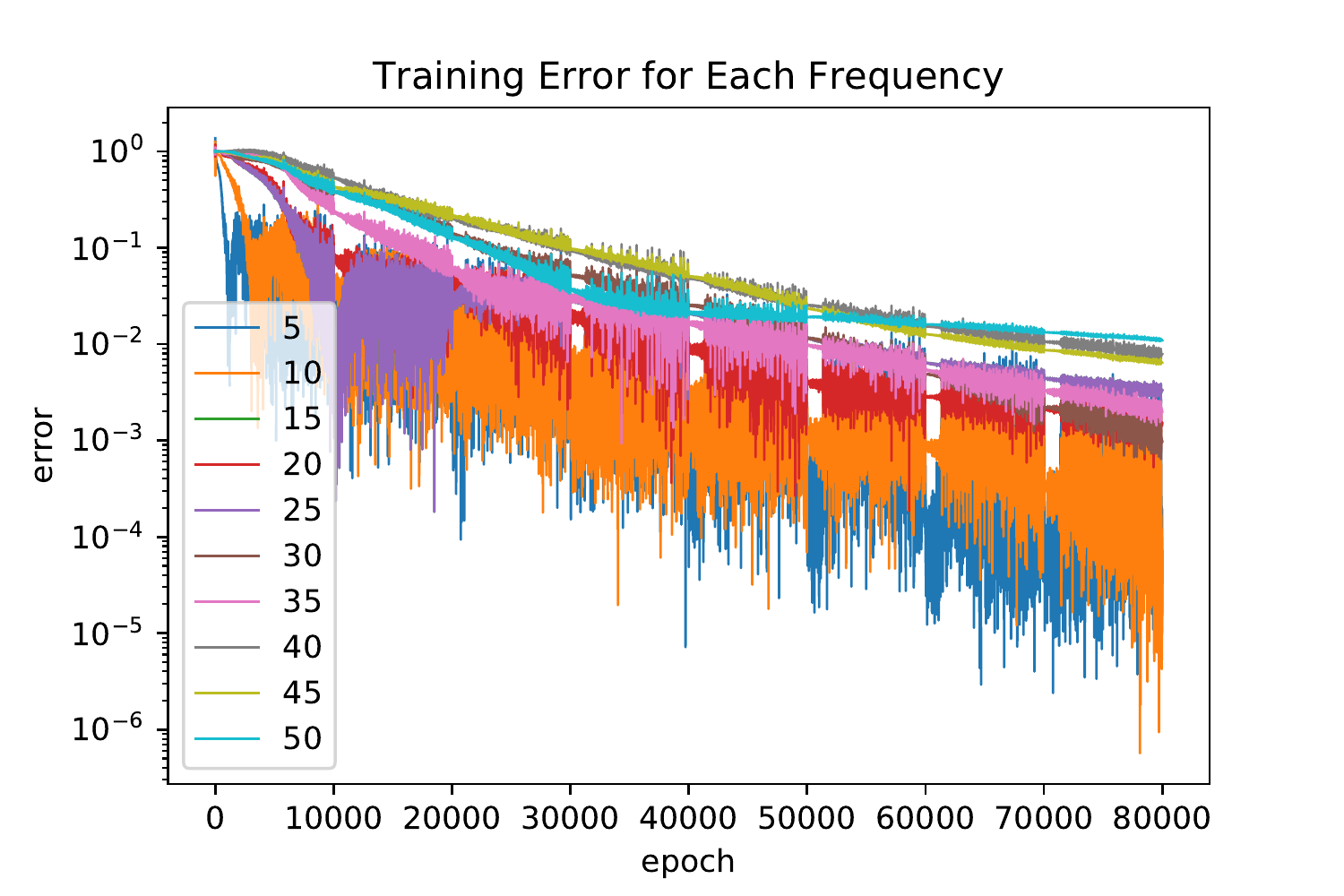}\hspace{-11pt}
	\includegraphics[scale=0.28]{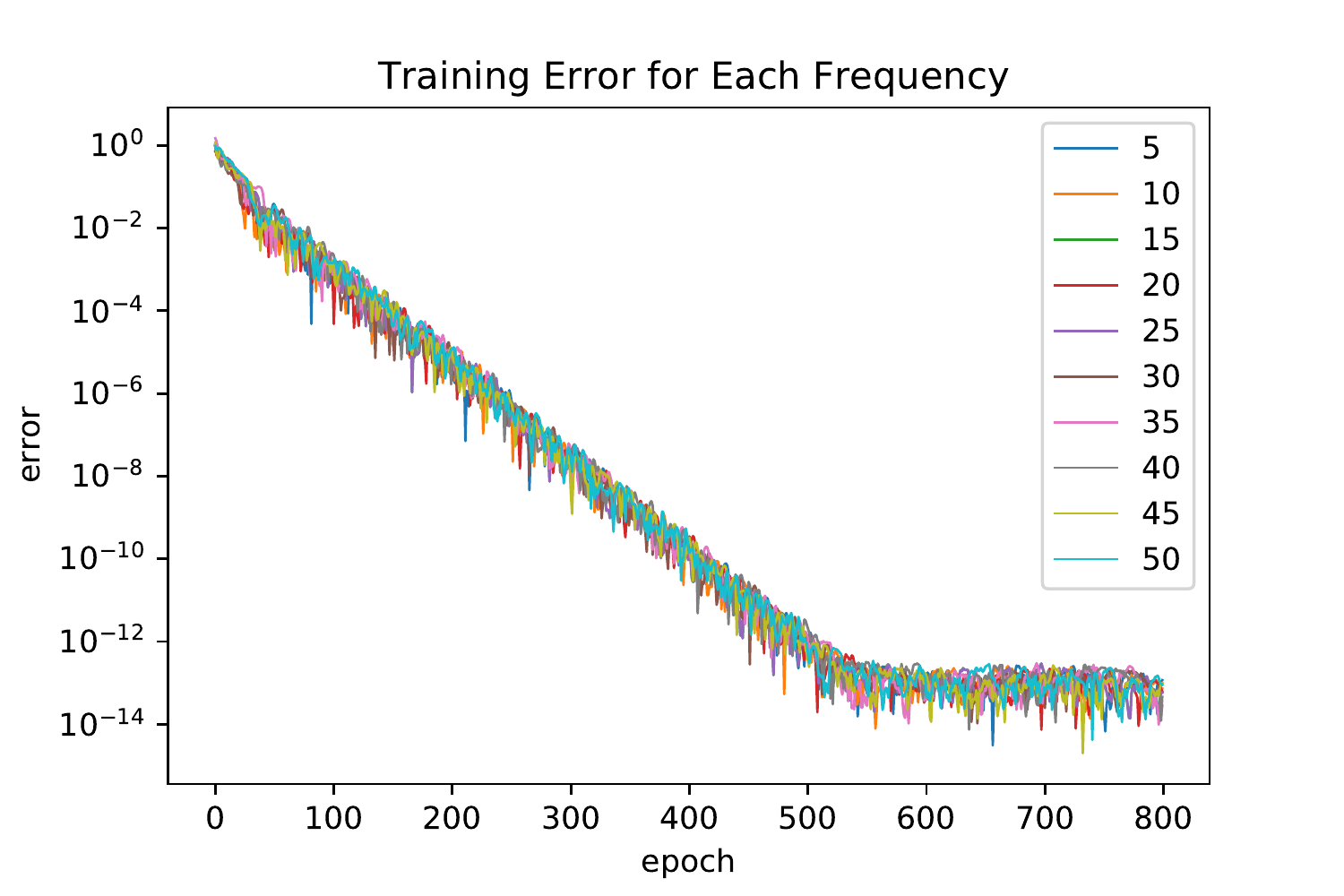}\hspace{-11pt}
	\includegraphics[scale=0.28]{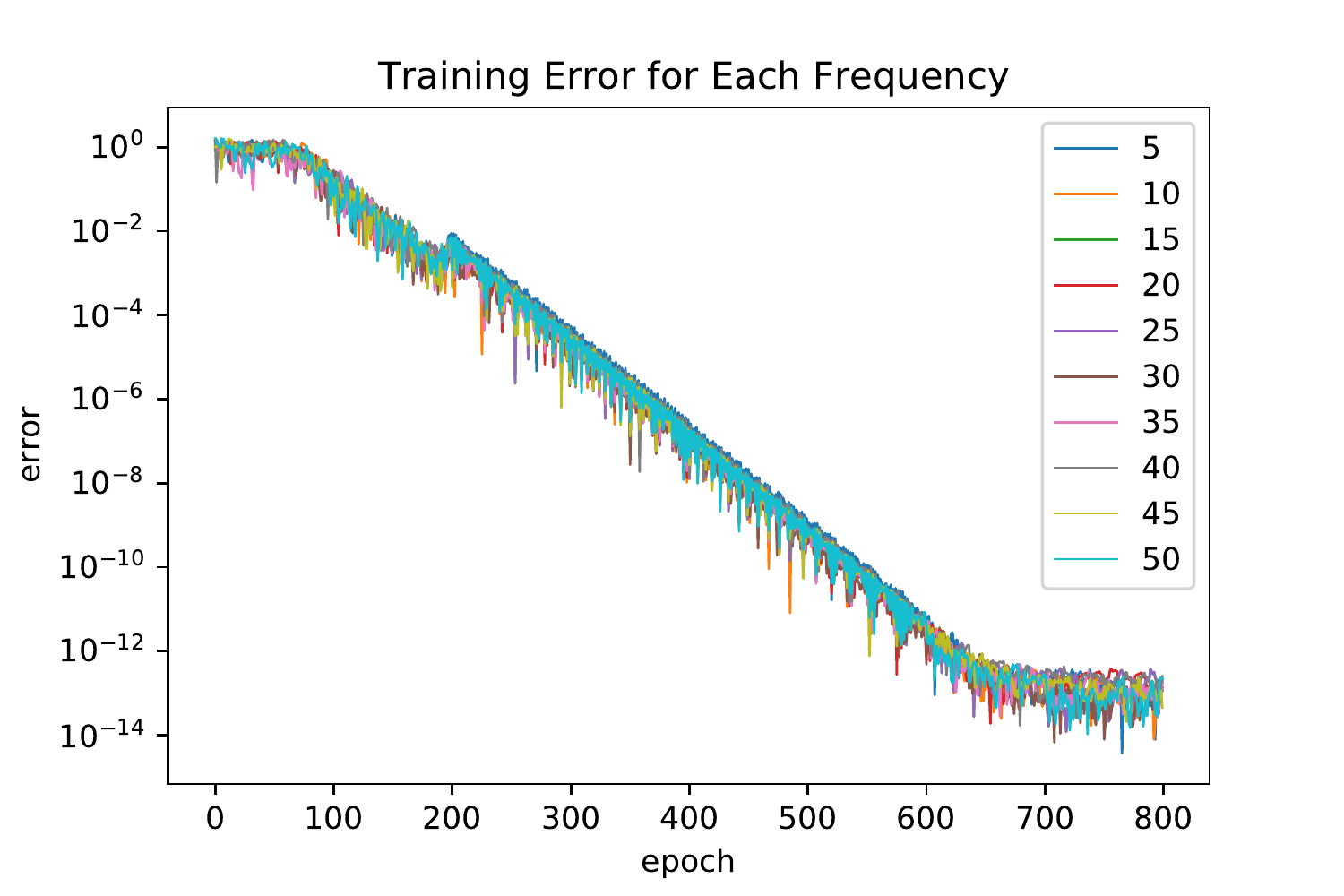} 		
	\caption{ReLU, width 256 \cite{rahaman2019spectral} (left), Hat activation, width 256 (middle), Hat activation, width  128 (right).}
	\label{1DDNN-2}
\end{figure*}

\begin{figure*}[!htbp]
	\centering
	\includegraphics[scale=0.28]{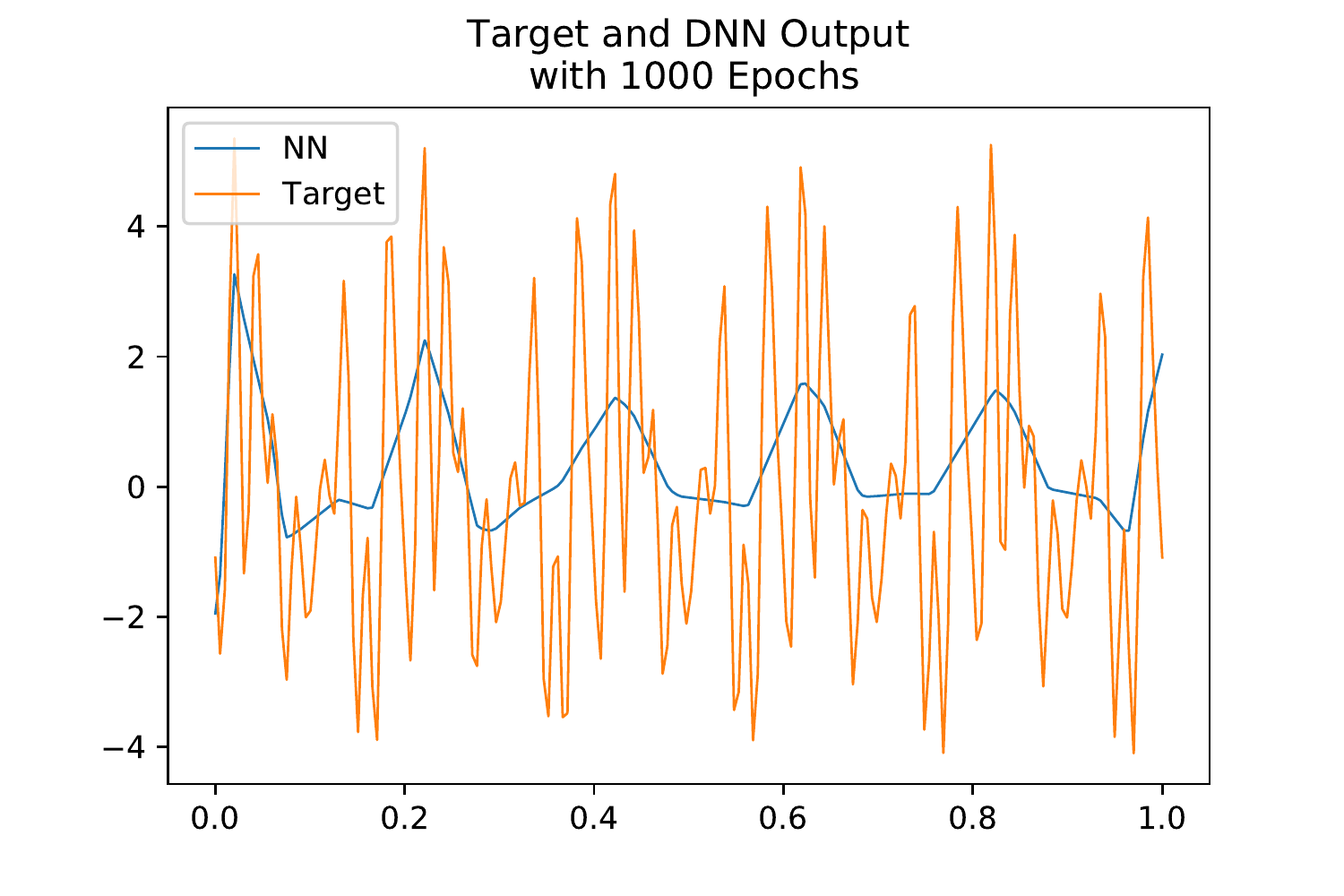}\hspace{-11pt}
	\includegraphics[scale=0.28]{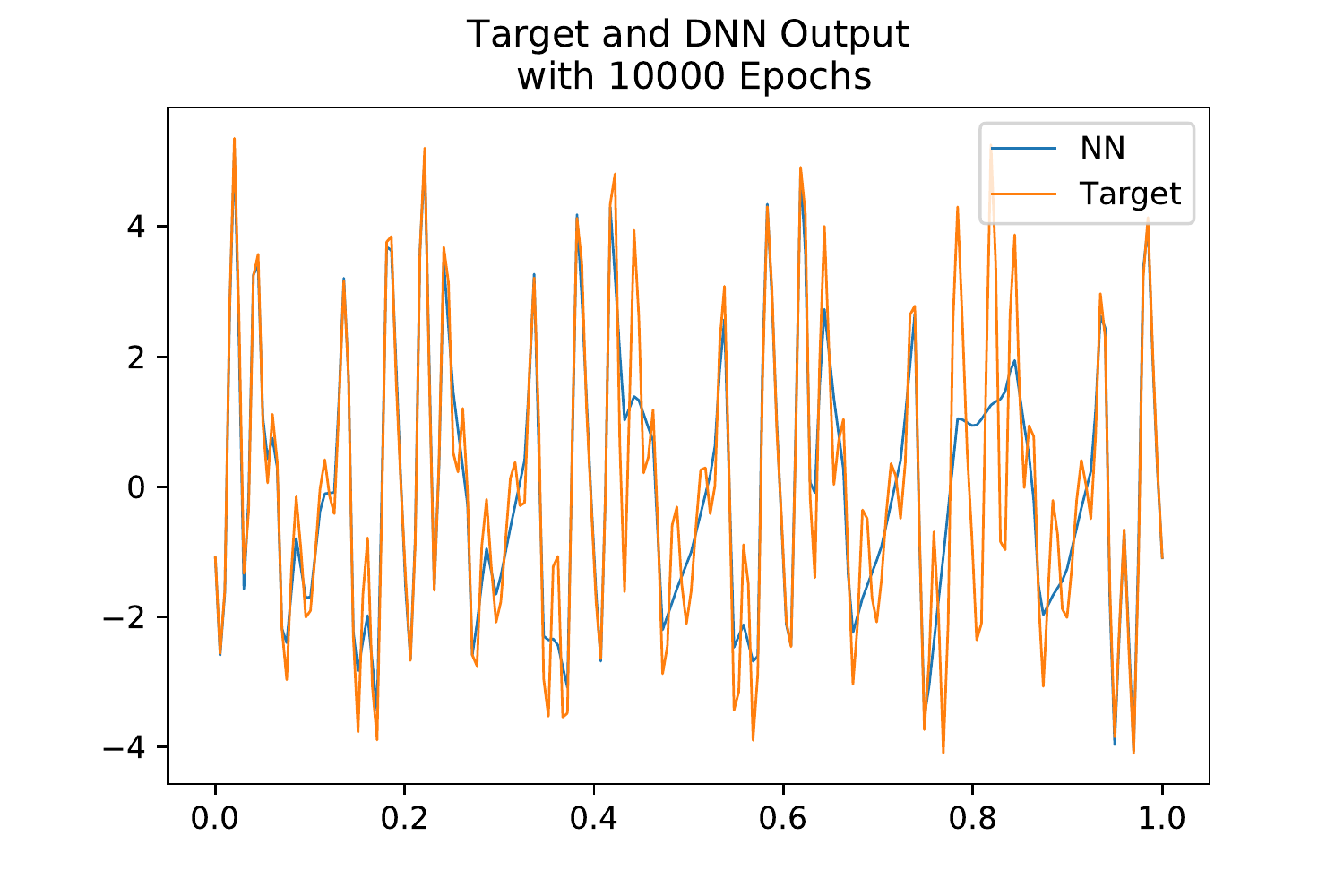}\hspace{-11pt}
	\includegraphics[scale=0.28]{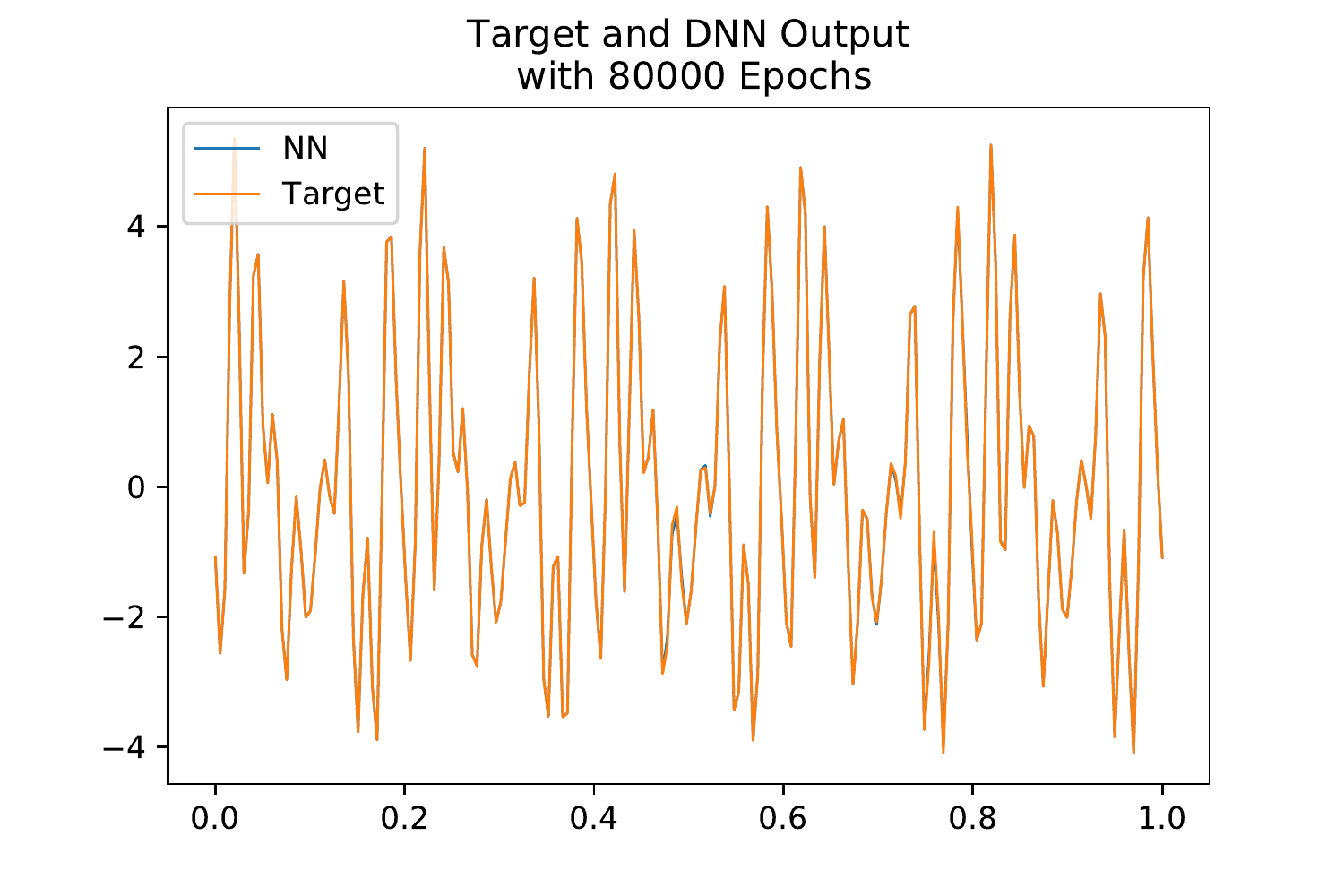}	
	\\
	\includegraphics[scale=0.28]{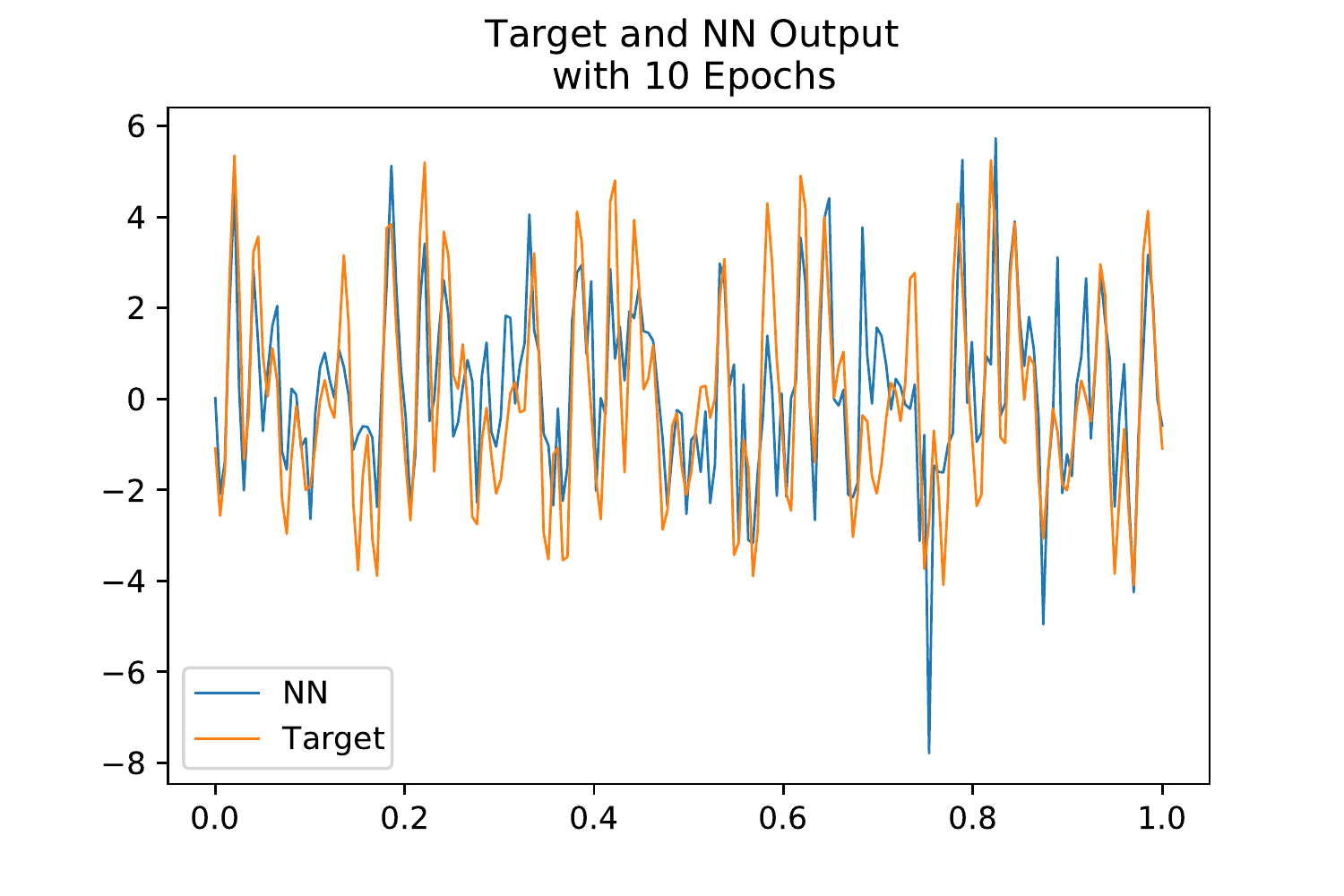}\hspace{-11pt}
	\includegraphics[scale=0.28]{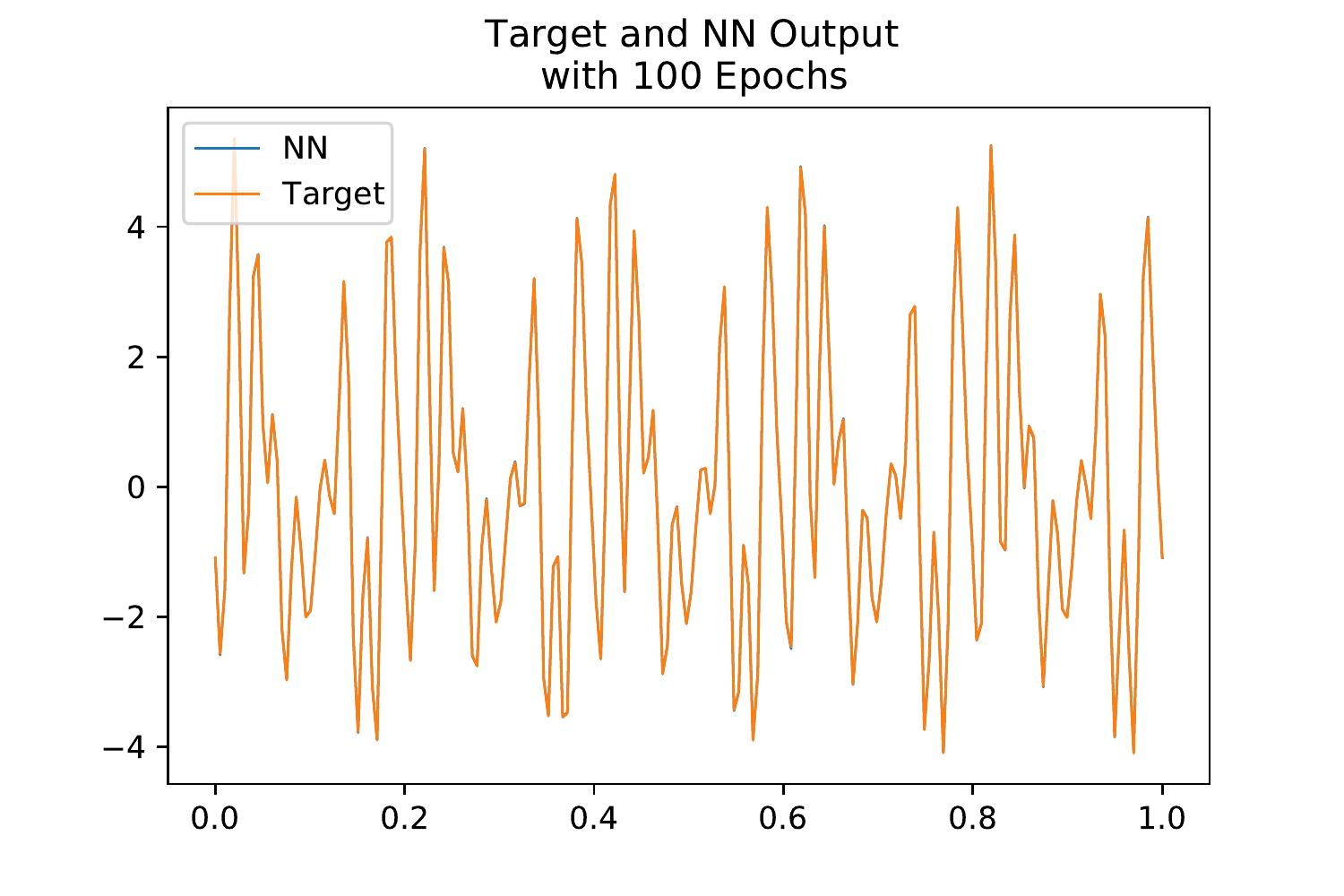}\hspace{-11pt}
	\includegraphics[scale=0.28]{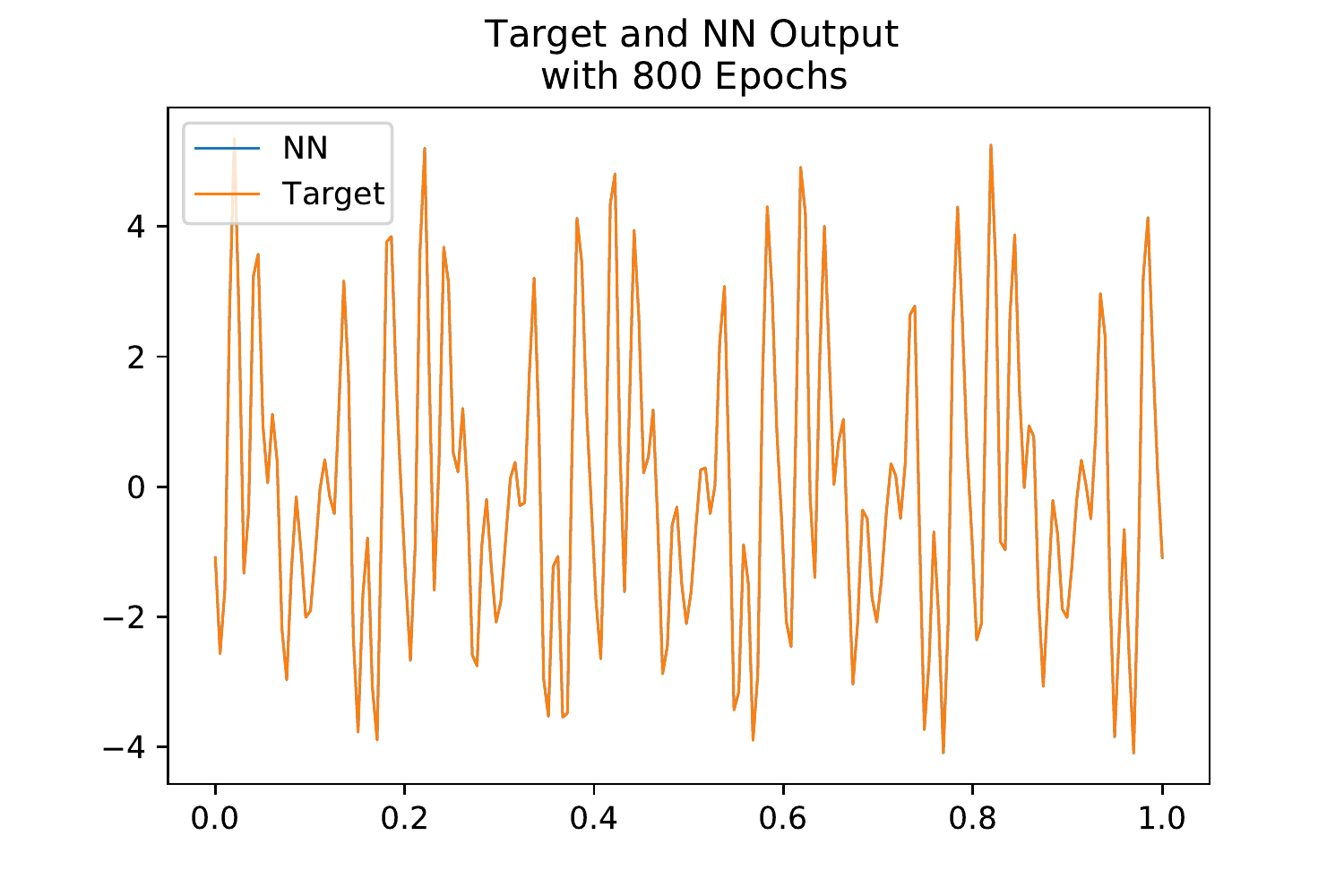} 	
	\caption{ReLU, width 256 \cite{rahaman2019spectral} (Top), Hat activation, width 256(Bottom).}
	\label{1DDNN-1}
\end{figure*}

From Figure \ref{1DDNN-2}, we see that the spectral bias is significant for ReLU deep neural networks (DNN). Some frequencies, for example 50, will not even converge below an error of $10^{-2}$. Changing the activation function to a linear Hat function removes the spectral bias and the loss decreases rapidly for all frequencies. In fact, the training loss for 
ReLU-DNN is only about $10^{-3}$ after 80000 epochs, while the training loss for Hat neural networks are are already $10^{-22}$ after only 800 epochs. As shown in Figure \ref{1DDNN-1}, Hat neural networks fit the target function much better and faster than ReLU networks. 
\end{experiment}

\begin{experiment}\label{imagefitting}
\normalfont
	In this experiment, we consider fitting a grayscale image using deep neural networks. We view the grayscale image on the left of Figure \ref{picture_fit} as two dimensional function and fit this function using a deep neural network with hidden layers of size 2-4000-500-400-1. Our loss function is the squared error loss. We consider using both the ReLU and Hat neural networks and compare their performance. For both networks, we initialize all parameters from a normal distribution with standard deviation 0.01. For the Hat network, the learning rate is 0.0005 and is reduced by  half every 1000 epochs, while the learning rate for ReLU model is 0.0005 and reduced by half every 4000 epochs. From Figure \ref{picture_fit}, we can see that the image is fit much better using the Hat network, which is able to capture the high frequencies in the image, while the ReLU network blurs the image due to its inability to capture high frequencies.
	
\begin{figure*}[!htbp]
	\centering
        \includegraphics[scale=0.38]{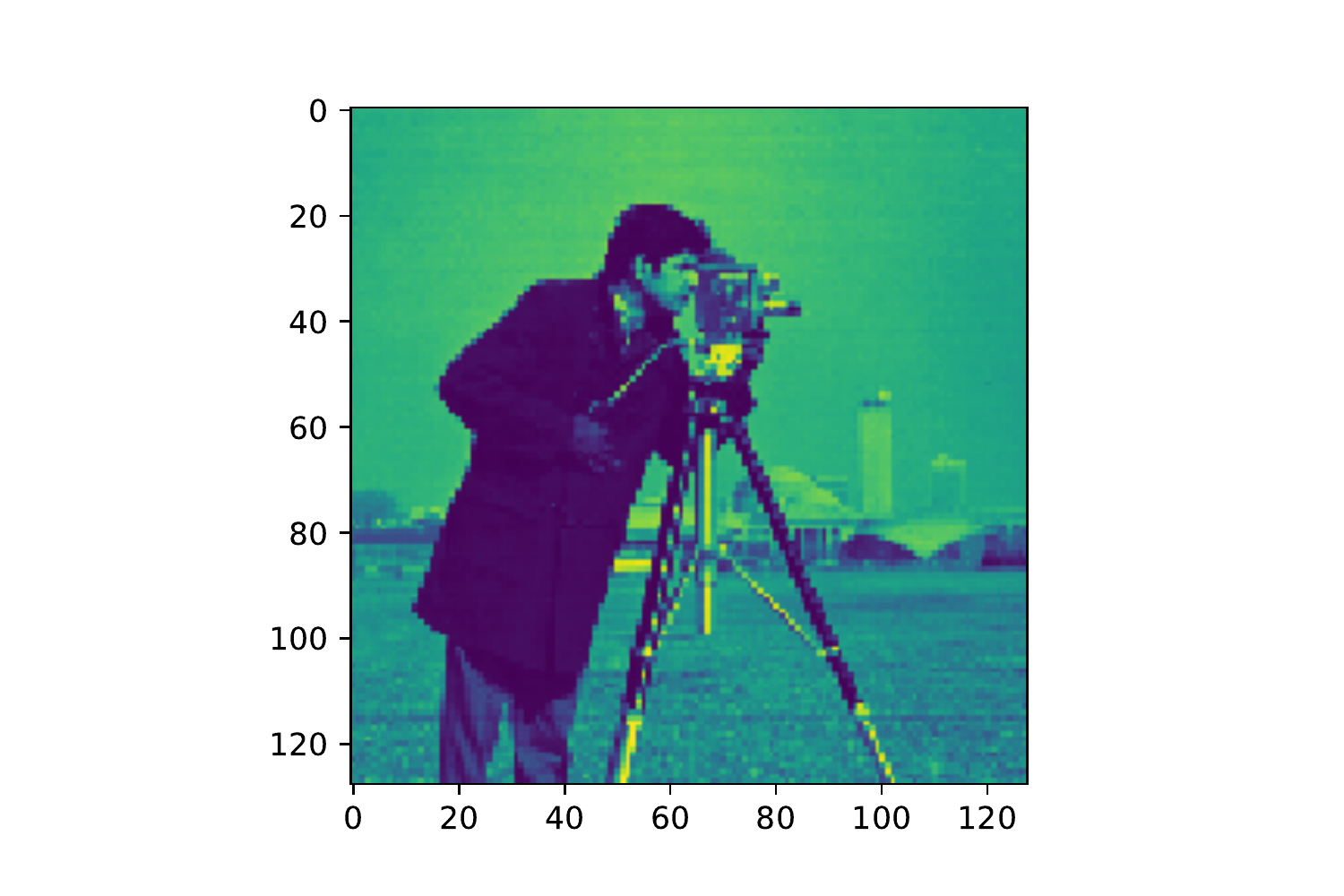}\hspace{-55pt}
        \includegraphics[scale=0.38]{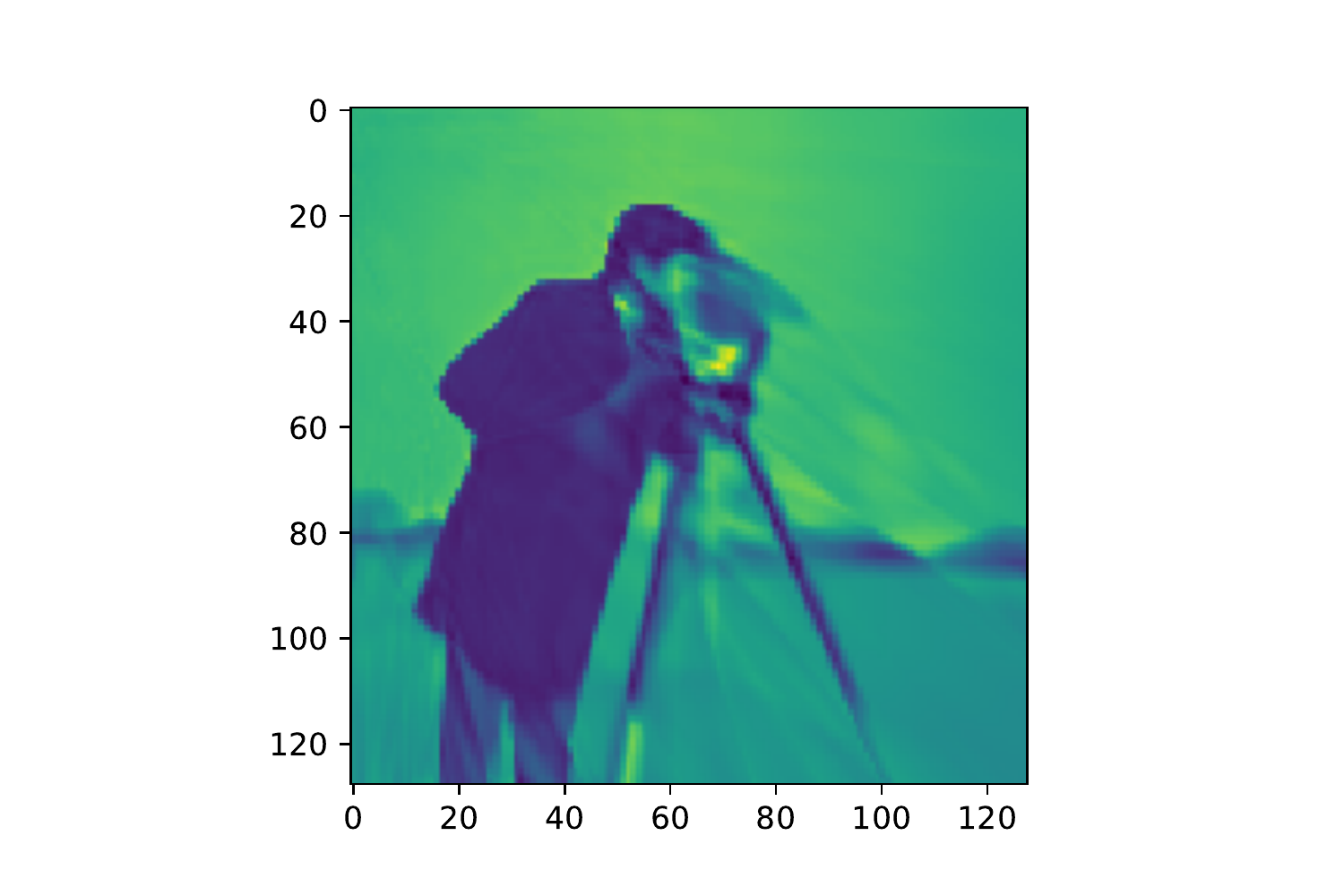}\hspace{-55pt}
        \includegraphics[scale=0.38]{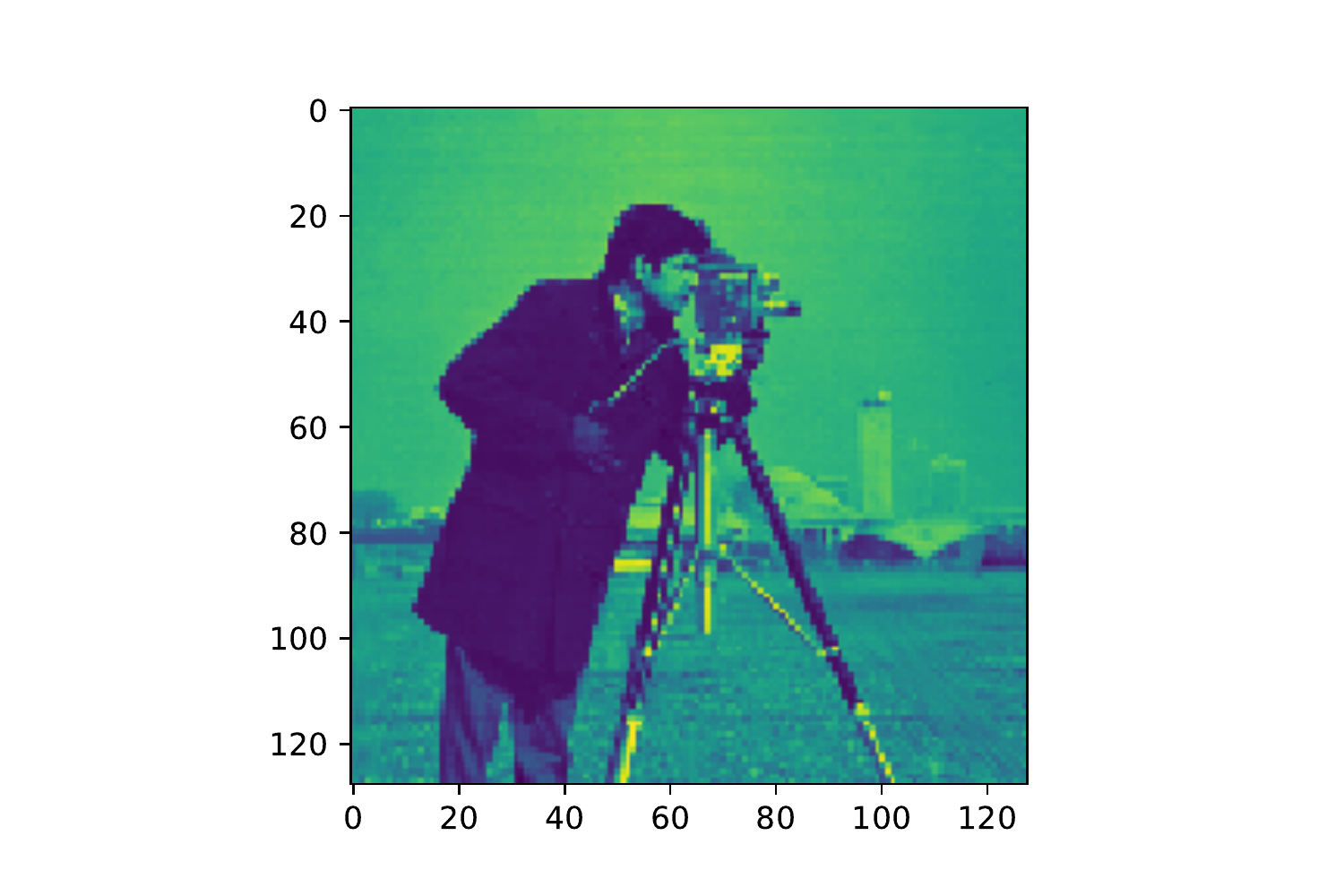}
        \caption{Left: Original Image, Middle: Image fit using deep ReLU network, Right: Image fit using deep Hat network.}
        \label{picture_fit}
	\end{figure*}
	
%
%
%
\end{experiment}

\subsection{Experiments with Real Data}
Next, we test the spectral bias of neural networks on real data, specifically on the MNIST dataset. Since the data lives in a very high dimension relative to the number of samples, we argue that Fourier modes are not a suitable notion of frequency. 
To get around this issue, we consider the eigenfunctions of a Gaussian RBF kernel \cite{braun2006model}. We largely follow the experimental setup presented in \cite{rahaman2019spectral}, with the notable difference that we compare neural networks with both a ReLU and Hat activation function \eqref{hat-definition}.

Specifically, we choose two digits $a$ and $b$ and consider fitting the classification function
\begin{equation}\label{target-function-mnist}
 u(x) = \begin{cases}
         0 & x~\text{is the digit $a$}\\
         1 & x~\text{is the digit $b$}
        \end{cases}
\end{equation}
via least squares on $2000$ training images from MNIST. Following \cite{rahaman2019spectral}, we add a moderate amount of high-frequency noise and plot the spectrum of the target function and the training iterates in Figure \ref{MNIST-experiment-figure}. We see that even with this generalized notion of frequency, the network with a Hat activation function is able to fit the higher frequencies much faster than a ReLU network.
\begin{figure}[!htbp]
		\centering
		\includegraphics[scale=0.33]{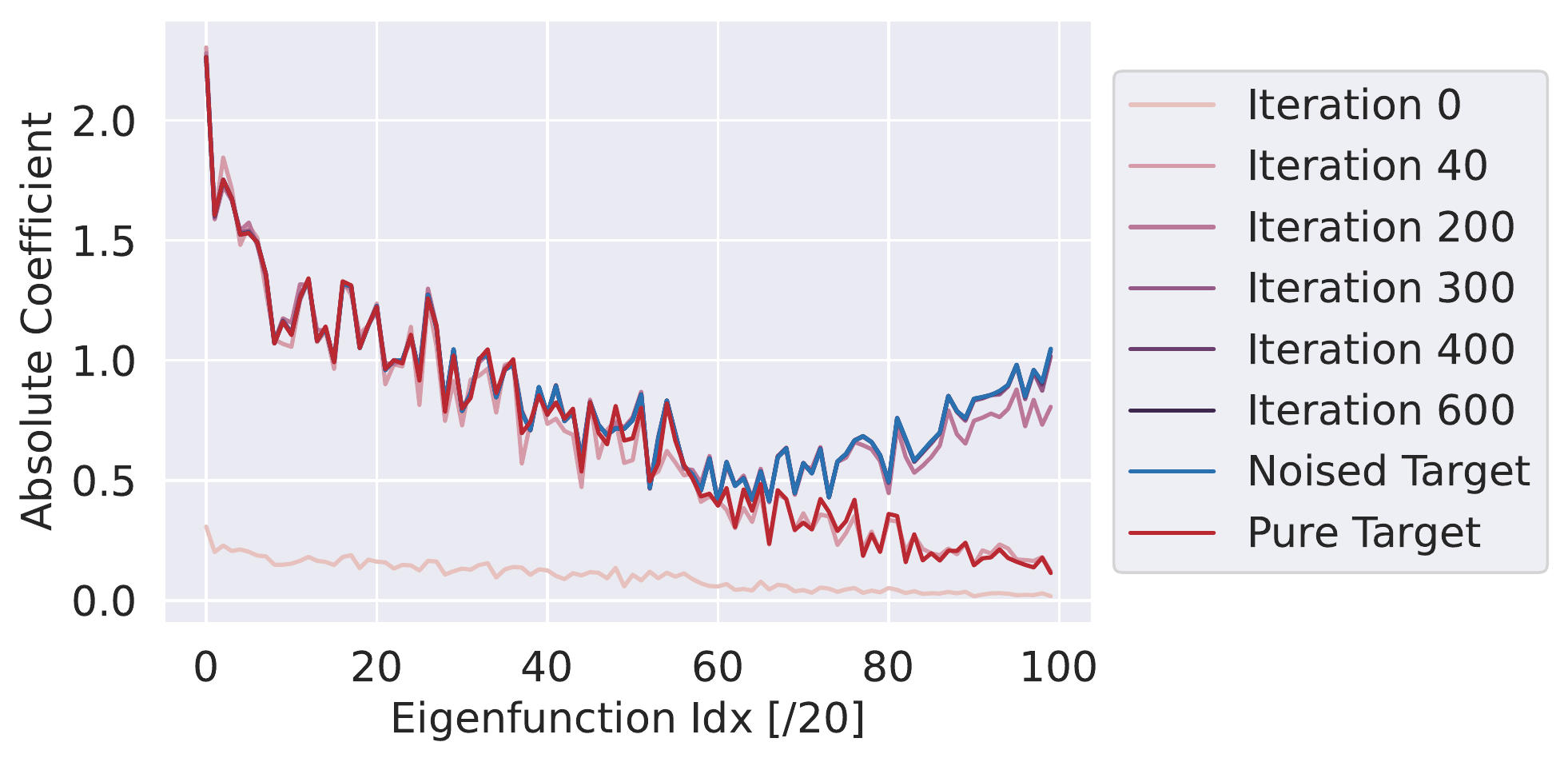}\hspace{10pt}
		\includegraphics[scale=0.33]{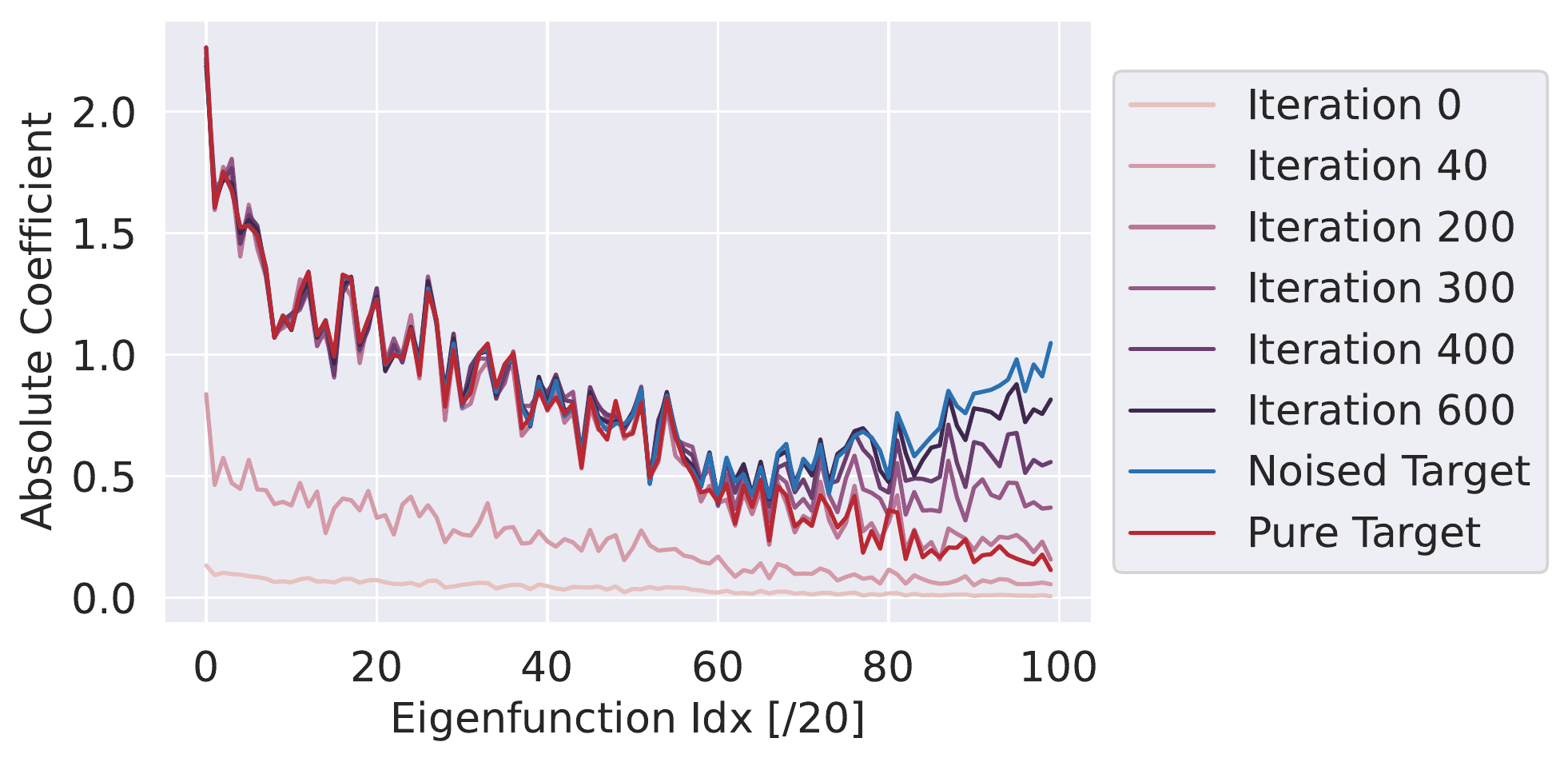}
		\caption{Real Data Experiments on MNIST using Gaussian RBF kernel eigenfunctions. We can clearly see that the error components in each eigenfunction have all converged by iteration 200 when using the Hat activation function (left), while the higher frequency components converge much more slowly when using the ReLU activation function (right).}
		\label{MNIST-experiment-figure}
	\end{figure}
	
\vspace{-10pt}

\section{Conclusion}\label{conclusion}
We have provided a theoretical explanation for the spectral bias of ReLU networks and shown that using the Hat activation function will remove this spectral bias, which we also confirmed empirically. Further research directions include studying the spectral bias for a wider variety of activation functions. In addition, we view deepening the connections between finite element methods and neural networks as a promising research focus.



\bibliographystyle{plain}

\appendix

\section{Appendix}

\subsection{Additional experiments}\label{Experimentsadditional}
In this subsection, we first report several additional numerical experiments using shallow neural networks to fit target functions in $\mathbb R^d (d=2,3)$ showing that the spectral bias holds for ReLU shallow neural networks, but it does not hold for Hat neural networks. In addition, we report a comparison numerical experiment between the hat neural network and
the neural network with $\sin$ activation function and a comparison numerical experiment between hat neural network and ReLU neural network with accurate approximation to the loss function.  Also we add an experiment with fewer neurons by rerunning the Experiments 1 and 2. Then we add an experiment using SGD method as optimizer.  Finally we add an  experiment showing that when the shallow ReLU network gets wider, the spectral bias gets stronger. 
\begin{experiment}\label{2Dinde-ex}
\normalfont

In this experiment, we investigate how the validation performance depends on the frequency of noise added to the training target in two dimension case. 
We consider the ground target function on $[0,1]^2$
\begin{equation}
u_0(\boldsymbol x)=\sin(2\pi x_1) \sin(2\pi x_2).
\end{equation}
Let $\psi_k(\boldsymbol x)$ be the noise function
\begin{equation*}
\psi_k(\boldsymbol x)=0.2 \sin(2k\pi x_1)  \sin(2k\pi x_2),
\end{equation*}
where $k$ is the frequency of the noise. The final target function $u(\boldsymbol x)$ is then given by $u(\boldsymbol x)=u_0(\boldsymbol x)+\psi_k(\boldsymbol x)$. 
We use two different shallow neural network models the same as the two used in Experiment \ref{2Ddirect-ex}.
The MSE loss function is computed by 4000 sampling points from the uniform distribution $\mathcal U([0,1]^2)$.   

The validation error is computed by
\begin{equation*}
L_2(f,u_0) = \frac1m \left(\sum_{i,j=1}^{m}\big(f(x_{1,i},x_{2,j})-u_0(x_{1,i},x_{2,j})\big)^2\right)^{\frac12},
\end{equation*}
on a uniform grid points of $[0,1]^2$ with $m=2^7$.  Both models
are trained with a learning rate of 0.001 and decreasing to its $\frac34$ for each 250 epochs. All parameters are 
initialized following a uniform distribution $\mathcal U(-0.3,0.3)$. 
\begin{figure}[!htbp]
		\centering		
		\includegraphics[scale=0.24]{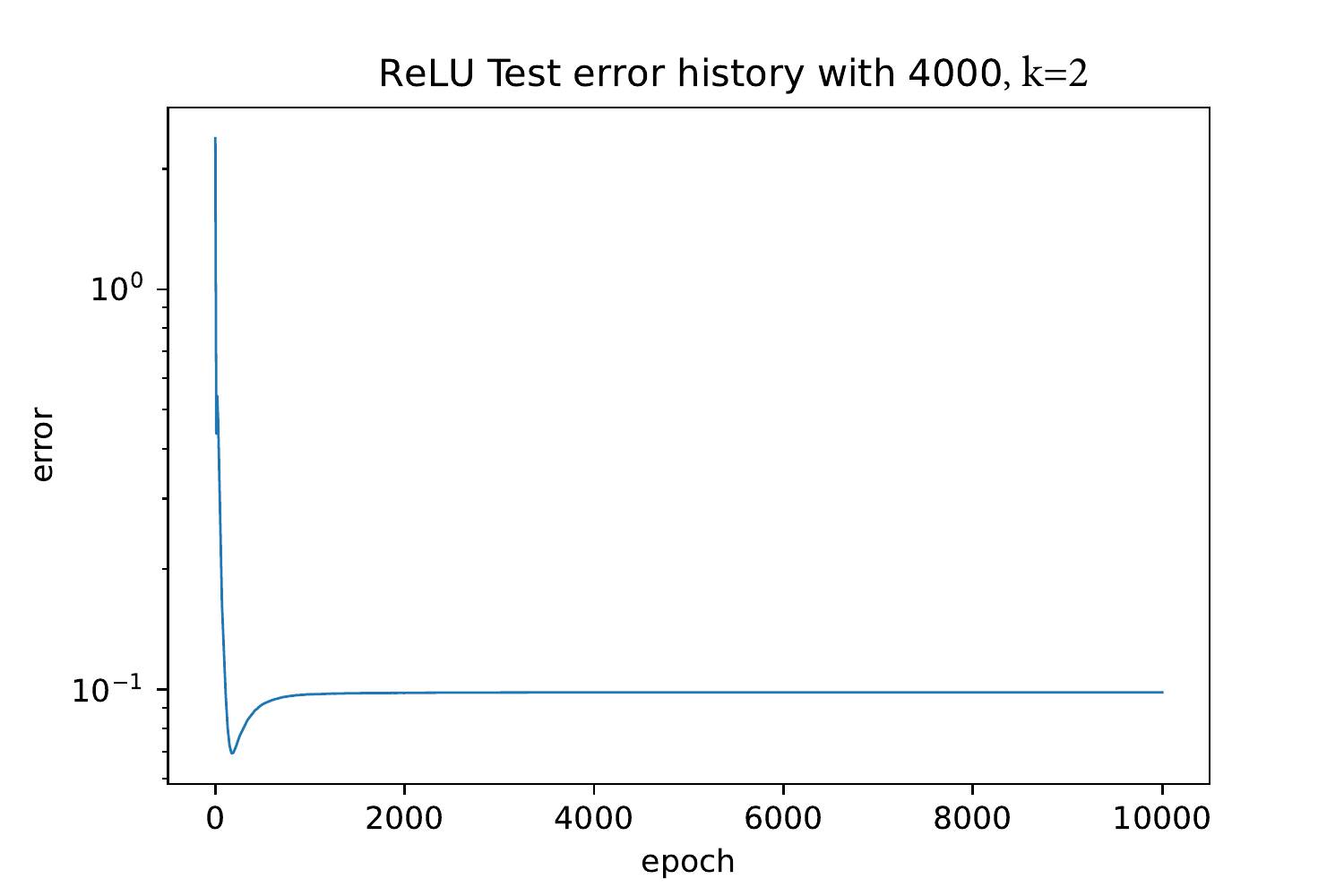}\hspace{-11pt}
		\includegraphics[scale=0.24]{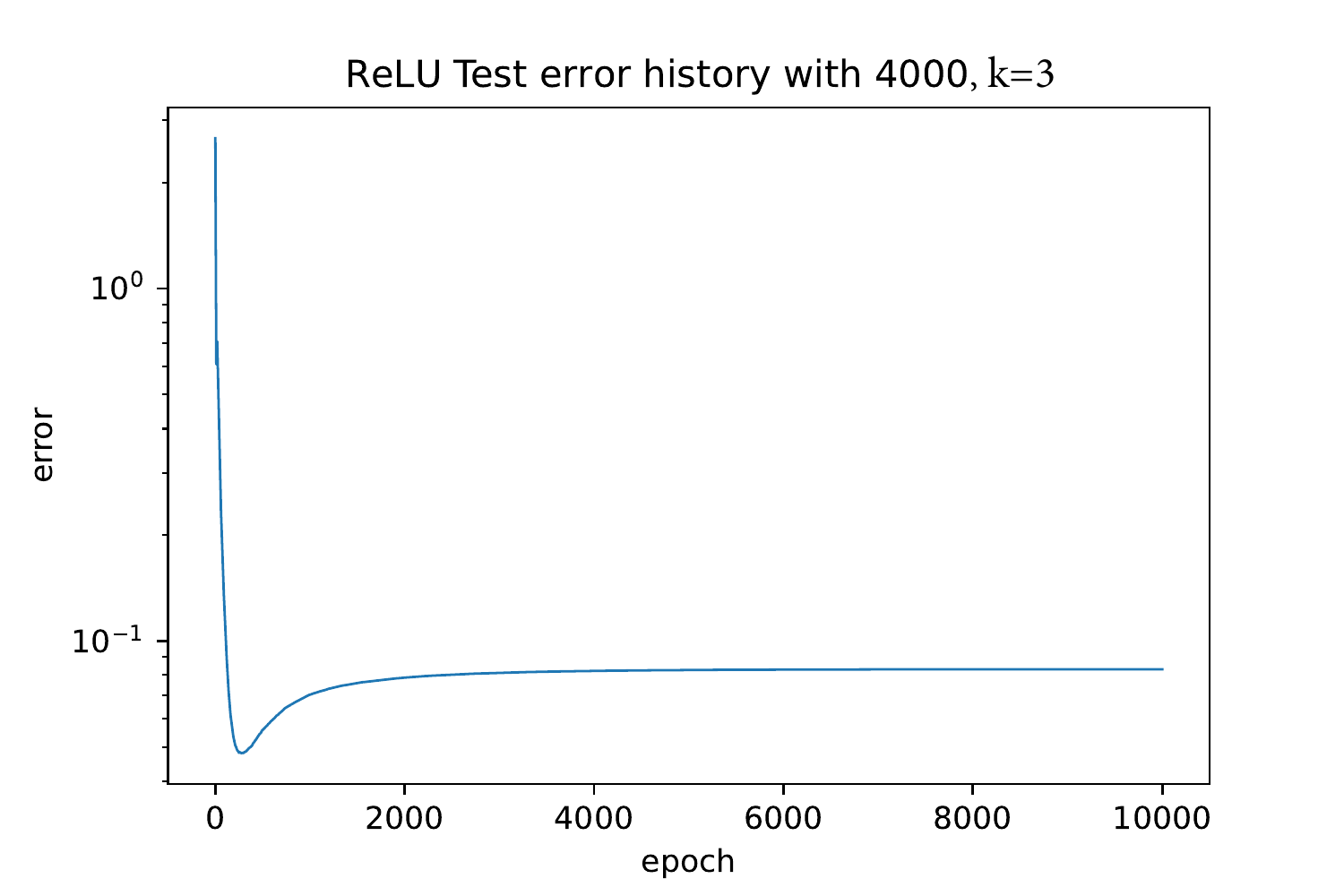}\hspace{-11pt}
		\includegraphics[scale=0.24]{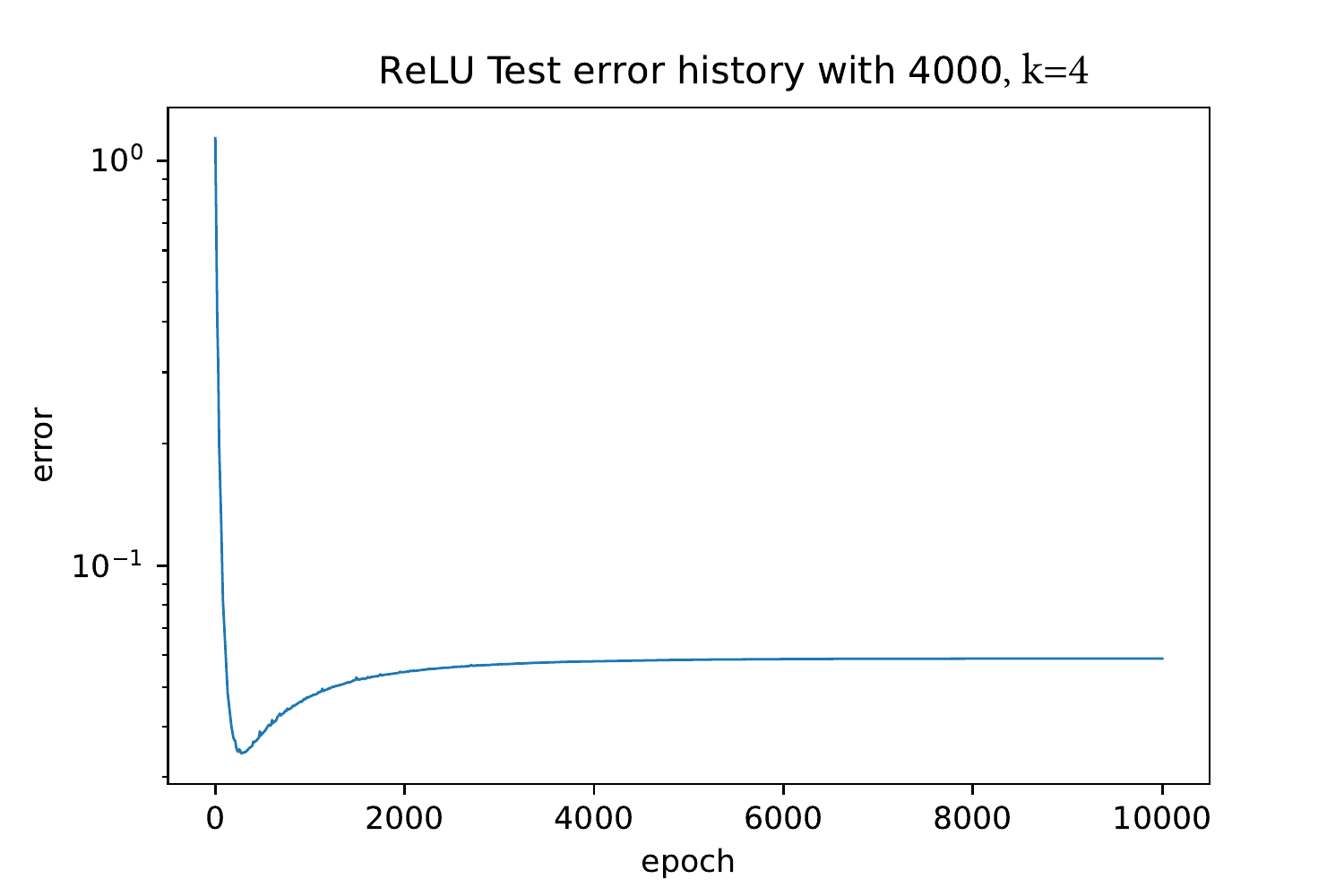}\hspace{-11pt}
		\includegraphics[scale=0.24]{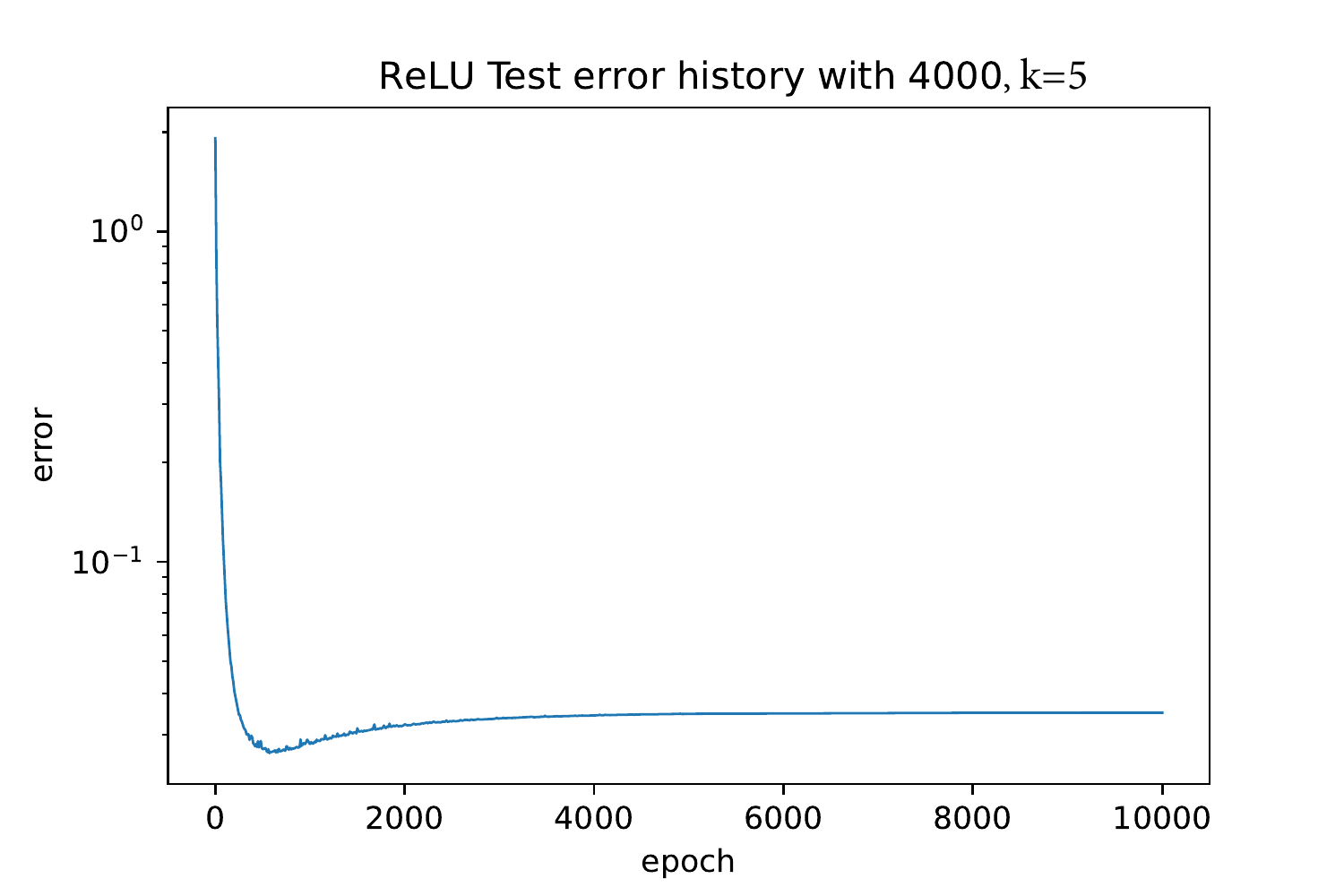}
		\caption{Validation error history for $\sigma(p)=\text{ReLU}(p)$ with $k=2,3,4,5$.}
		\label{2DReLU}
\end{figure}
\vspace{-18pt}

\begin{figure}[!htbp]
		\centering		
		\includegraphics[scale=0.24]{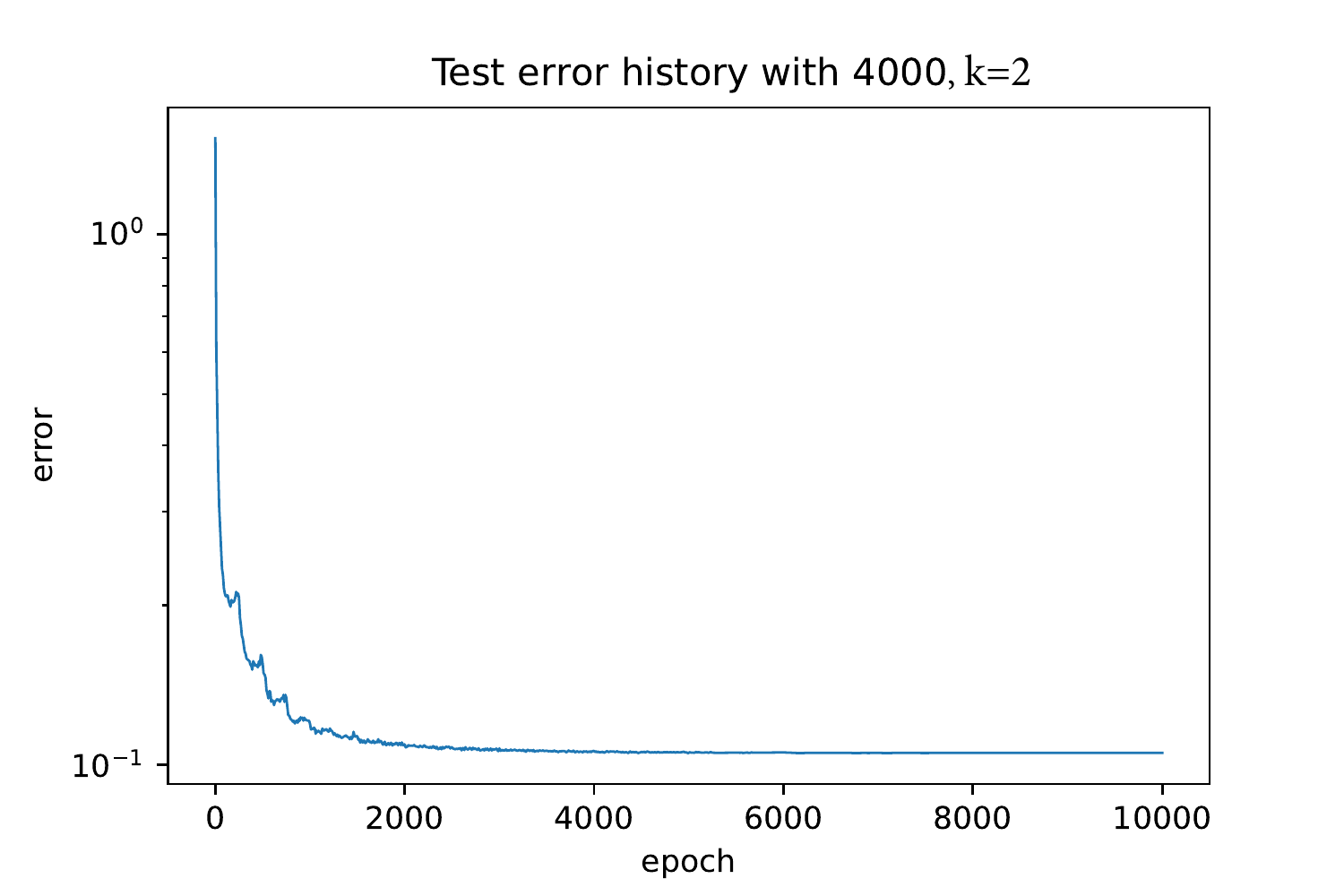}\hspace{-11pt}
		\includegraphics[scale=0.24]{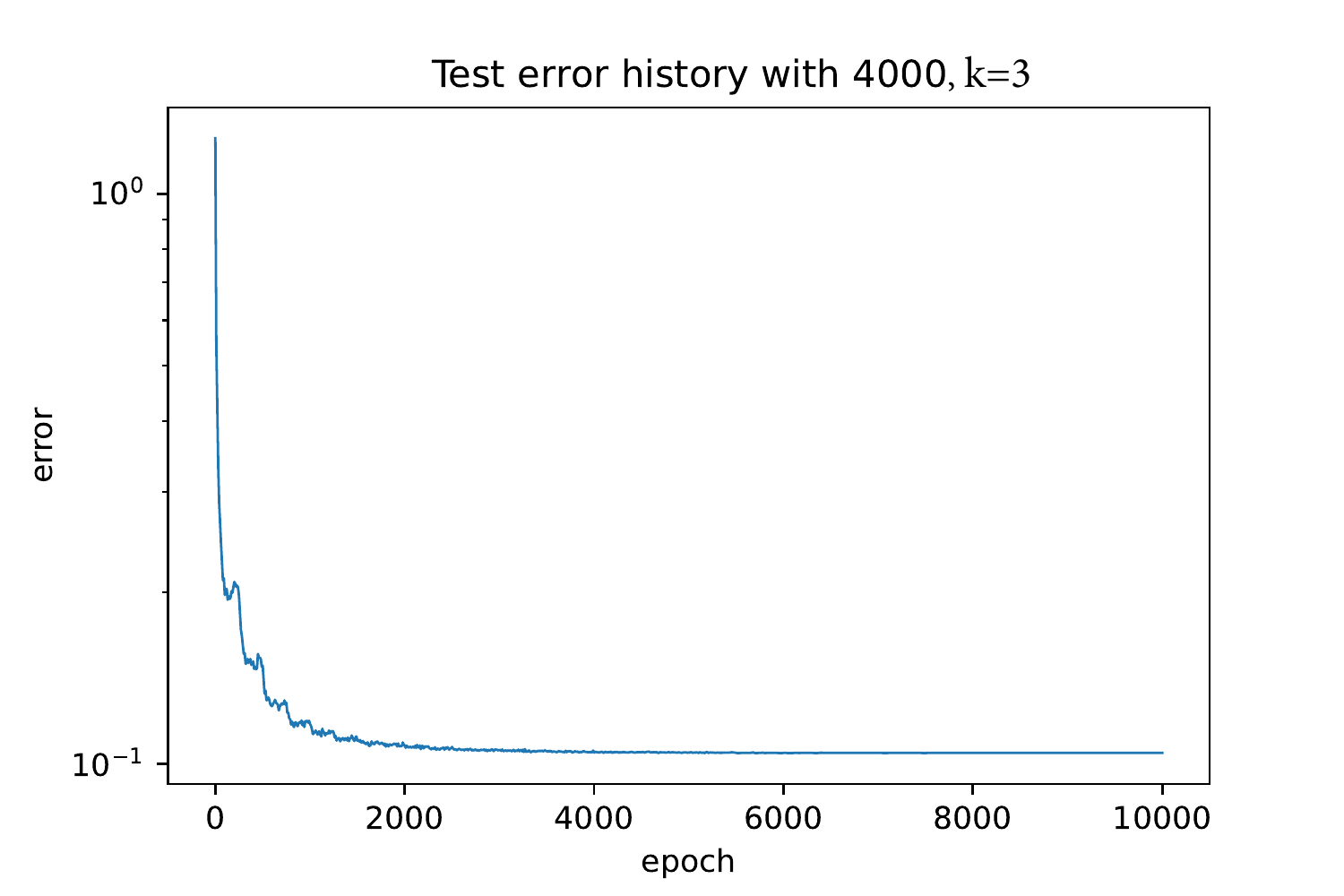}\hspace{-11pt}
		\includegraphics[scale=0.24]{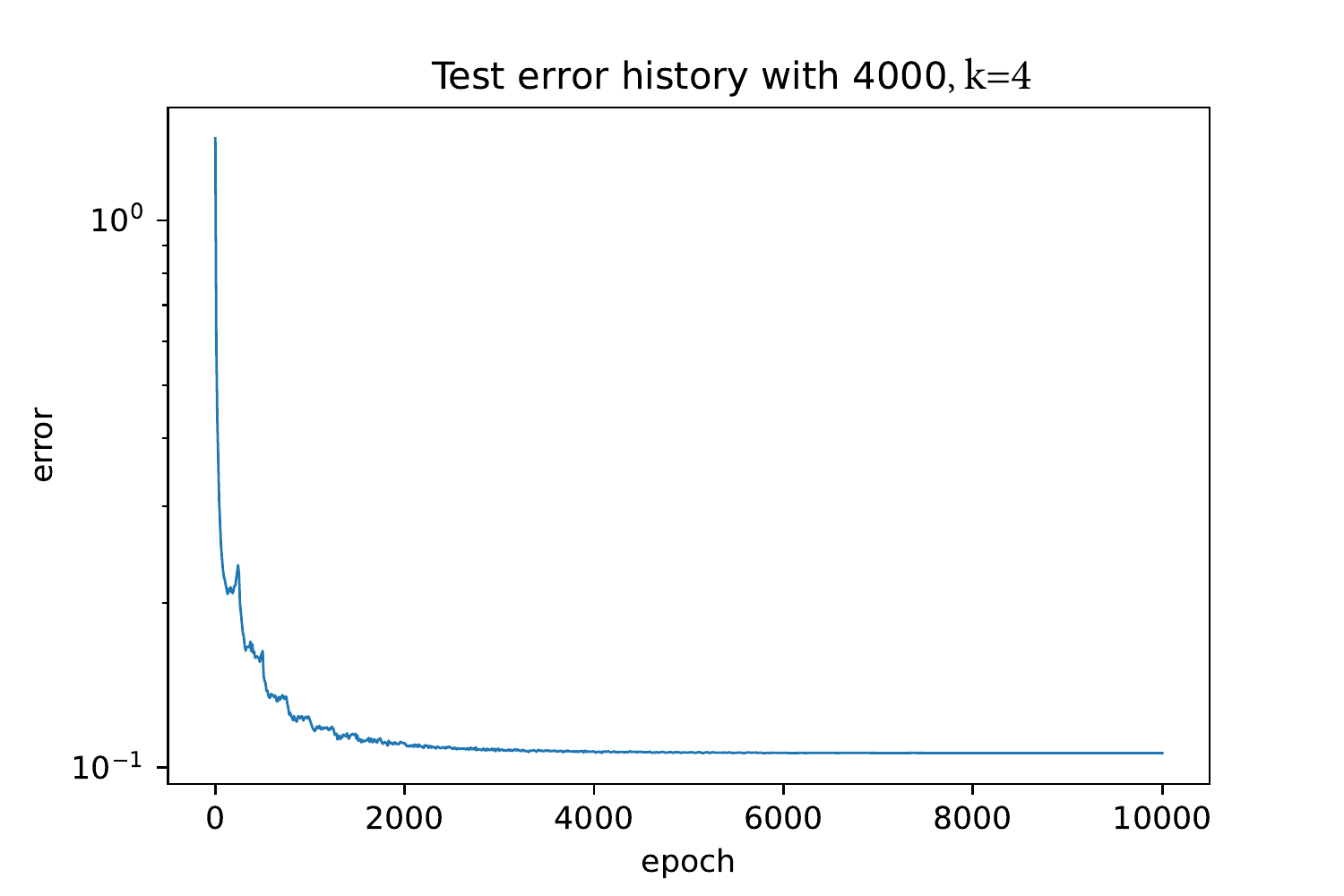}\hspace{-11pt}
		\includegraphics[scale=0.24]{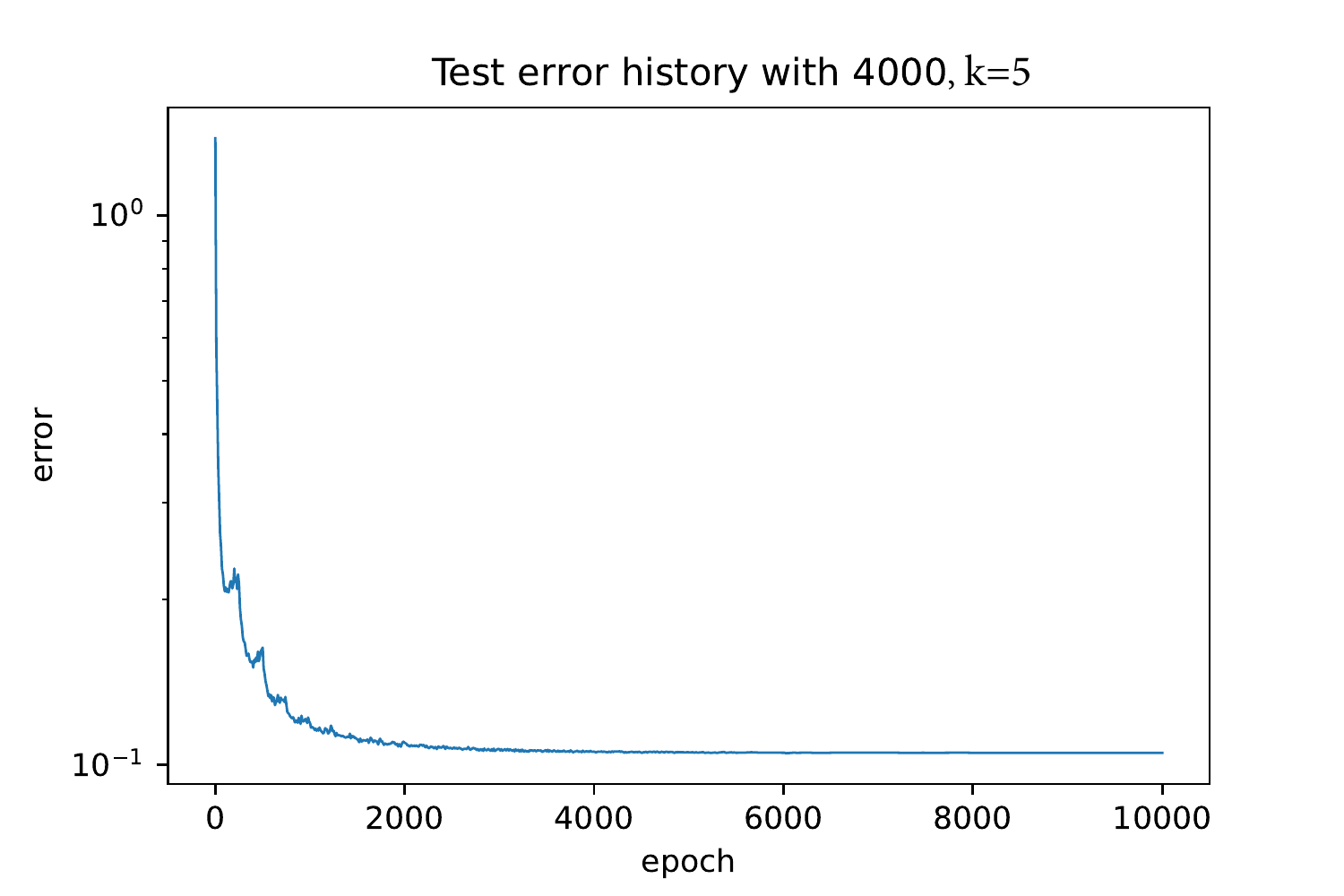}
		\caption{Validation error history for $\sigma(p)=\text{ Hat}(100p)$ with $k=2,3,4,5$.}
		\label{2DHat}
\end{figure}

From the Figure \ref{2DReLU}, we can see that the profile of the loss curves varies significantly with the frequency of noise added to the target when ReLU neural network is used. This is explained by the fact that the ReLU neural network readily fits the noise signal if it is low frequency, whereas the higher frequency noise is only fit later in the training. In the latter case, the dip in validation score early in the training is when the network has learned the low frequency true target function $u_0(\boldsymbol x)$; the remainder of the training is spent learning the higher-frequencies in the training target $u(\boldsymbol x)$. When the frequency is higher, the ReLU neural network fits the target function slower indicated by the loss curves.  From the Figure \ref{2DHat}, we can see that the profile of the loss curves are very much the same with respect to the frequency of noise added to the target when Hat neural network is used. This is explained by the fact that Hat neural network does not have frequency bias. 
\end{experiment}

\begin{experiment}\label{3Dcase1}
\normalfont
In this experiment, we investigate how the validation performance depends on the frequency of noise added to the training target in three dimension case. 
We consider the target function 
\begin{equation}
u_0(\boldsymbol x)=\sin(2\pi x_1) \sin(2\pi x_2) \sin(2\pi x_3),
\end{equation}
where $\boldsymbol x=(x_1,x_2,x_3)\in [0,1]^3$. 
Let $\psi_k(\boldsymbol x)$ be the noise function
\begin{equation*}
\psi_k(\boldsymbol x)=0.5 \sin(2k\pi x_1)  \sin(2k\pi x_2) \sin(2k\pi x_3),
\end{equation*}
where $k$ is the frequency of the noise. The final target function $u$ is then given by $u(\boldsymbol x)=u_0(\boldsymbol x)+\psi_k(\boldsymbol x)$. 

We use shallow neural network models
with two different activation functions to fit $u(\boldsymbol x)$. 
One is $\sigma(p)=\text{ReLU}(p)$, and the other one is the scaled Hat function $\sigma(p)={\rm  Hat}(100p)$.
Both two models have only one hidden layer with size 3-30000-1.

The training MSE loss function is computed by 100000 sampling points from the uniform distribution $\mathcal U([0,1]^2)$.   

The validation error is computed by
\begin{equation}\label{test3dloss}
L_2(f,u) = \left(\frac1m\sum_{i=1}^{m} (f(\boldsymbol x_i)-u_0(\boldsymbol x_i))^2\right)^{\frac12},
\end{equation}
where $\{\boldsymbol x_i\}^{m}_{i=1}, m=100000$ are sampling points from the uniform distribution $\mathcal U([0,1]^3)$. 
The ReLU neural network are trained by Adam optimizer with learning rate of 0.001 and decreasing to its $0.85$ for each 300 epochs. 
The Hat neural network are trained with learning rate of 0.001 and decreasing to its $0.75$ for each 250 epochs. All parameters are 
initialized following a uniform distribution $\mathcal U(-0.3,0.3)$. 

\begin{figure*}[h]
		\centering		
		\includegraphics[scale=0.24]{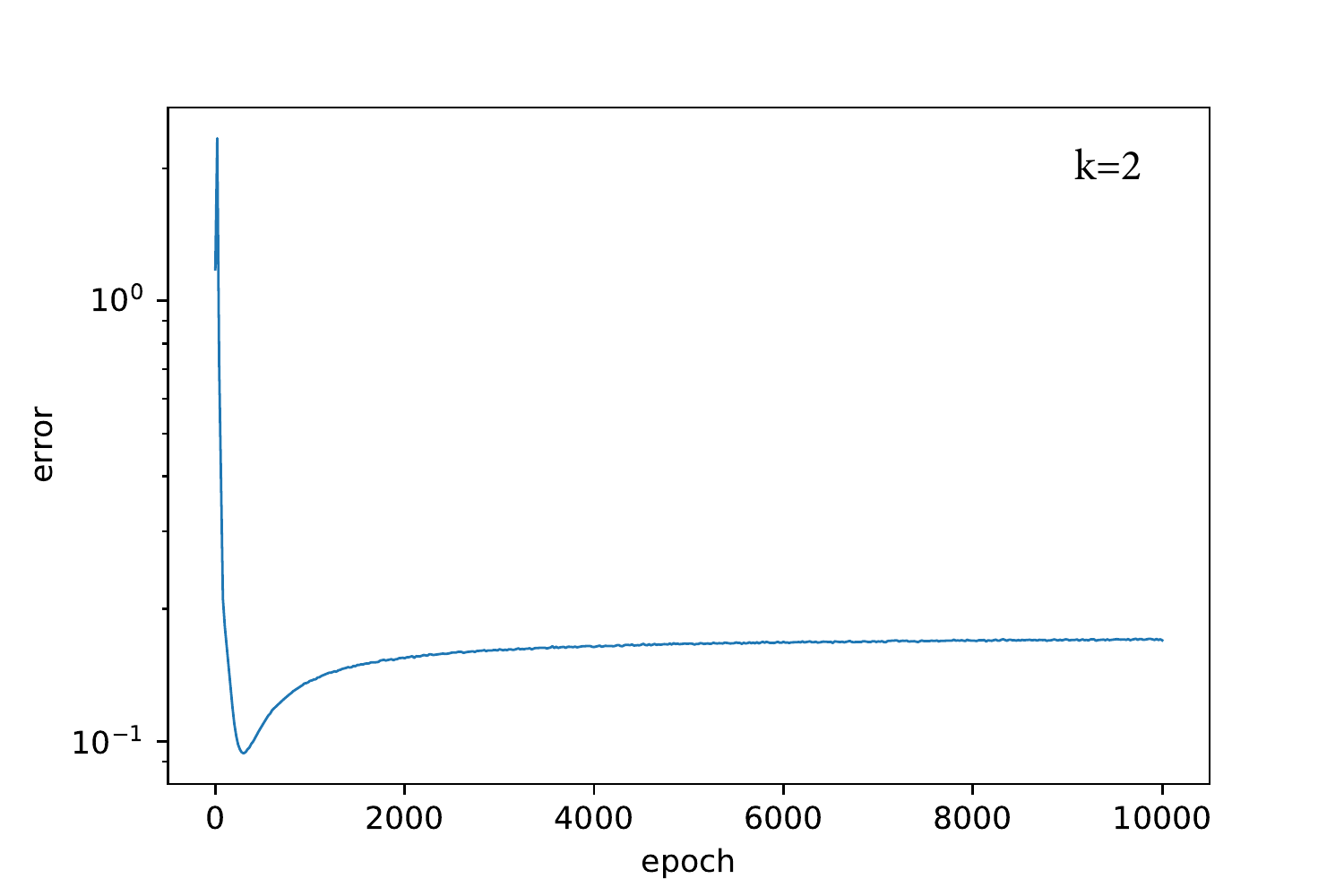}\hspace{-11pt}
		\includegraphics[scale=0.24]{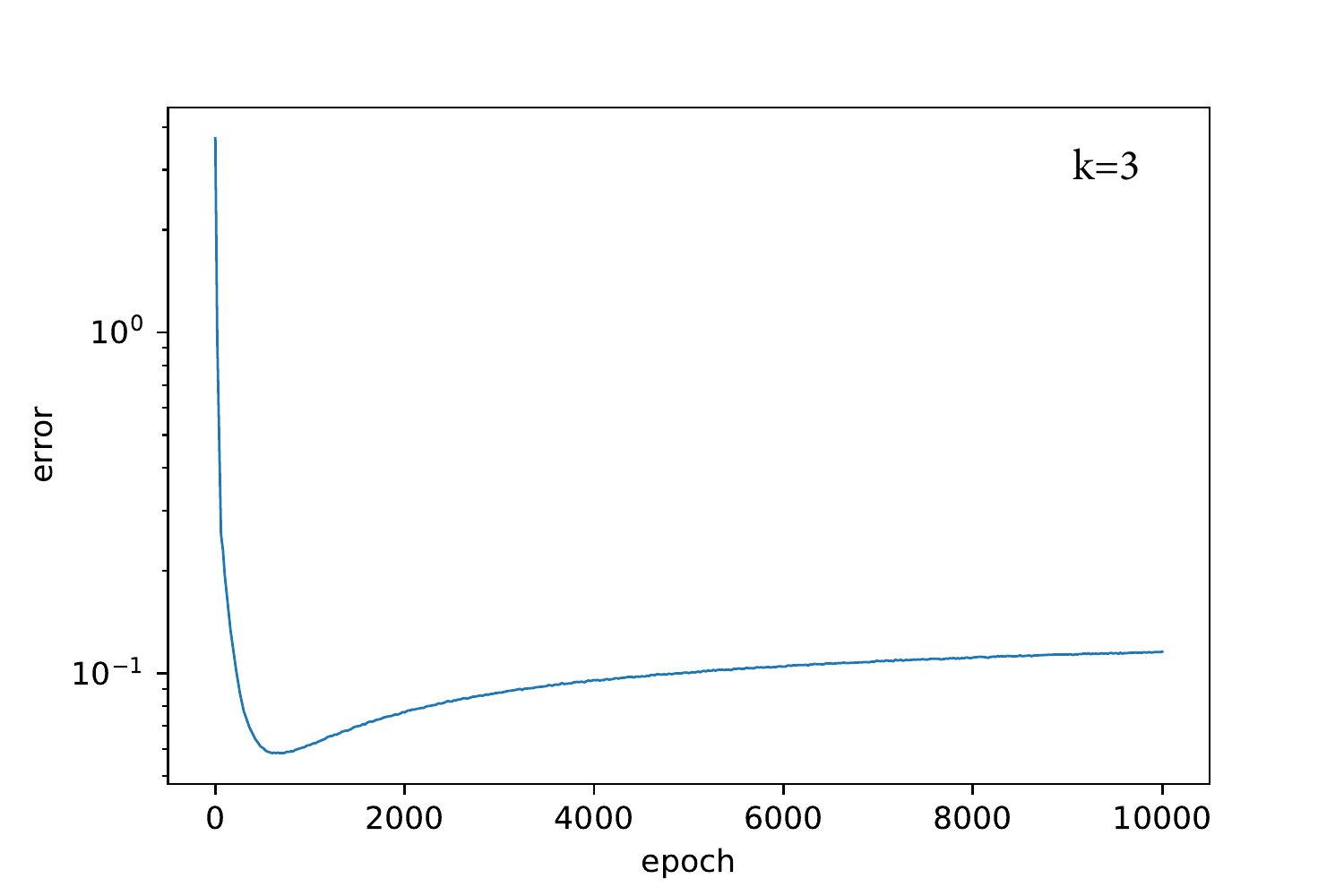}\hspace{-11pt}
		\includegraphics[scale=0.24]{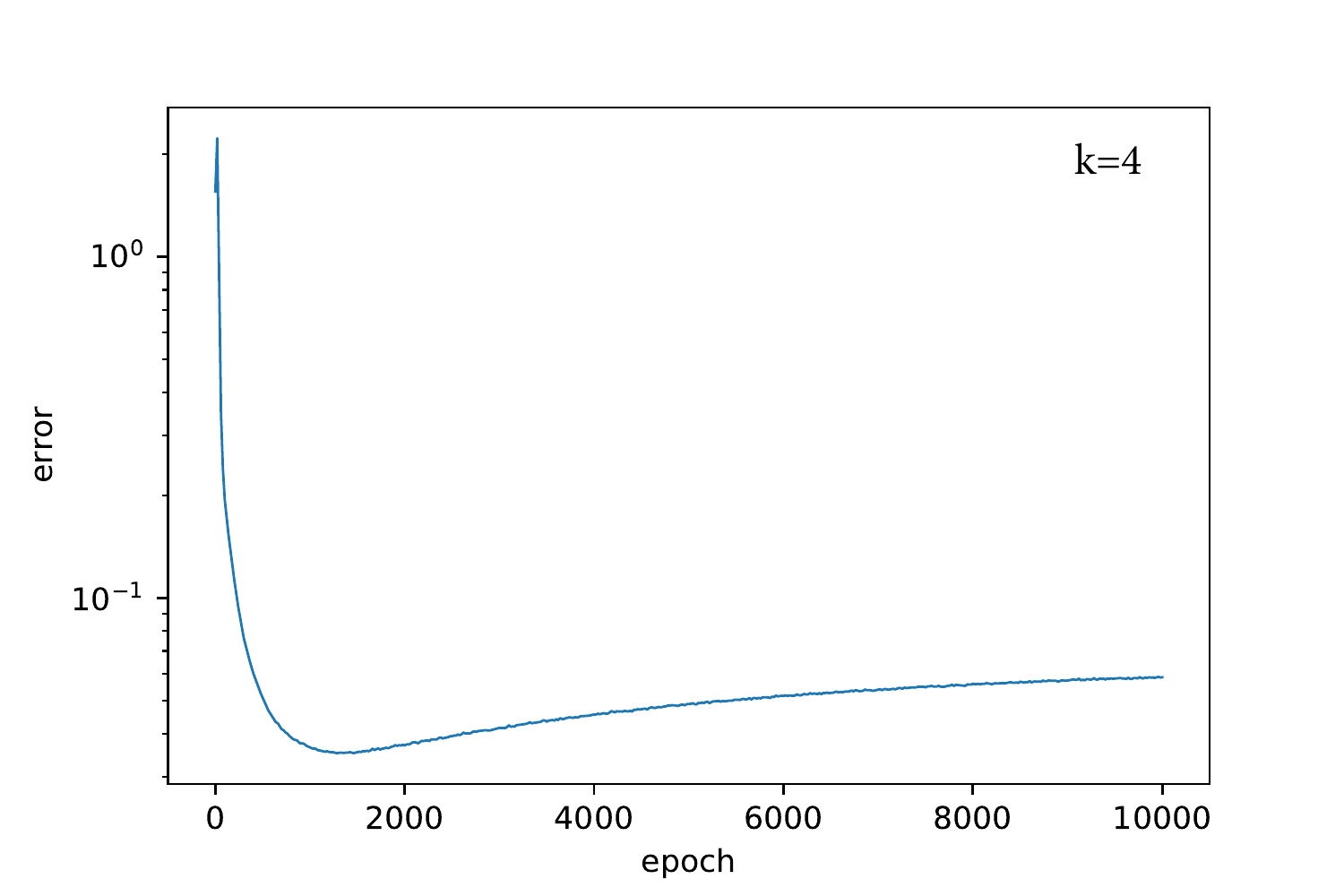}\hspace{-11pt}
		\includegraphics[scale=0.24]{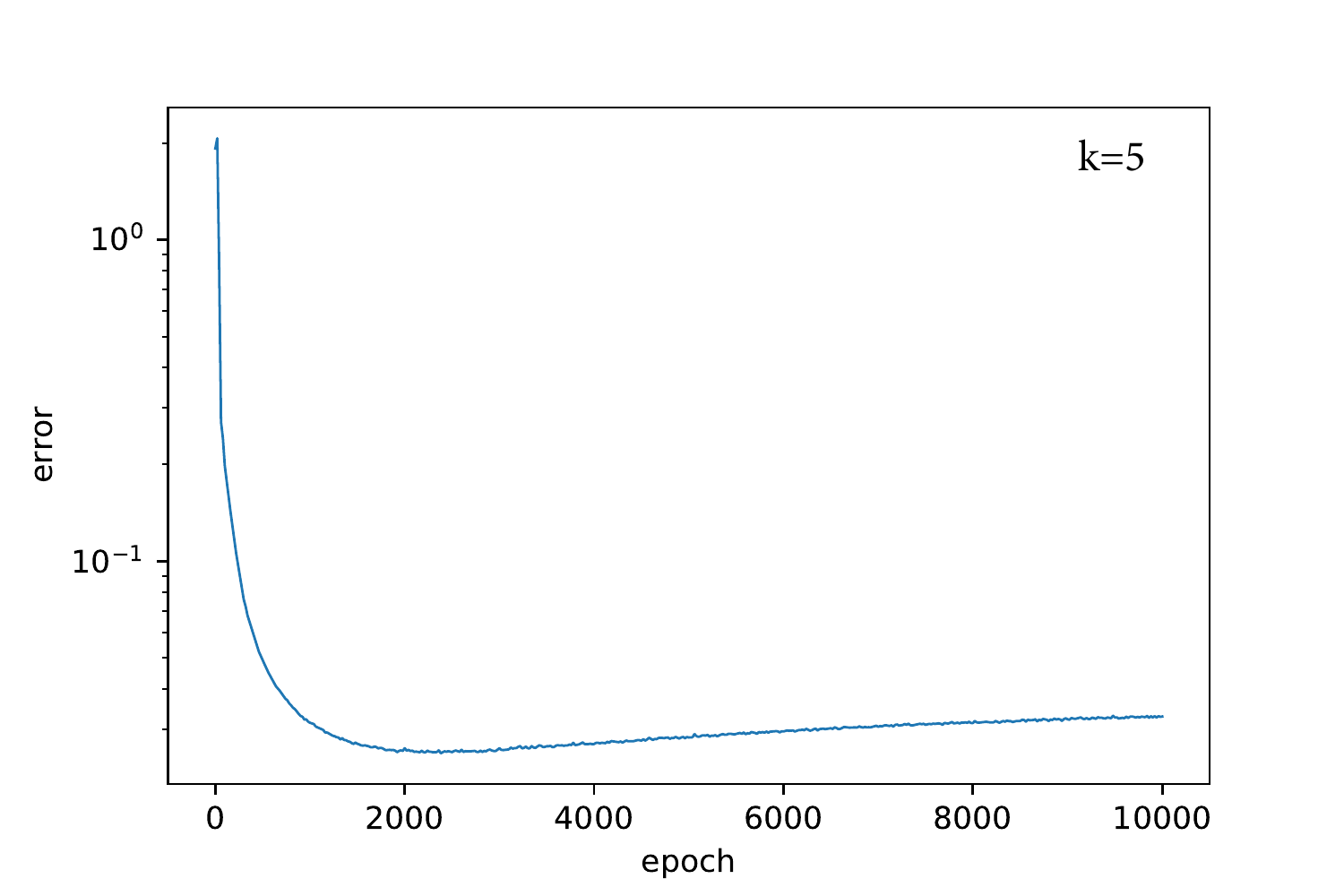}
		\caption{Validation error history for $\sigma(p)=\text{ReLU}(p)$ with $k=2,3,4,5$.}
		\label{3DReLU}
\end{figure*}
\vspace{-18pt}
\begin{figure*}[!htbp]
		\centering		
		\includegraphics[scale=0.24]{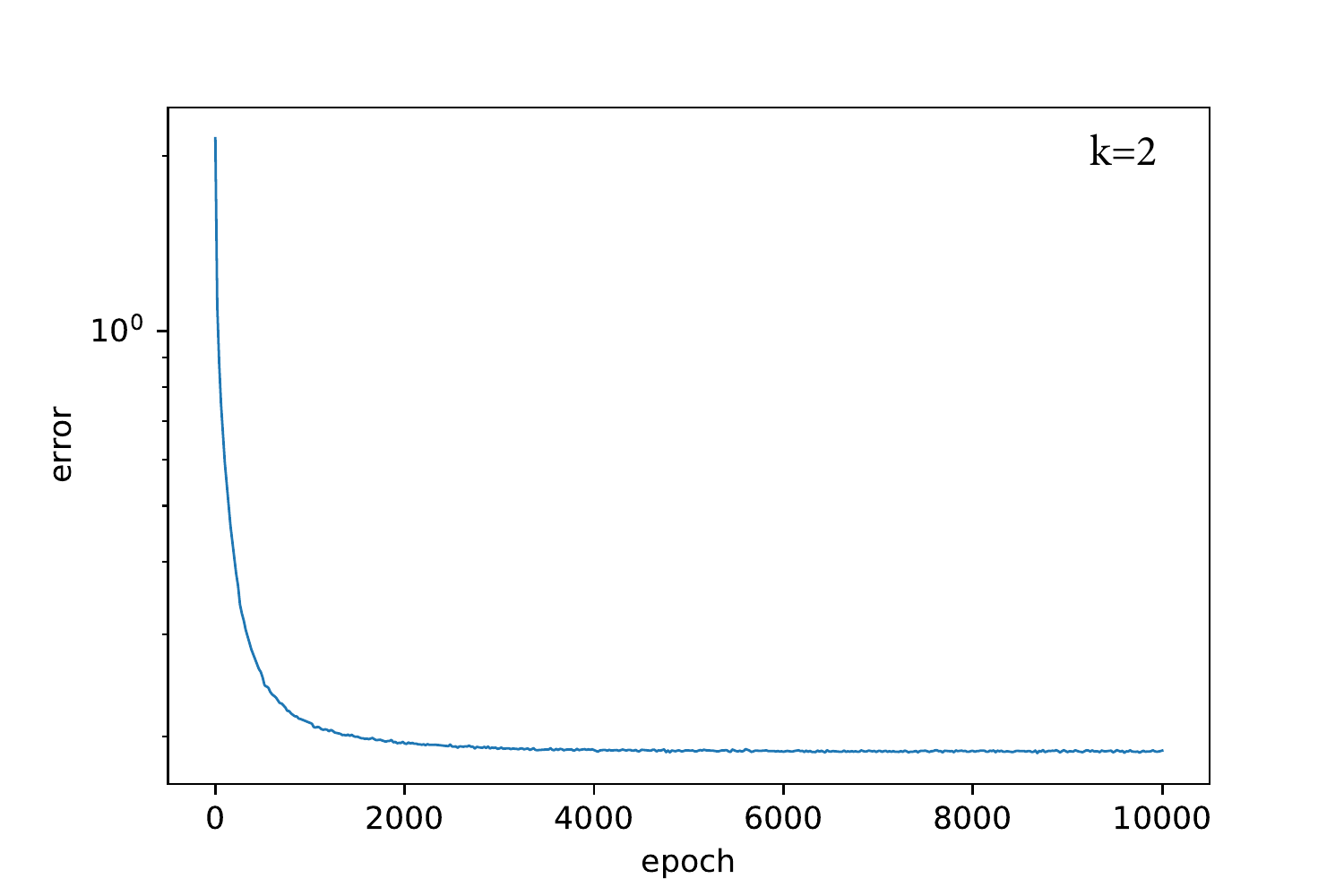}\hspace{-11pt}
		\includegraphics[scale=0.24]{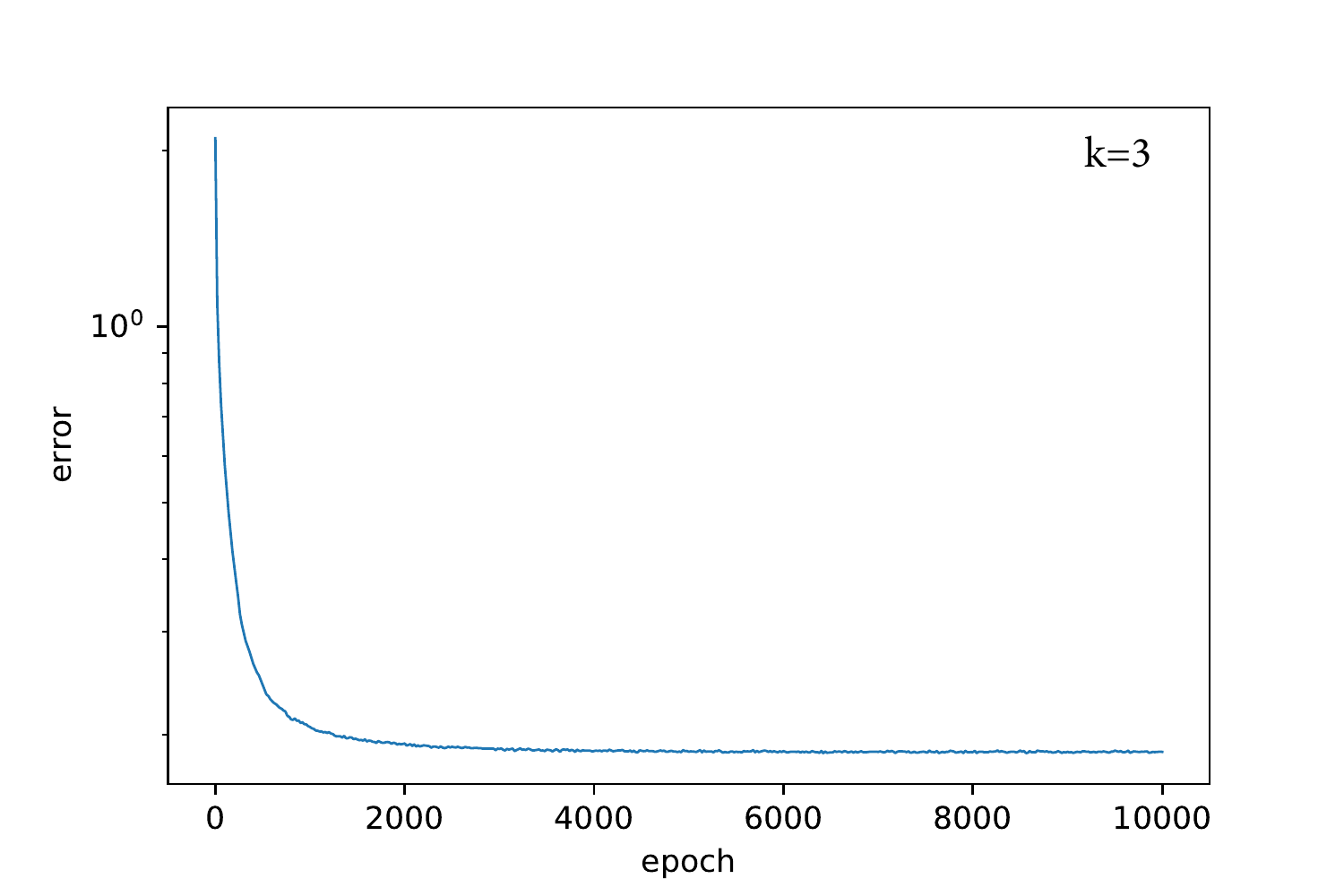}\hspace{-11pt}
		\includegraphics[scale=0.24]{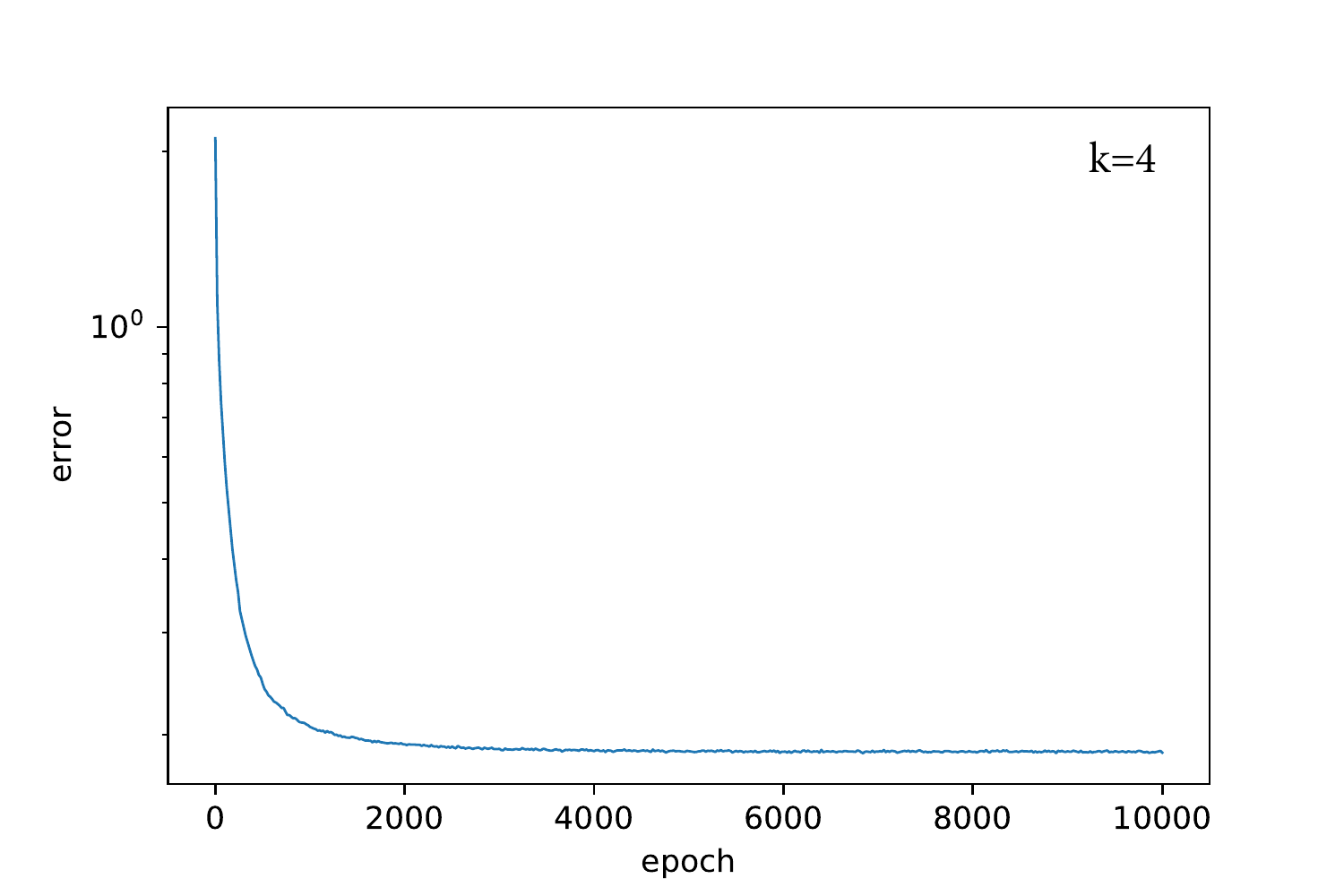}\hspace{-11pt}
		\includegraphics[scale=0.24]{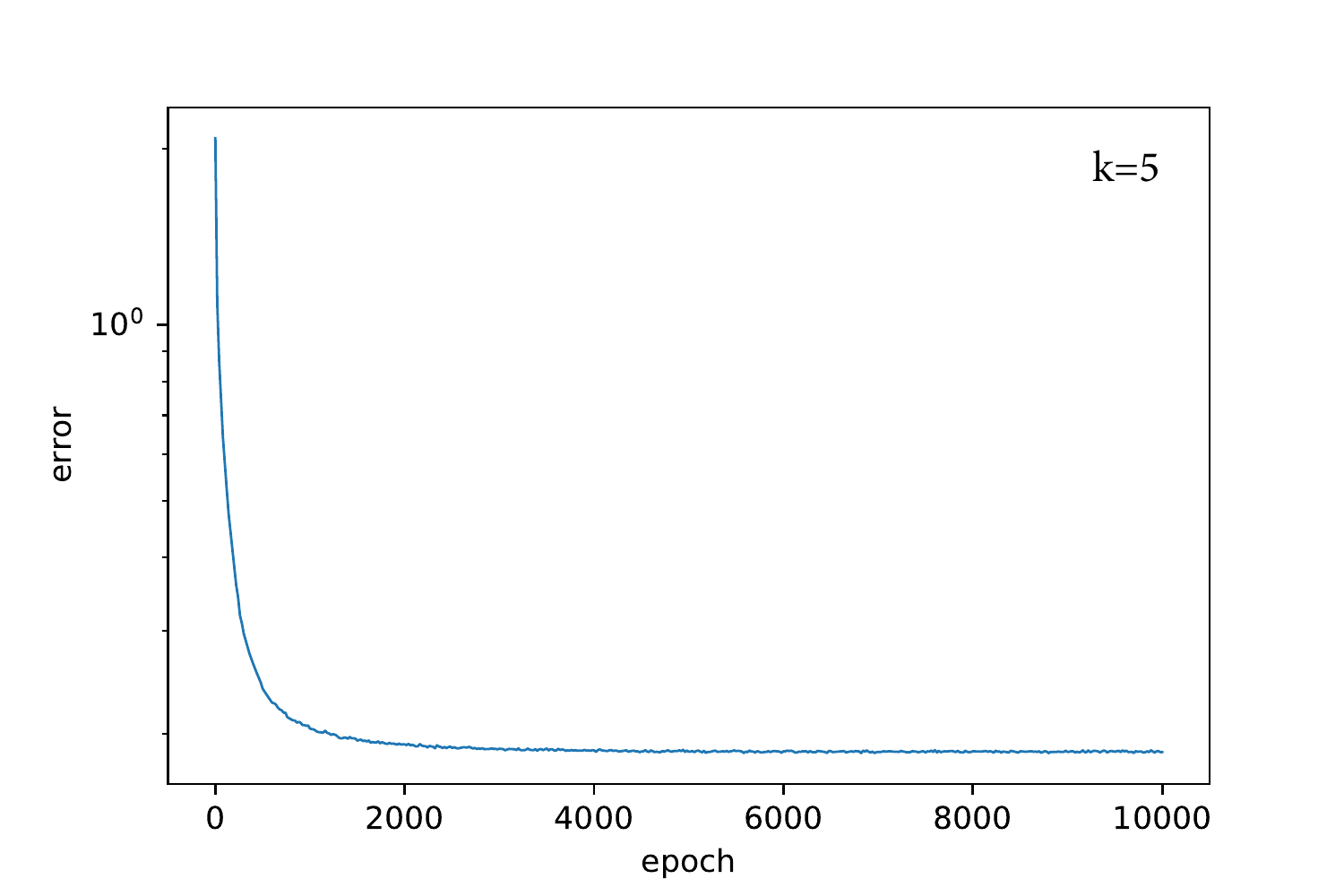}
		\caption{Validation error history for $\sigma(p)=\text{ Hat}(100p)$ with $k=2,3,4,5$.}
		\label{3DHat}
\end{figure*}
From the Figure \ref{3DReLU} and Figure \ref{3DHat}, we can see that the numerical results are similar to the two dimension case as shown in Experiment \ref{2Dinde-ex}. 
\end{experiment}

\begin{experiment}\label{3Dcase2}
\normalfont
In this experiment, we investigate how the validation performance depends on the frequency of noise added to the training target in three dimension case. 
We consider the ground target function 
\begin{equation}
u_0(\boldsymbol x)=\sin(2\pi x_1) \sin(2\pi x_2) \sin(2\pi x_3).
\end{equation}
Let $\psi_{1,k}(\boldsymbol x)$ and $\psi_{2,k}(\boldsymbol x)$ be the noise functions
\begin{equation*}
\psi_{1,k}(\boldsymbol x)=0.5  \sin(2\pi k\|\boldsymbol x\|), ~ \psi_{2,k}(\boldsymbol x)=  \frac{0.5 \sin(2\pi k\|\boldsymbol x\|)}{\|\boldsymbol x\|},
\end{equation*}
where $k$ is the frequency of the noise. The final target function $u(\boldsymbol x)$ is then given by $u(\boldsymbol x)=u_0(\boldsymbol x)+\psi_{j,k}(\boldsymbol x),~j=1,2$. 

We use models 
$$
f(\boldsymbol x) = W_2 \sigma \left(W_1 \boldsymbol x
 + \boldsymbol b_1\right)
$$ 
with two different activation functions to fit $u(\boldsymbol x)$. 
One is $\sigma(p)=\text{ReLU}(p)$, and the other one is the scaled Hat function $\sigma(p)={\rm Hat}(100p)$.
Both two models have only one hidden layer with size 3-30000-1.
The training error is computed by 
\begin{equation}
L_1(f,u) = \left(\frac1N\sum_{i=1}^{N} (f(\boldsymbol x_i)-u(\boldsymbol x_i))^2\right)^{\frac12},
\end{equation}
where $\{\boldsymbol x_i\}^{n}_{i=1},N=100000$ are sampling points from the uniform distribution $\mathcal U([0,1]^3)$.   

The validation error is computed by
\begin{equation}
L_2(f,u) = \left(\frac1m\sum_{i=1}^{m} (f(\boldsymbol x_i)-u_0(\boldsymbol x_i))^2\right)^{\frac12},
\end{equation}
where $\{\boldsymbol x_i\}^{m}_{i=1}, m=100000$ are sampling points from the uniform distribution $\mathcal U([0,1]^3)$. 
Both models are trained by Adam optimizer with a learning rate of 0.001 and decreasing to its $0.85$ for each 300 epochs. All parameters are 
initialized following a uniform distribution $\mathcal U(-0.3,0.3)$. 

\begin{figure*}[!htbp]
		\centering		
		\includegraphics[scale=0.24]{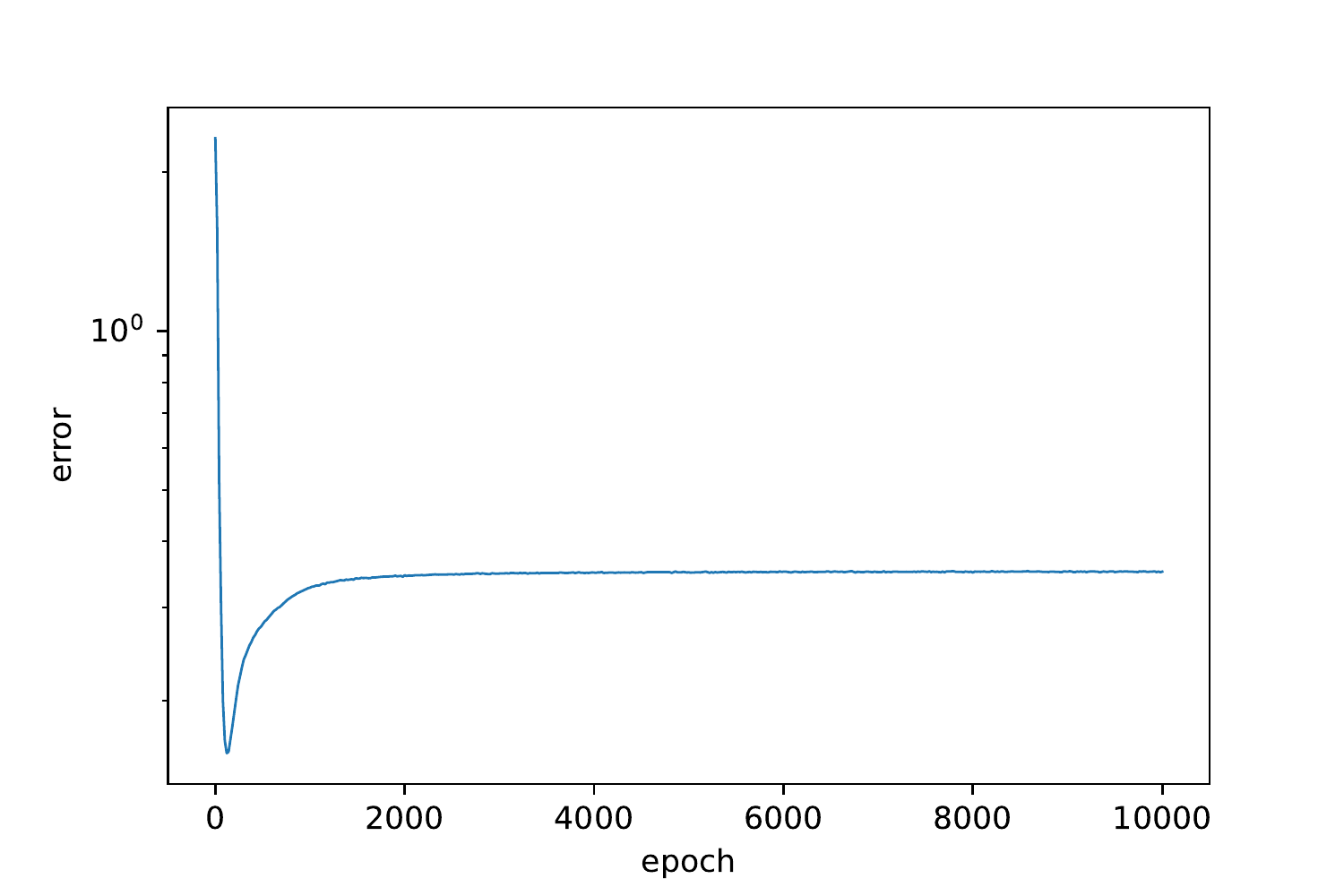}\hspace{-11pt}
		\includegraphics[scale=0.24]{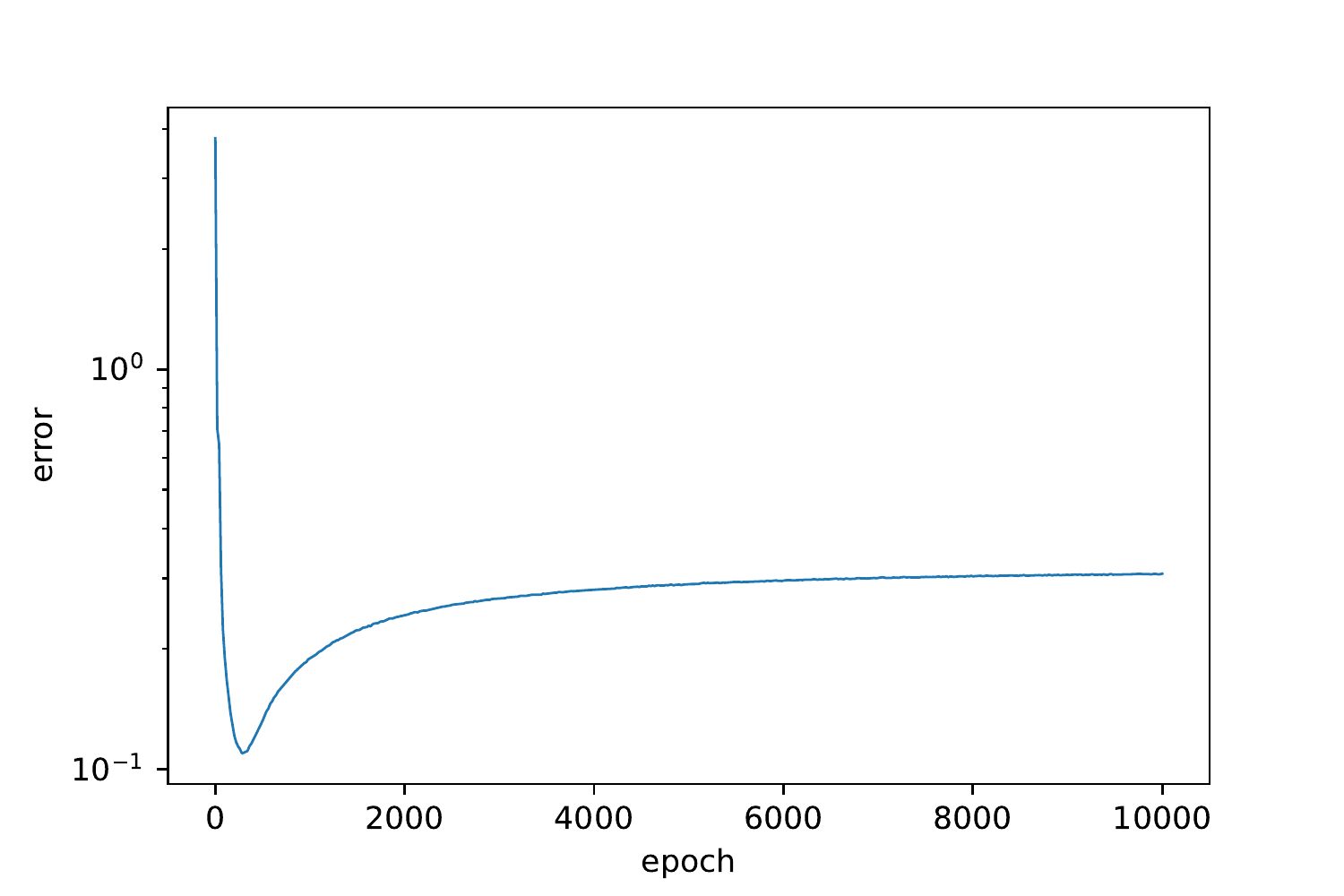} \hspace{-11pt}
		\includegraphics[scale=0.24]{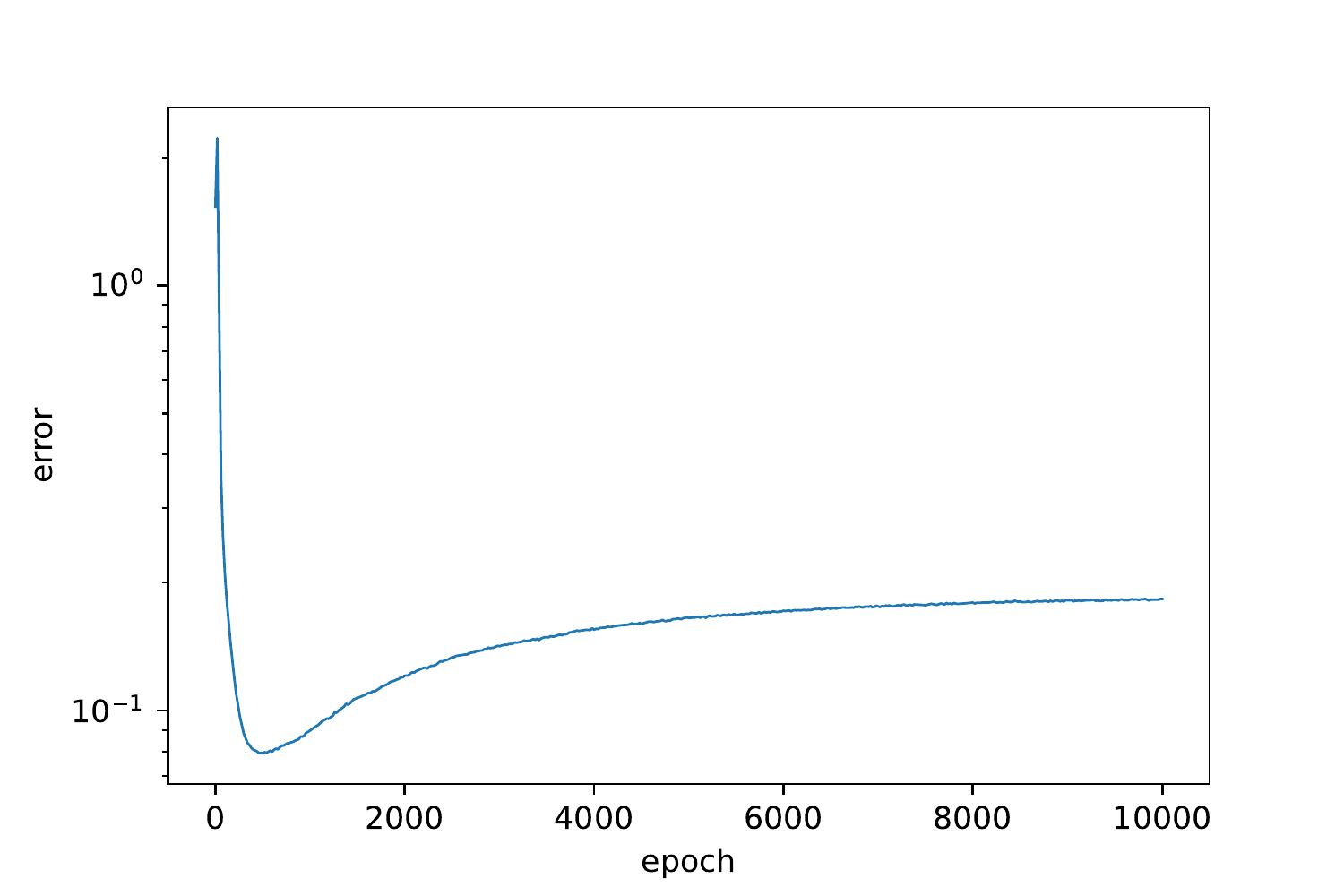}\hspace{-11pt}
		\includegraphics[scale=0.24]{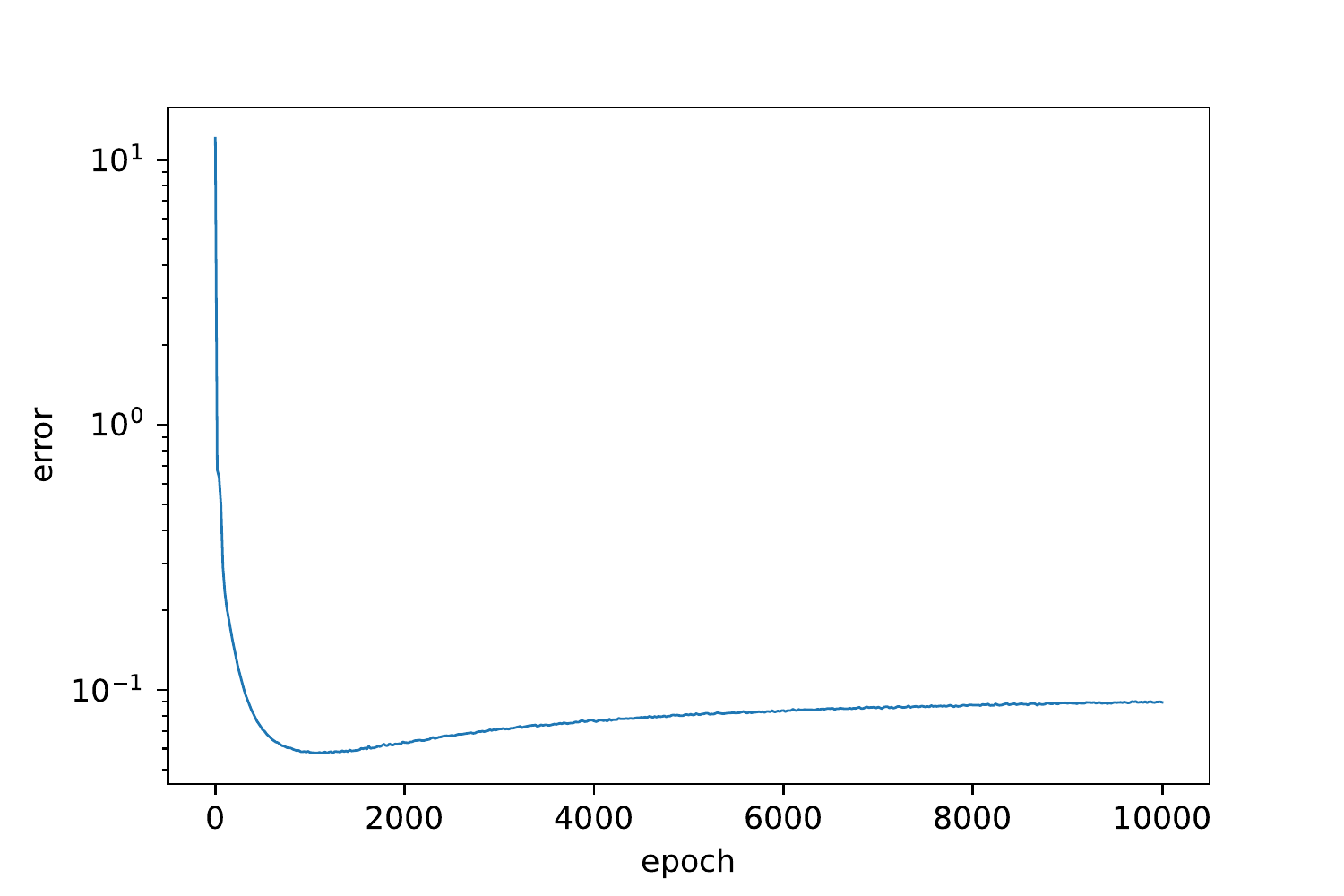}
		\caption{Validation error history for $\sigma(p)=\text{ReLU}(p)$ with $\psi_{1,k}(\boldsymbol x)$ and $k=2,3,4,5$.}
		\label{3DReLUBen}
\end{figure*}

\begin{figure*}[!htbp]
		\centering		
		\includegraphics[scale=0.24]{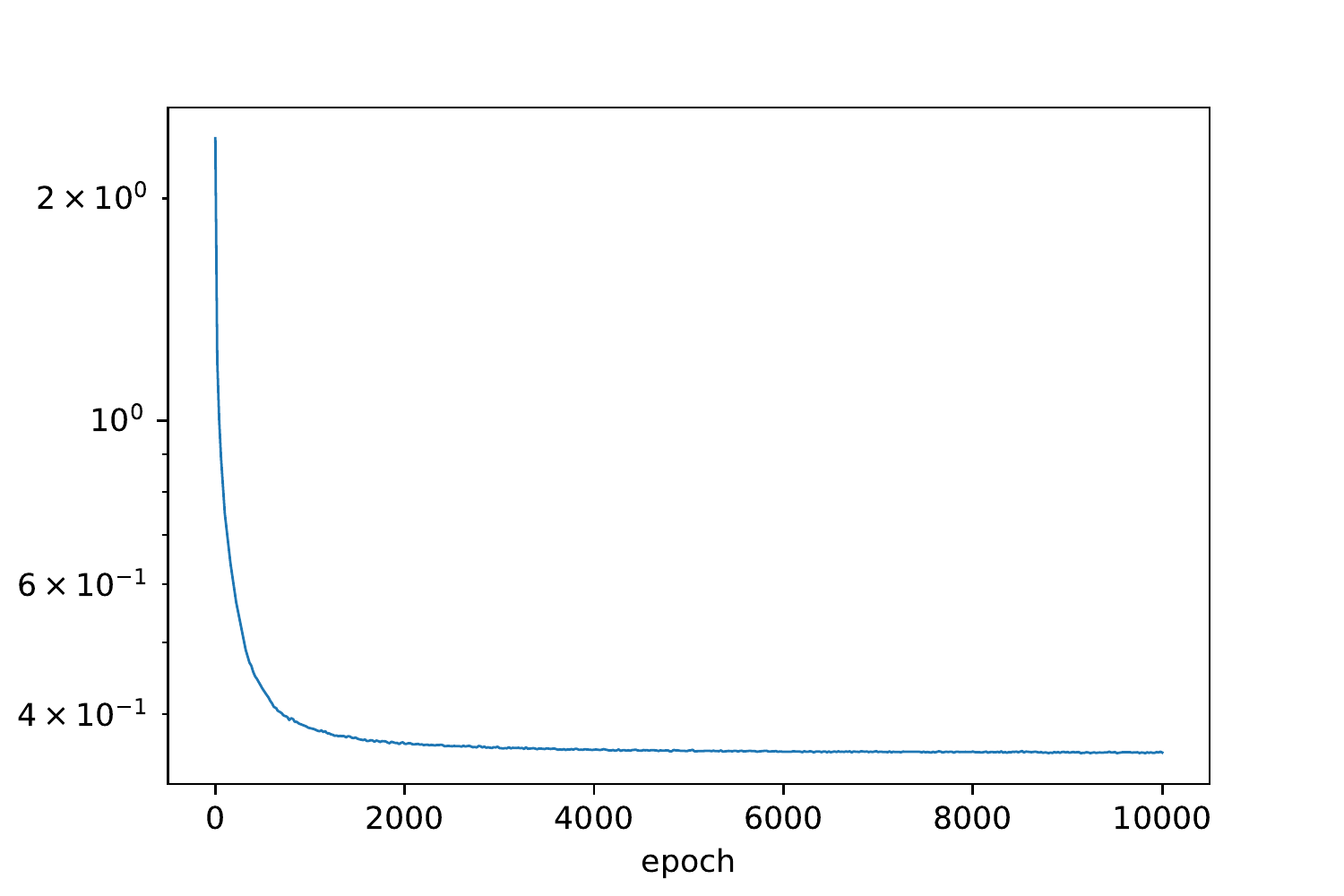}\hspace{-11pt}
		\includegraphics[scale=0.24]{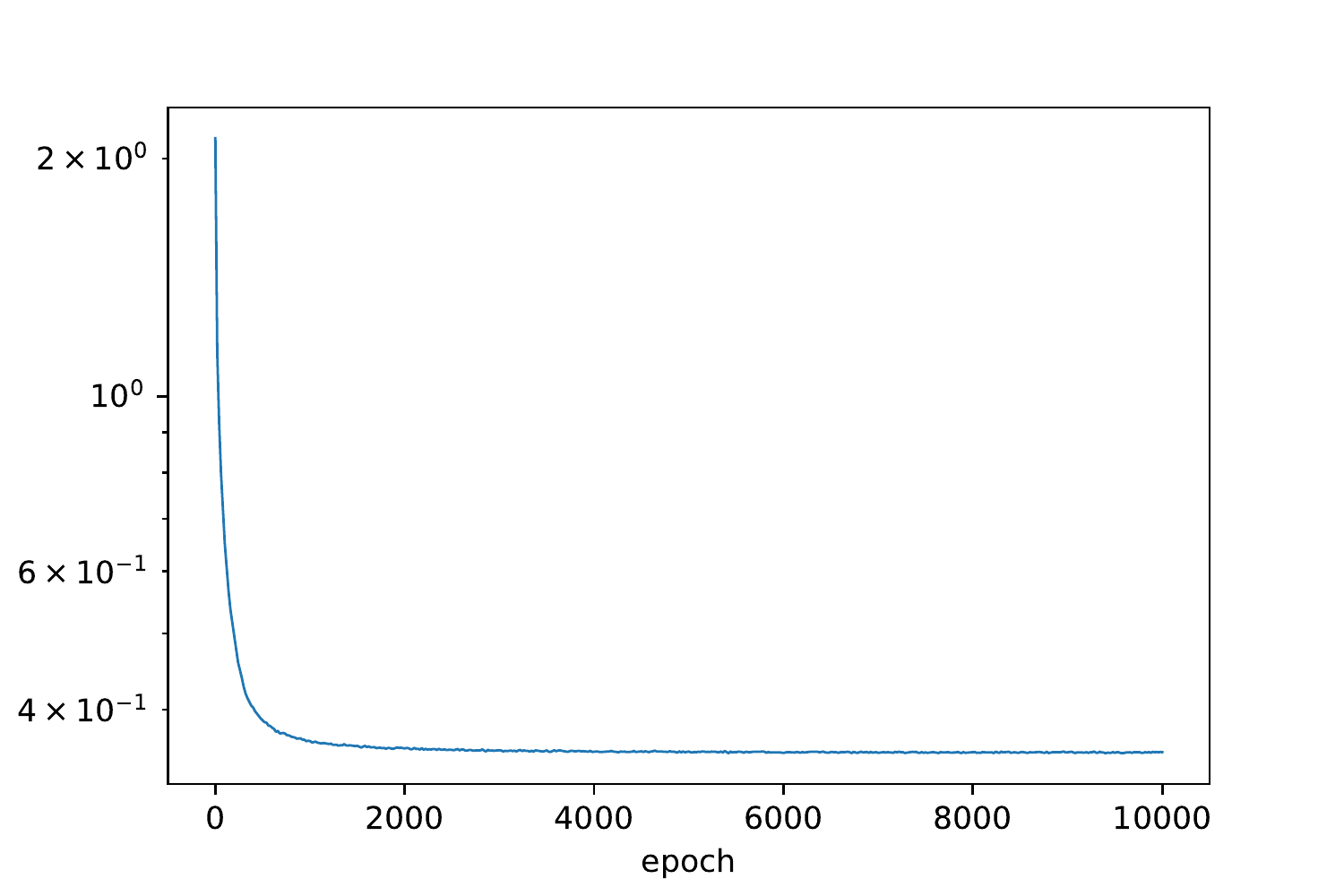} \hspace{-11pt}
		\includegraphics[scale=0.24]{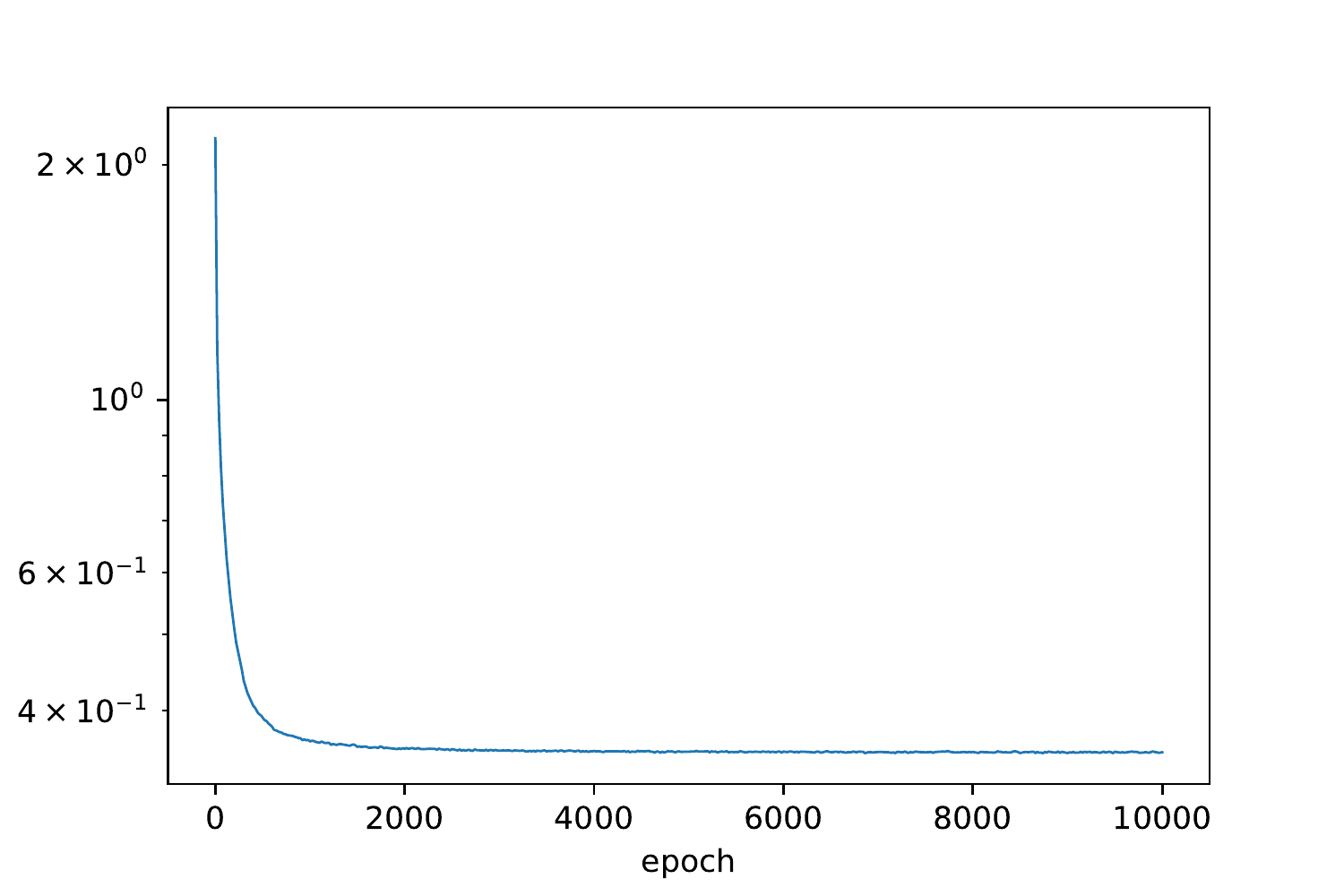}\hspace{-11pt}
		\includegraphics[scale=0.24]{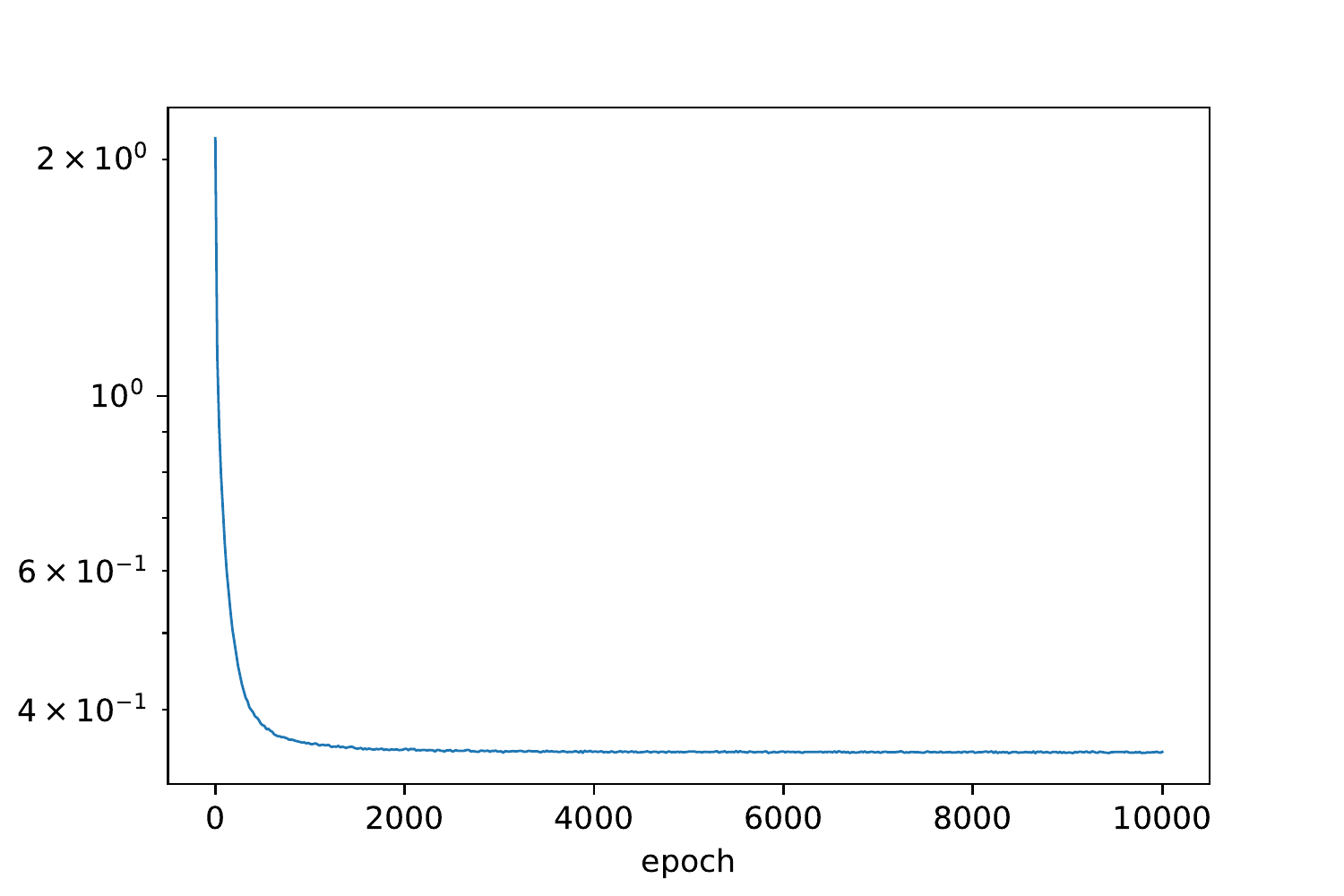}
		\caption{Validation error history for $\sigma(p)=\text{Hat}(100p)$ with $\psi_{1,k}(\boldsymbol x)$ and $k=2,3,4,5$.}
		\label{3DHatBen}
\end{figure*}

\begin{figure*}[!htbp]
		\centering		
		\includegraphics[scale=0.24]{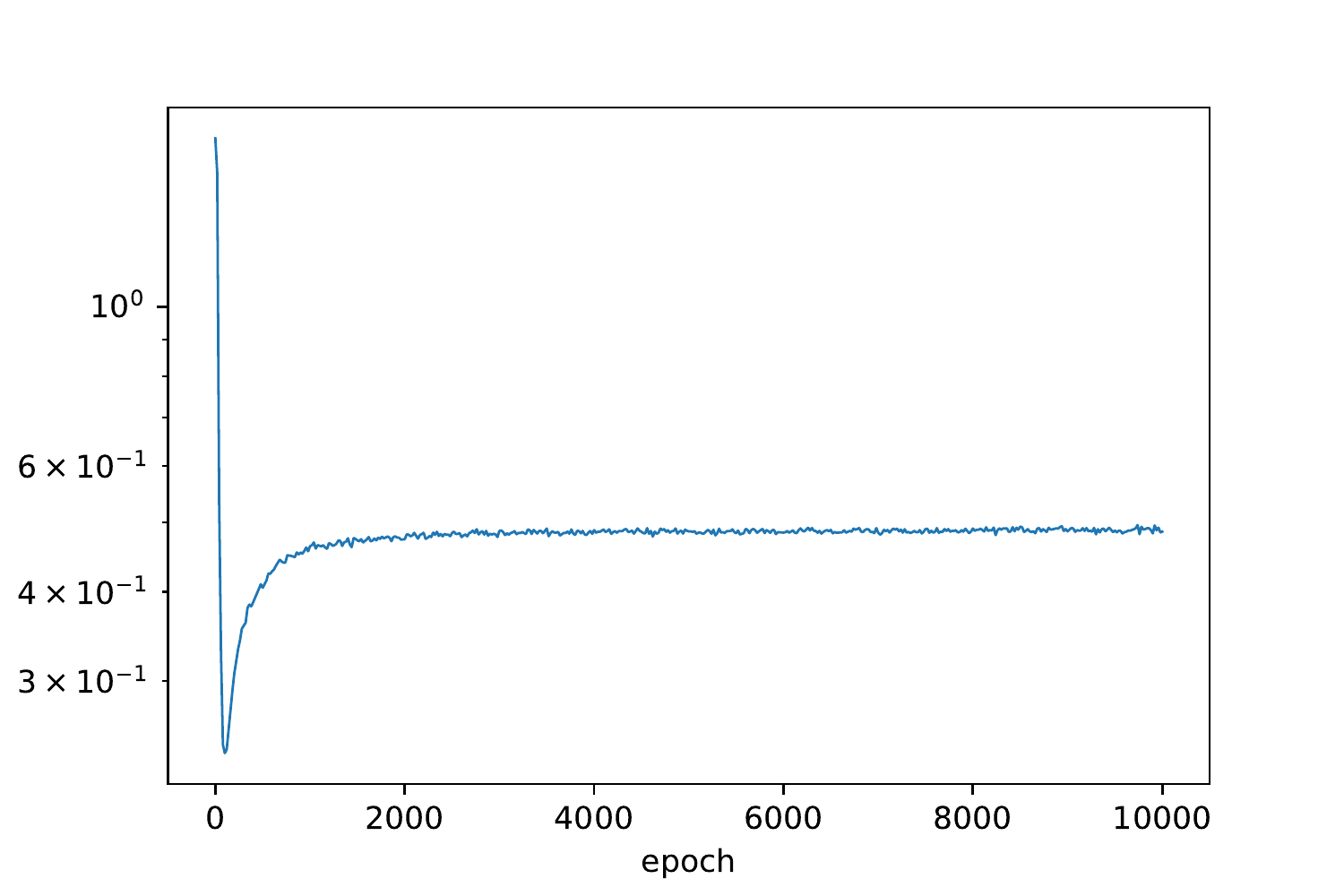}\hspace{-11pt}
		\includegraphics[scale=0.24]{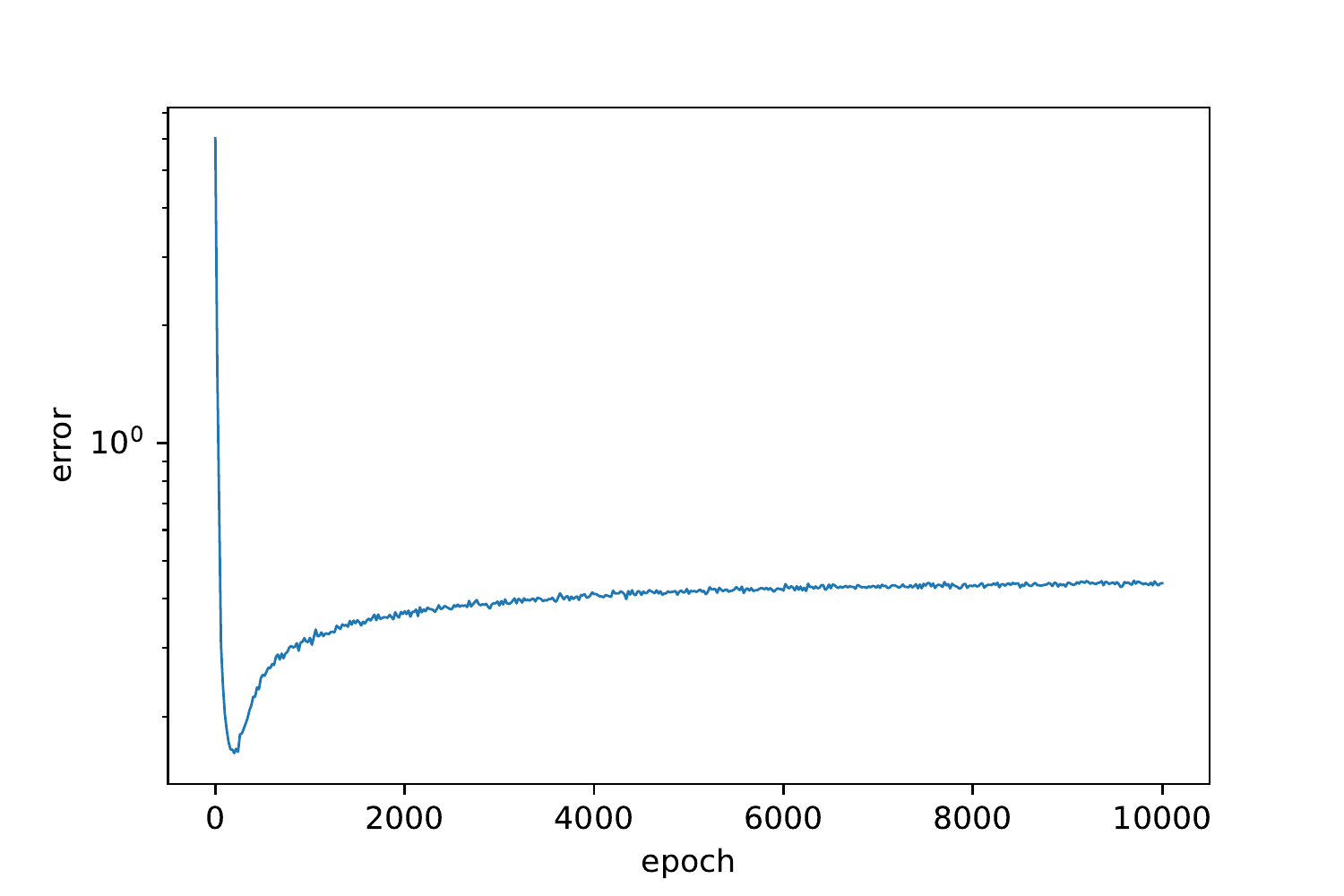}\hspace{-11pt}
		\includegraphics[scale=0.24]{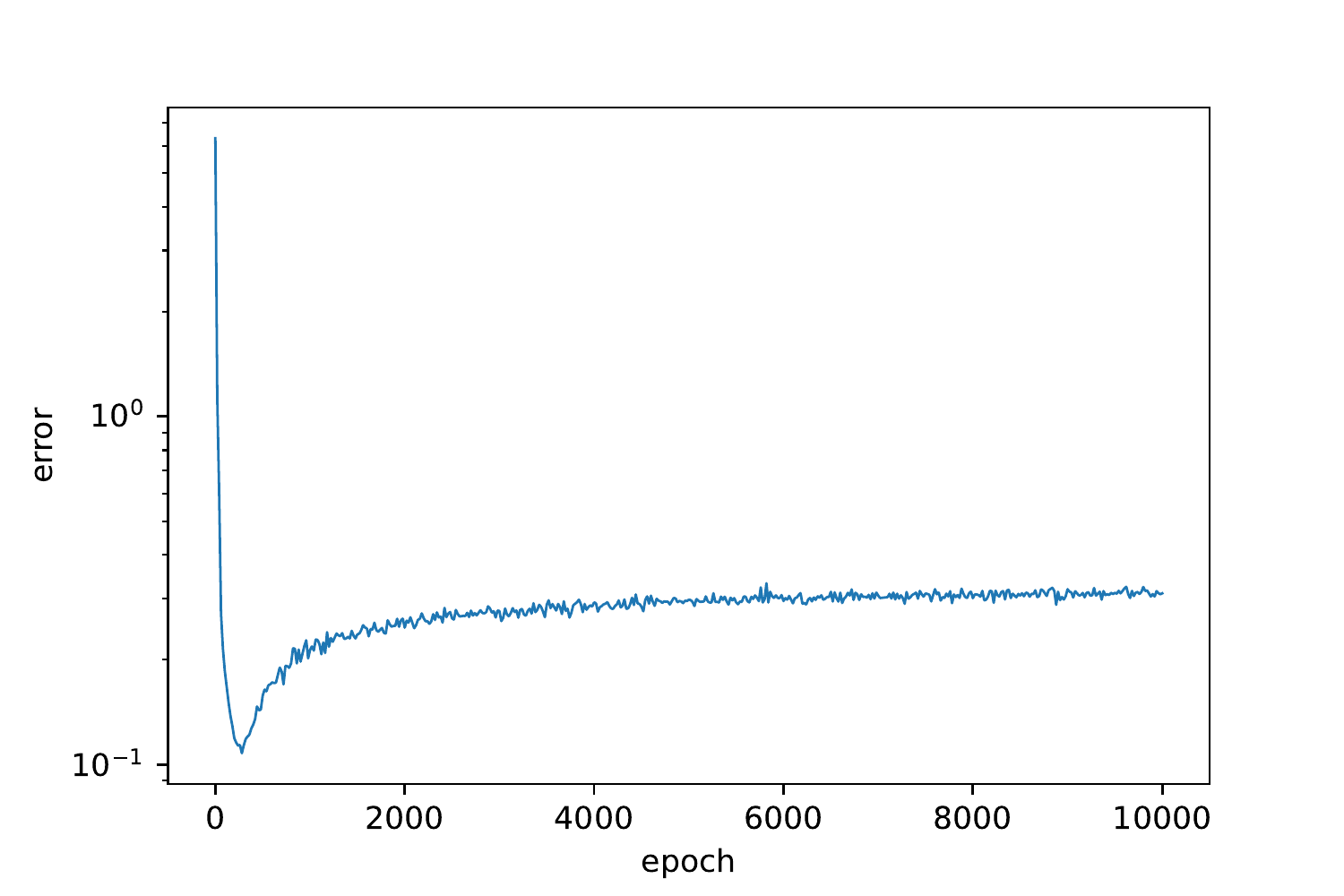}\hspace{-11pt}
		\includegraphics[scale=0.24]{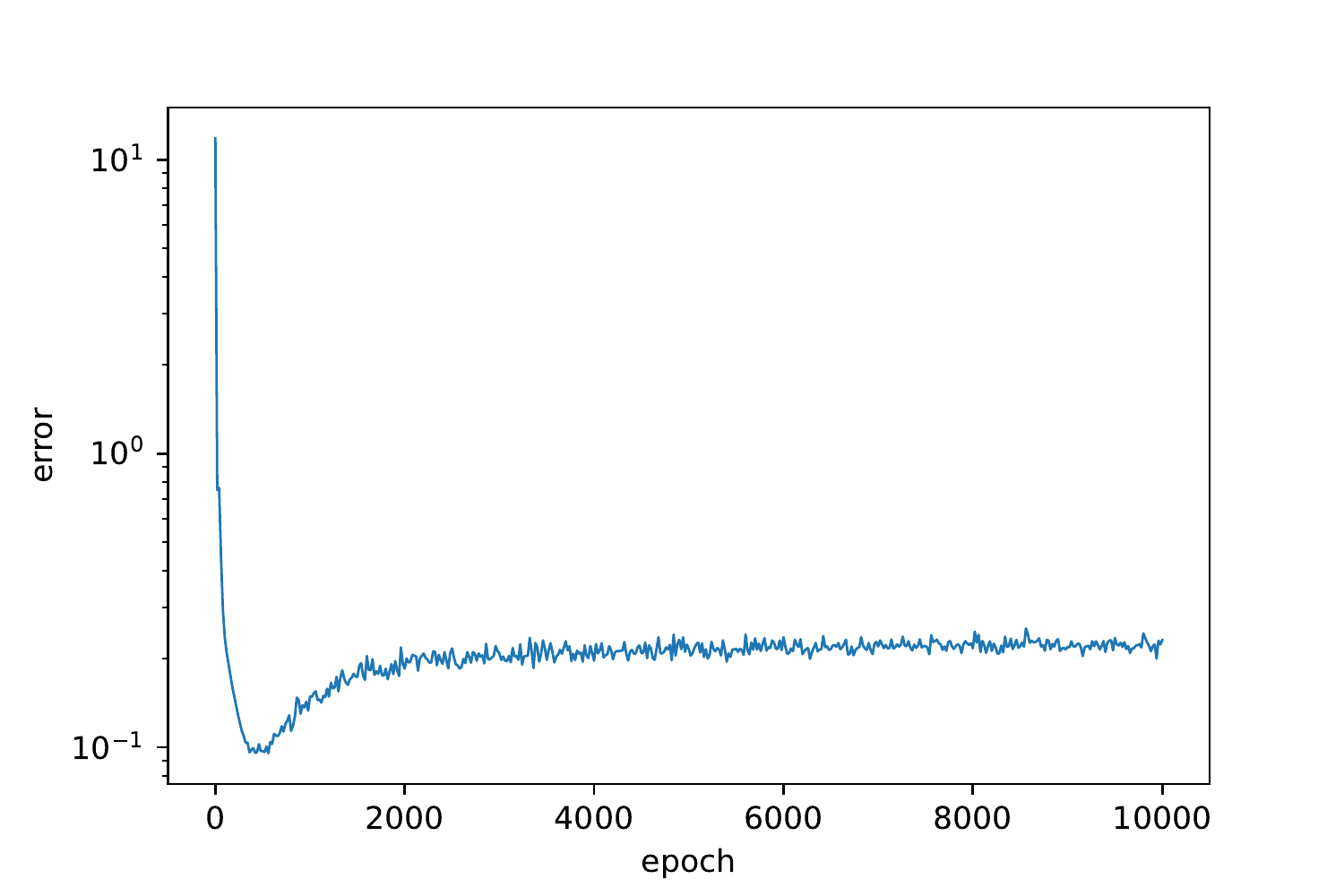}
		\caption{Validation error history for $\sigma(p)=\text{ReLU}(p)$ with $\psi_{2,k}(\boldsymbol x)$ and $k=2,3,4,5$.}
		\label{3DReLUBen1}
\end{figure*}

\begin{figure*}[!htbp]
		\centering		
		\includegraphics[scale=0.24]{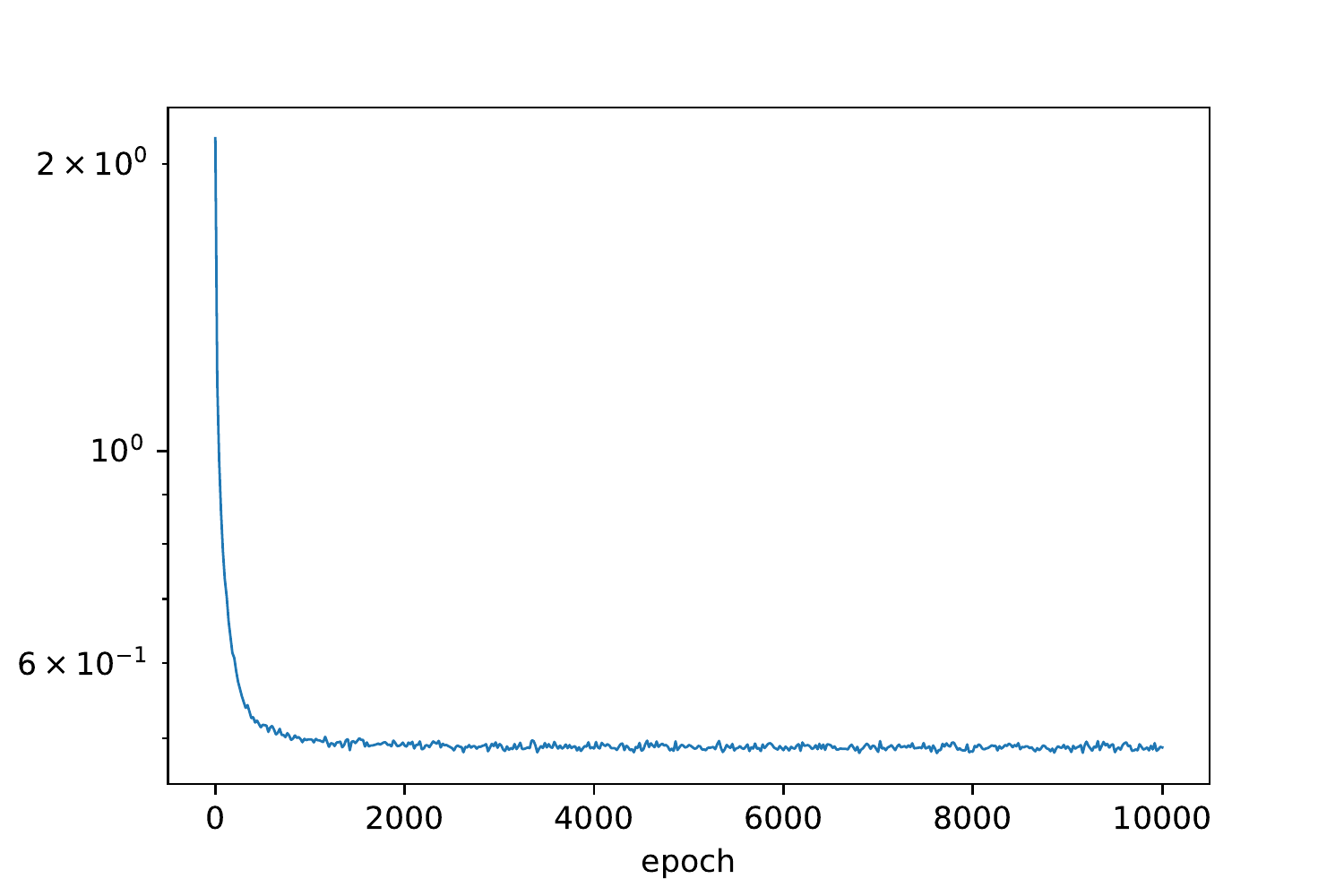}\hspace{-11pt}
		\includegraphics[scale=0.24]{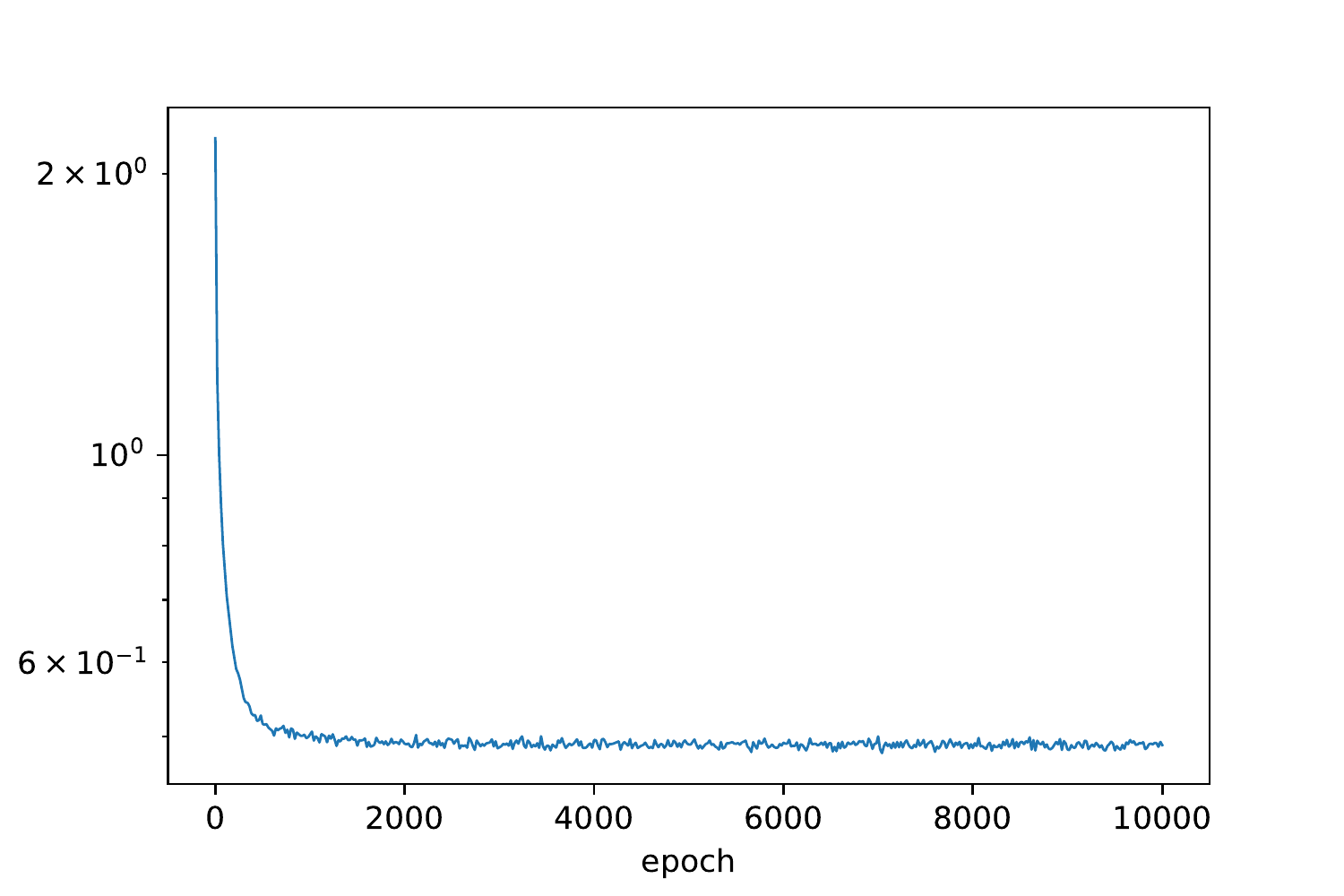} \hspace{-11pt}
		\includegraphics[scale=0.24]{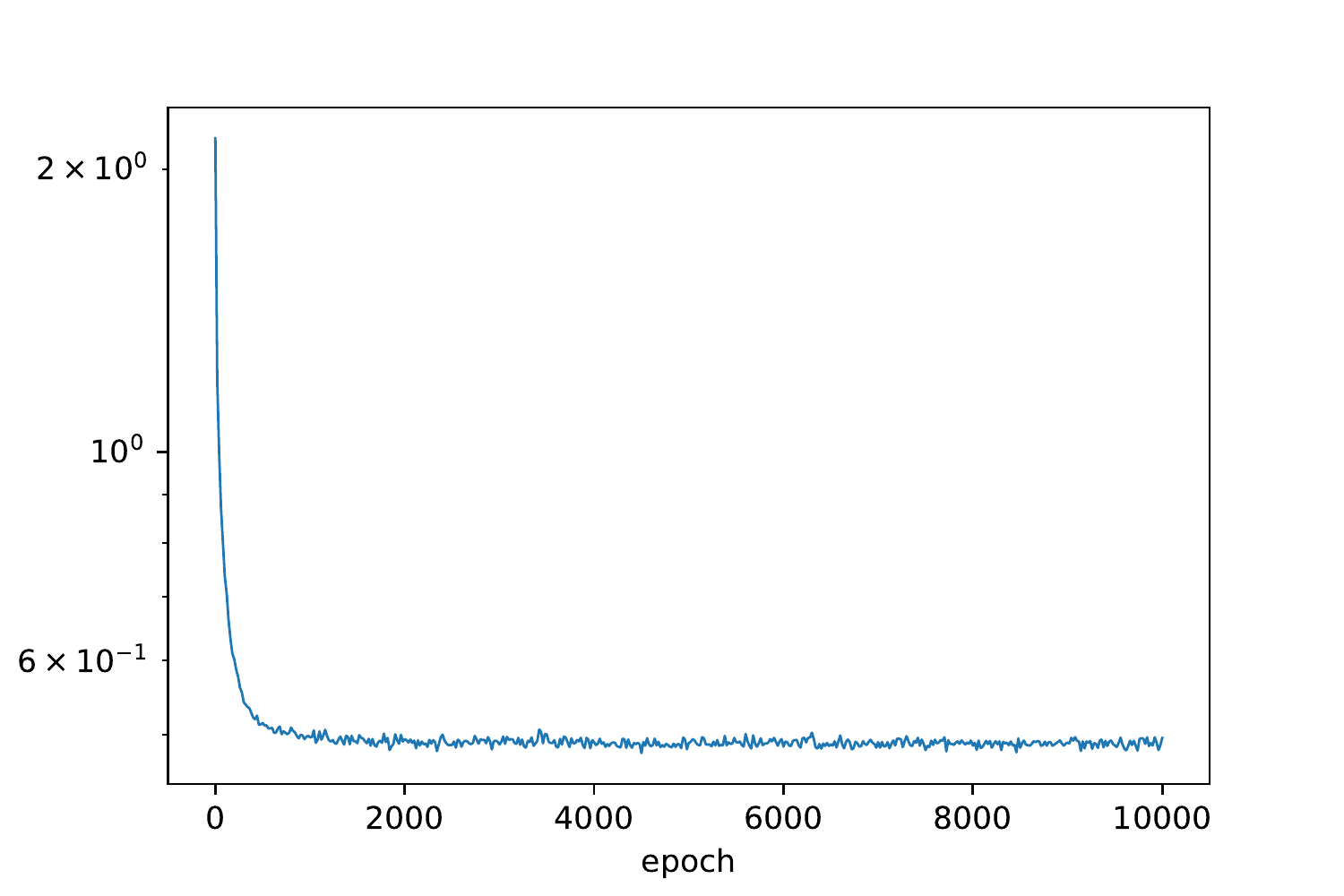}\hspace{-11pt}
		\includegraphics[scale=0.24]{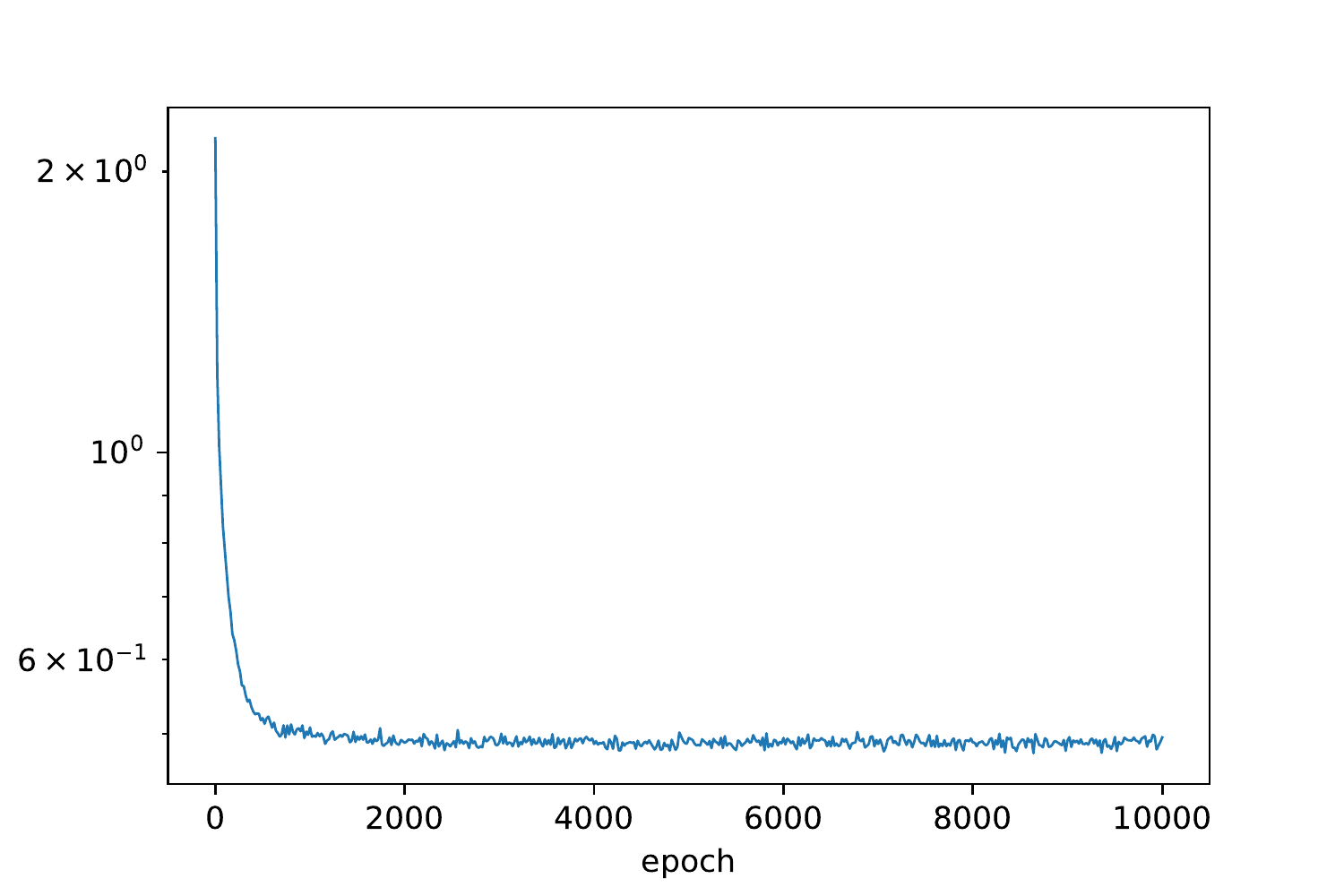}	
		\caption{Validation error history for $\sigma(p)=\text{Hat}(100p)$ with $\psi_{2,k}(\boldsymbol x)$ and $k=2,3,4,5$.}
		\label{3DHatBen1}
\end{figure*}

From the Figure \ref{3DReLUBen}, Figure  \ref{3DHatBen}, Figure \ref{3DReLUBen1} and Figure  \ref{3DHatBen1}, we can see that the numerical results are similar to the two dimension case and the three dimension case in Experiment \ref{3Dcase1}. 
\end{experiment}

\begin{experiment}
\normalfont
We consider to fit the target function  \cite{cai2020phase}
\begin{equation*}
	u(x) = \left\{
	\begin{aligned}
		10(\sin(x) + \sin(3x)),  x\in[-\pi,0];\\
		10(\sin(23x)+\sin(137x)+\sin(203x)),  x\in[0,\pi].
	\end{aligned}
	\right.
\end{equation*}


\begin{itemize}
\item ReLU-DNN with phase shift by Cai et al. in \cite{cai2020phase}: 16 models of ReLU-DNN which have the size of 1-40-40-40-40-1 and Fourier transform are used to train different frequency components of the target function. Training data are the evenly mesh with 1000 grids from $-\pi$ to $\pi$. They train the mean square loss with Adam and the learning rate is 0.002.

\item Our method:  We use only one model with one hidden layer and width of 25000. In our model, the activation function is $\sigma(p)=\text{hat}(200p)$. Learning rate is 0.001 and decrease to its half every 250 epochs. The weights of outer layers are sampled following the uniform distribution $\mathcal{U}(-7,7)$, and all the other parameters are sampled following the uniform distribution $\mathcal{U}(-0.95,0.95)$. Training data are the evenly mesh with 1000 grids from $-\pi$ to $\pi$. We train the mean square loss with Adam.
\end{itemize}
\begin{figure}[!htbp]
		\centering		
		\includegraphics[scale=0.24]{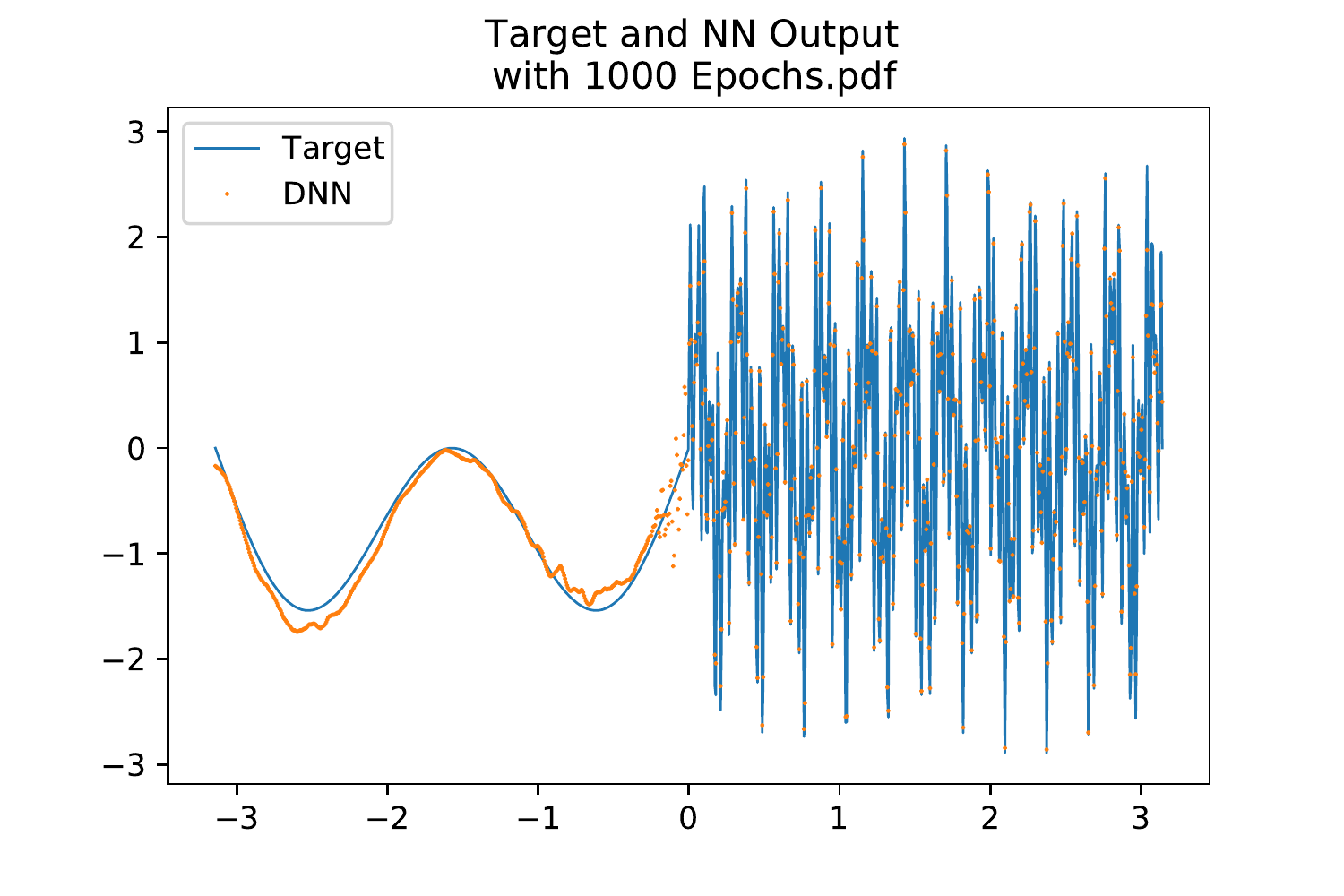}\hspace{-11pt}
		\includegraphics[scale=0.24]{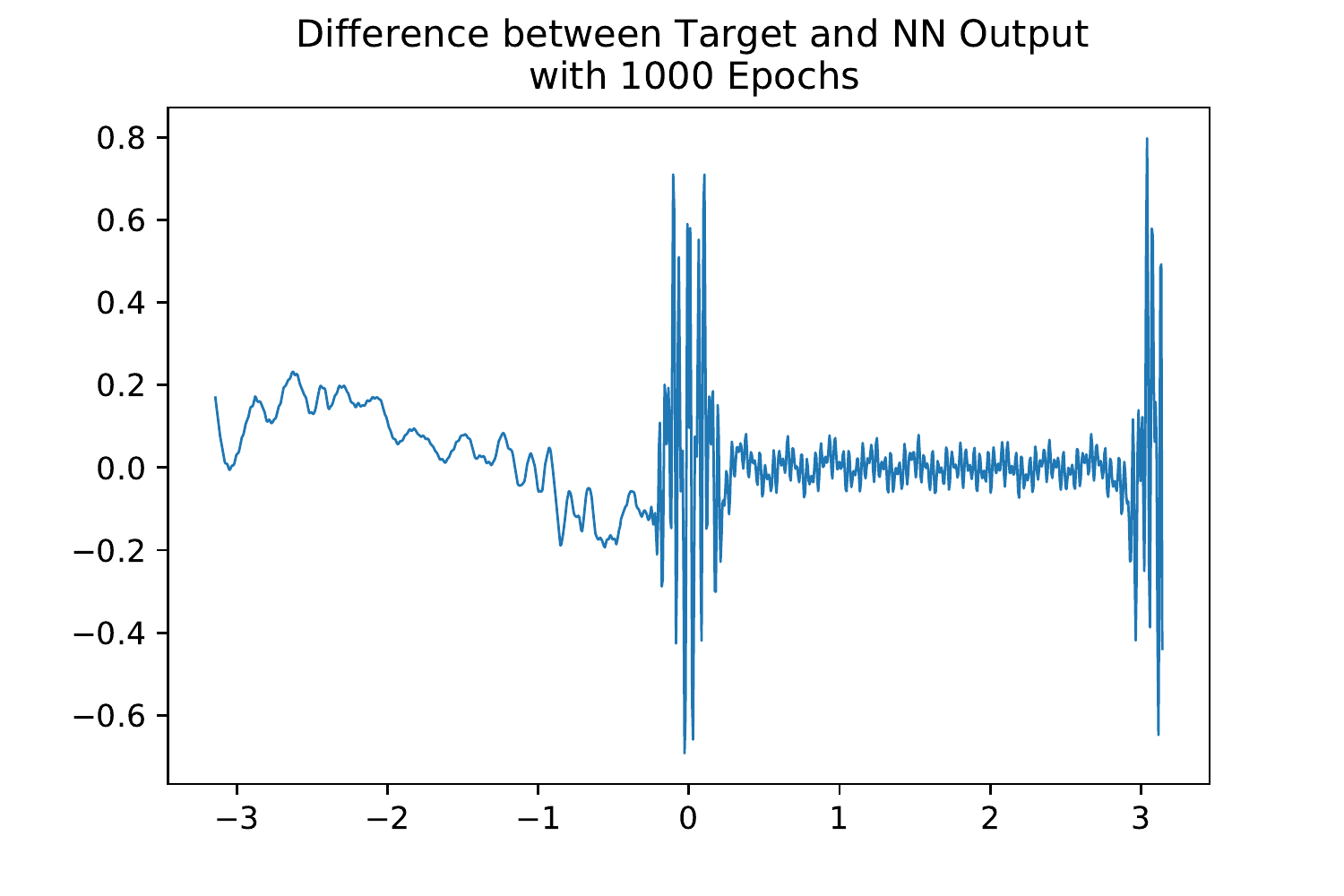} \hspace{-11pt}
		\includegraphics[scale=0.24]{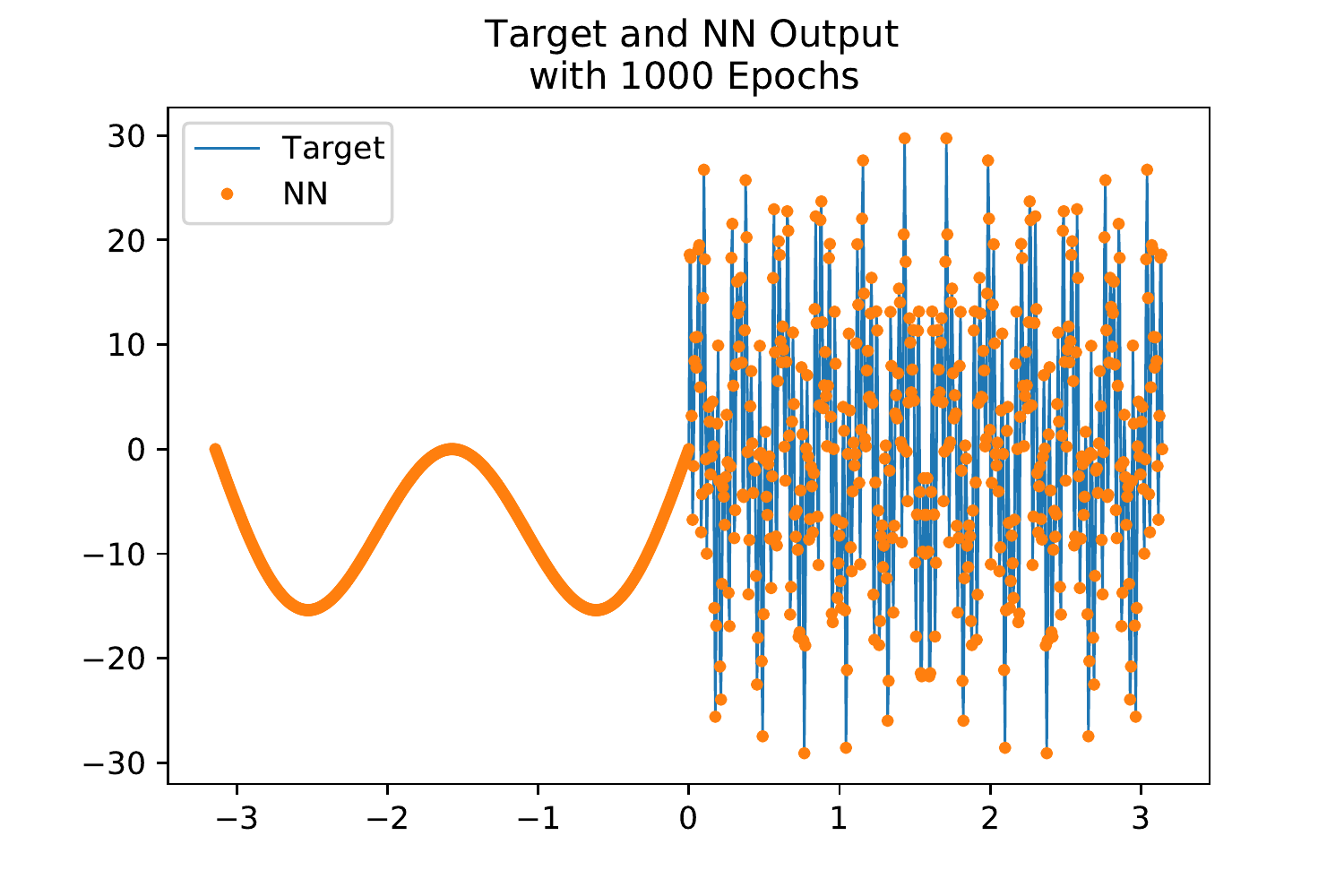}\hspace{-11pt}
		\includegraphics[scale=0.24]{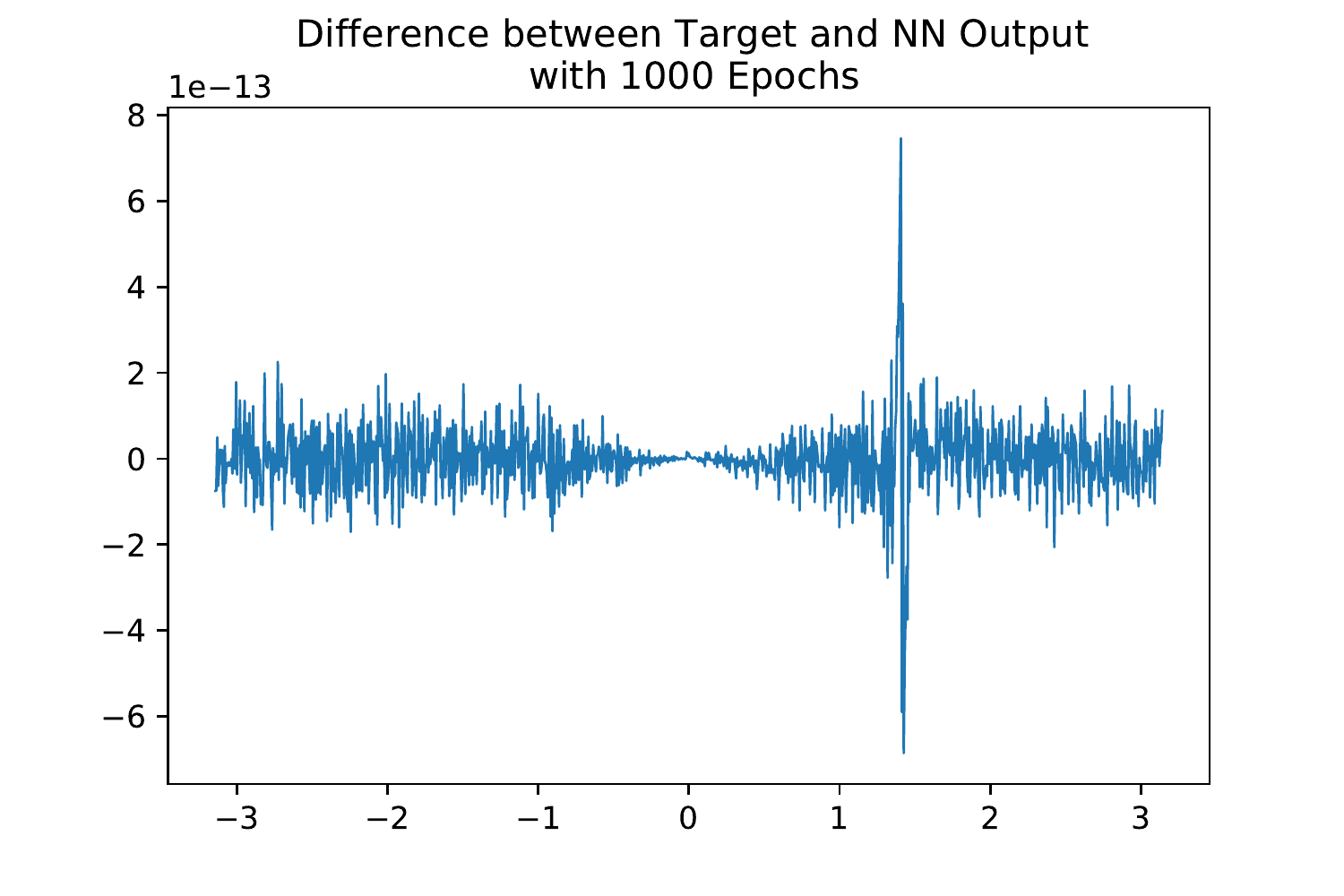}
		\caption{ReLU-DNN with Phase Shift in \cite{cai2020phase} (left two), Our method (right two).}
		\label{Caiexample}
\end{figure}
The model of our method has almost the same amount of parameters in the method of the ReLU-DNN with phase shift in \cite{cai2020phase} which includes 16 models. From the Figure \ref{Caiexample}, after 1000 epochs for both methods, we can see that the difference  between the output neural network function obtained by method of the ReLU-DNN with phase shift in \cite{cai2020phase} 
and the target function $u(x)$ is around $10^{-1}$, while the difference  between the output neural network function obtained by our method  
and the target function $u(x)$ is around $10^{-13}$.  
\end{experiment}

\begin{experiment}
We consider fitting the  target function on $[0,1]$: 
$$
u(x) = \sin(2\pi x)+\sin(6\pi x)+\sin(10\pi x).
$$ 
We define the loss $L(f,u) = \int_{0}^{1} \left| f(x)-u(x) \right|^2 dx$ and evaluate this integral with accurate approximation. We use one-hidden-layer network with width of 128. The activation functions are ReLU and $hat(10x)$.  Both models
are trained with a learning rate of 0.0005. For the ReLU network $\displaystyle f(x)=\sum_{i=1}^n a_i \sigma(\omega_ix+b_i)$ with $\sigma(p)=\text{ReLU}(p)$, we initialize $\omega_i$'s as constant 1, and initialize $a_i$'s and $b_i$'s following a  uniform distribution on $[-1,1]$.   And all parameters are 
initialized following a uniform distribution on $[-1,1]$ for hat neural network.  Denote $\Delta_{f,u} (k) = \left| \hat{f}_k - \hat{u}_k 
\right|/\left|\hat{u}_k \right|$, where $k$ represents the frequency, $\left| \cdot \right|$ represents the 
norm of a complex number and $\hat{\cdot}$ represents Fourier 
transform. We also can evaluate the accurate Fourier transform for frequencies $k=1, 3, 5$. From the Fourier transform of $u(x)$, we select $\Delta_{f,u} (1)$, $\Delta_{f,u} (3)$, 
and $\Delta_{f,u} (5)$  to observe the convergent rate. 
And we apply the same thing to $f(x)$.
	\begin{figure}[!htbp]
 \centering
  \includegraphics[scale=0.28]{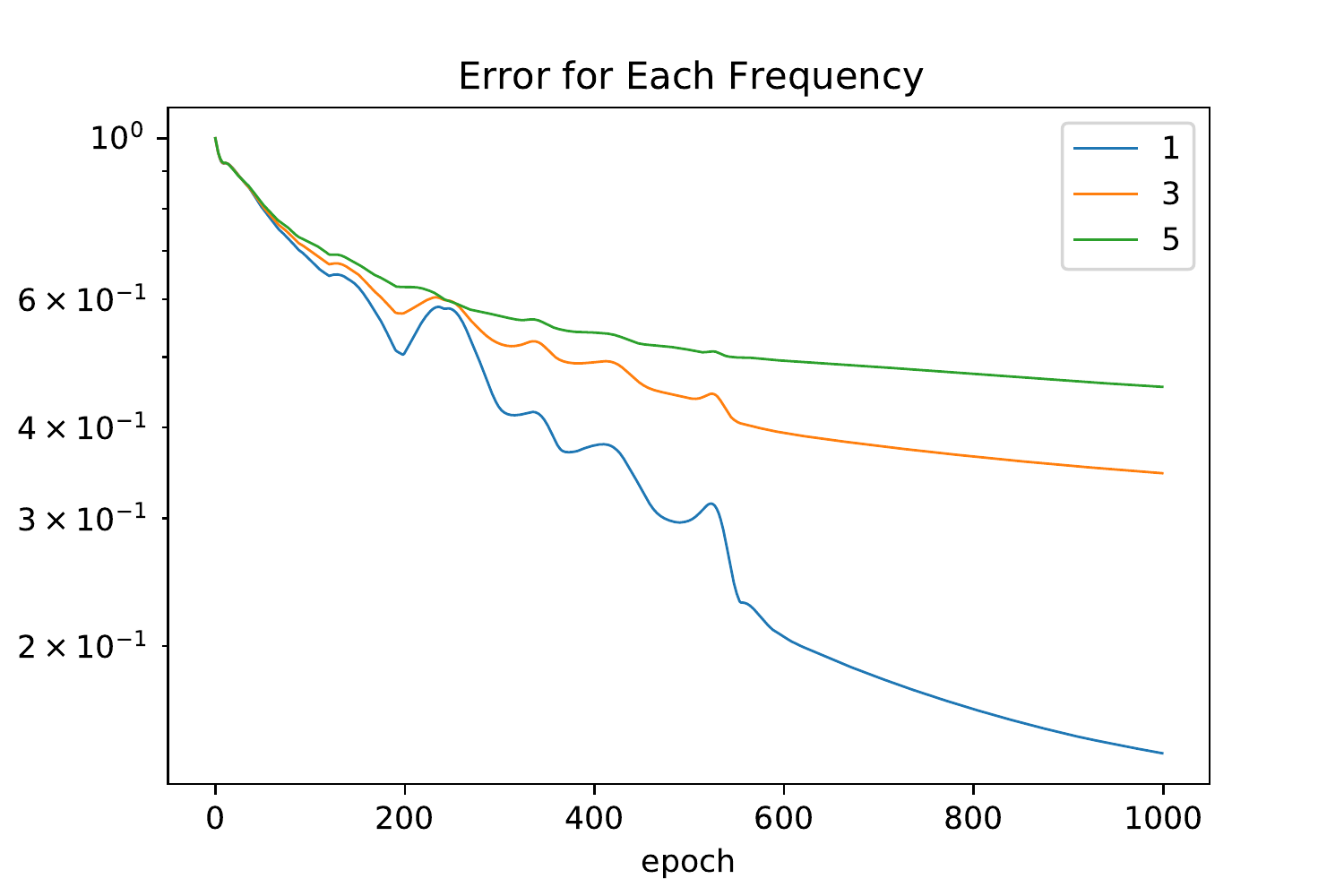}\hspace{-8pt}
 \includegraphics[scale=0.28]{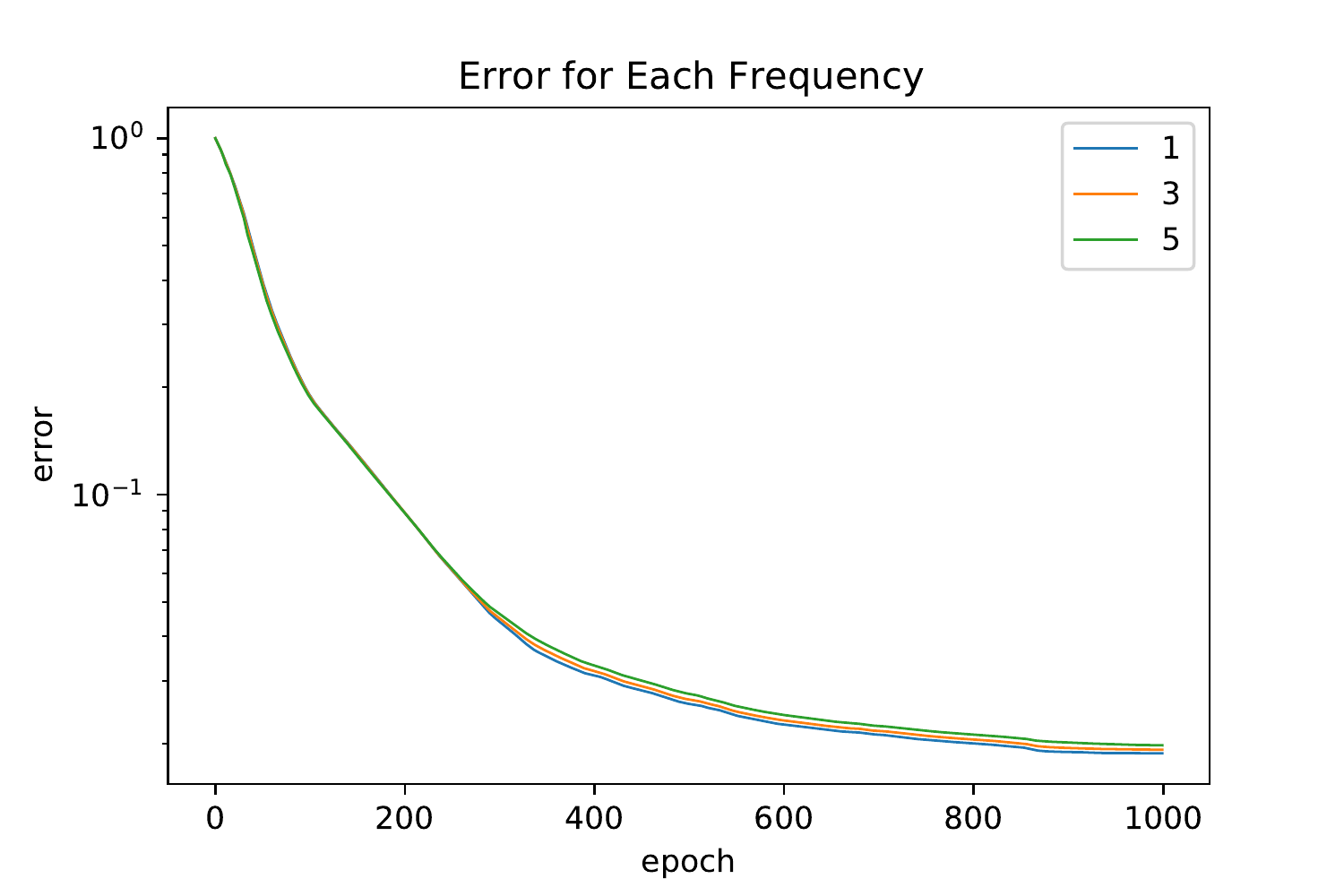}
 	\vspace{-3pt}
	\caption{$\sigma(p)=\text{ReLU}(p)$ (left), $\sigma(p)=\text{ Hat}(10p)$ (right)}
	\label{exactreluvshat}
\end{figure}
Figure \ref{exactreluvshat} shows the convergent process of each frequency component of 
ReLU neural network and Hat neural network.  Green line, yellow line, and blue line denote the convergent process of $\Delta_{f,u} (1)$, $\Delta_{f,u} (3)$, and $\Delta_{f,u} (5)$ respectively. 
 Spectral bias is obviously 
observed for ReLU neural network (see left of Figure \ref{exactreluvshat}), 
while the spectral bias does not hold for Hat neural network as shown in right of Figure \ref{exactreluvshat}. 
\end{experiment}

\begin{experiment}
We consider fitting the  target function on $[-\pi, \pi]$:
\begin{equation}
u(x) = \sin(x) + \sin(3x)+\sin(5x).
\end{equation}
We use shallow neural network models with activation function $\sigma(p)=sin(p)$ and $\sigma(p)=hat(p)$. 
Both models have only one hidden layer with size 8000.
The mean square error (MSE) function is defined as
\begin{equation}\label{MSEloss}
L(f,u) = \frac1N\sum_{i=1}^{N} (f(x_i)-u(x_i))^2,
\end{equation}
where $x_i$ is the uniform grid of size $N=201$ from $[-\pi,\pi]$. Both models
are trained with a learning rate of 0.0002. And all parameters are 
initialized following a Gaussian distribution with mean 0 and 
standard deviation 0.8 for both sin neural network and hat neural network. 
Denote $\Delta_{f,u} (k) = \left| \hat{f}_k - \hat{u}_k 
\right|/\left|\hat{u}_k \right|$, where $k$ represents the frequency, $\left| \cdot \right|$ represents the 
norm of a complex number and $\hat{\cdot}$ represents discrete Fourier 
transform. From the Fourier transform of $u(x)$, we select $\Delta_{f,u} (1)$, $\Delta_{f,u} (3)$, 
and $\Delta_{f,u} (5)$  to observe the convergent rate. 
And we apply the same thing to $f(x)$.
\begin{figure}[!htbp]
\centering
\begin{minipage}{.33\textwidth}
  \centering
  \centering
	\includegraphics[scale=0.28]{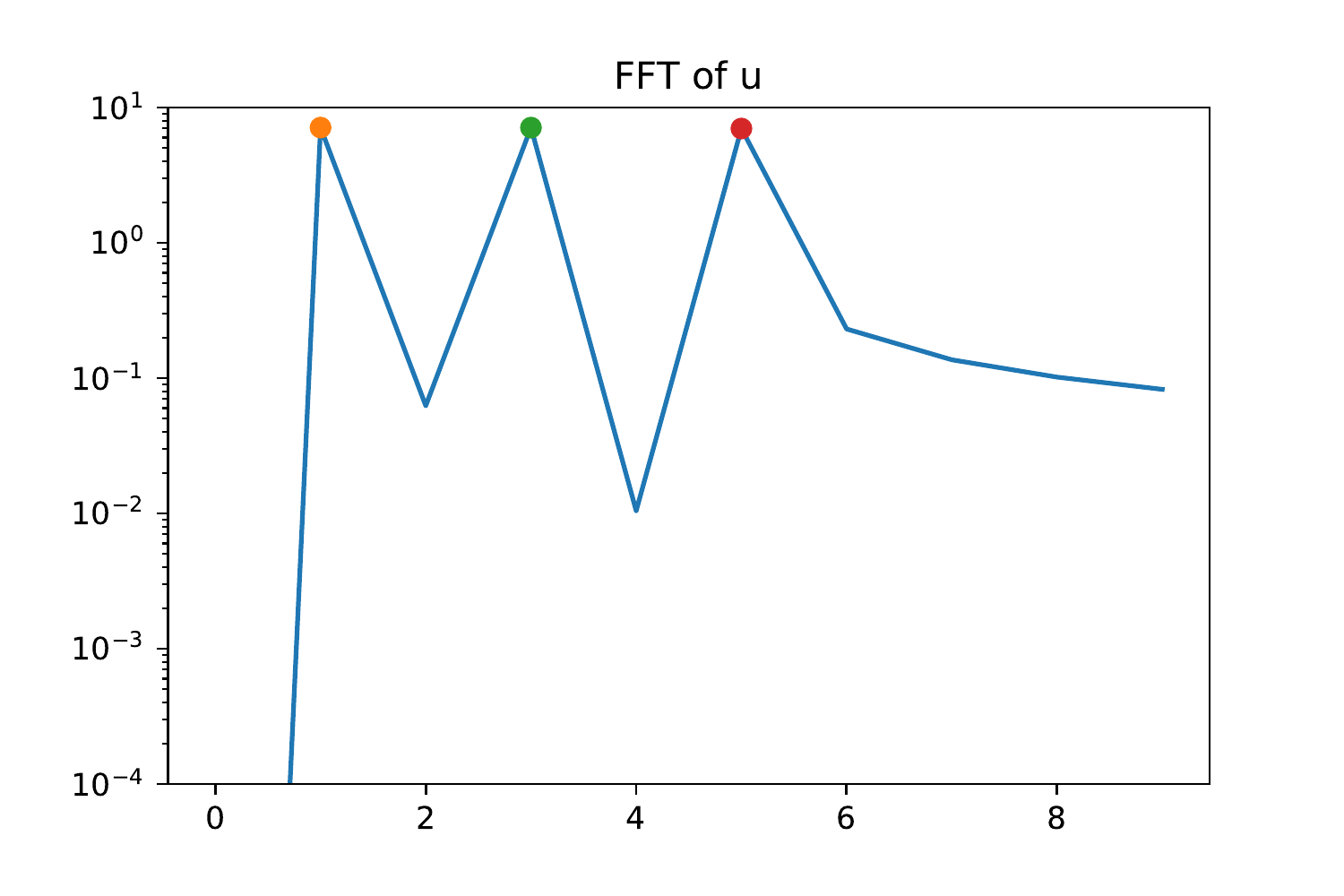}
	\vspace{-3pt}
	\caption{Fourier transform of $u$}
	\end{minipage}%
\begin{minipage}{.77\textwidth}
 \centering
  \includegraphics[scale=0.28]{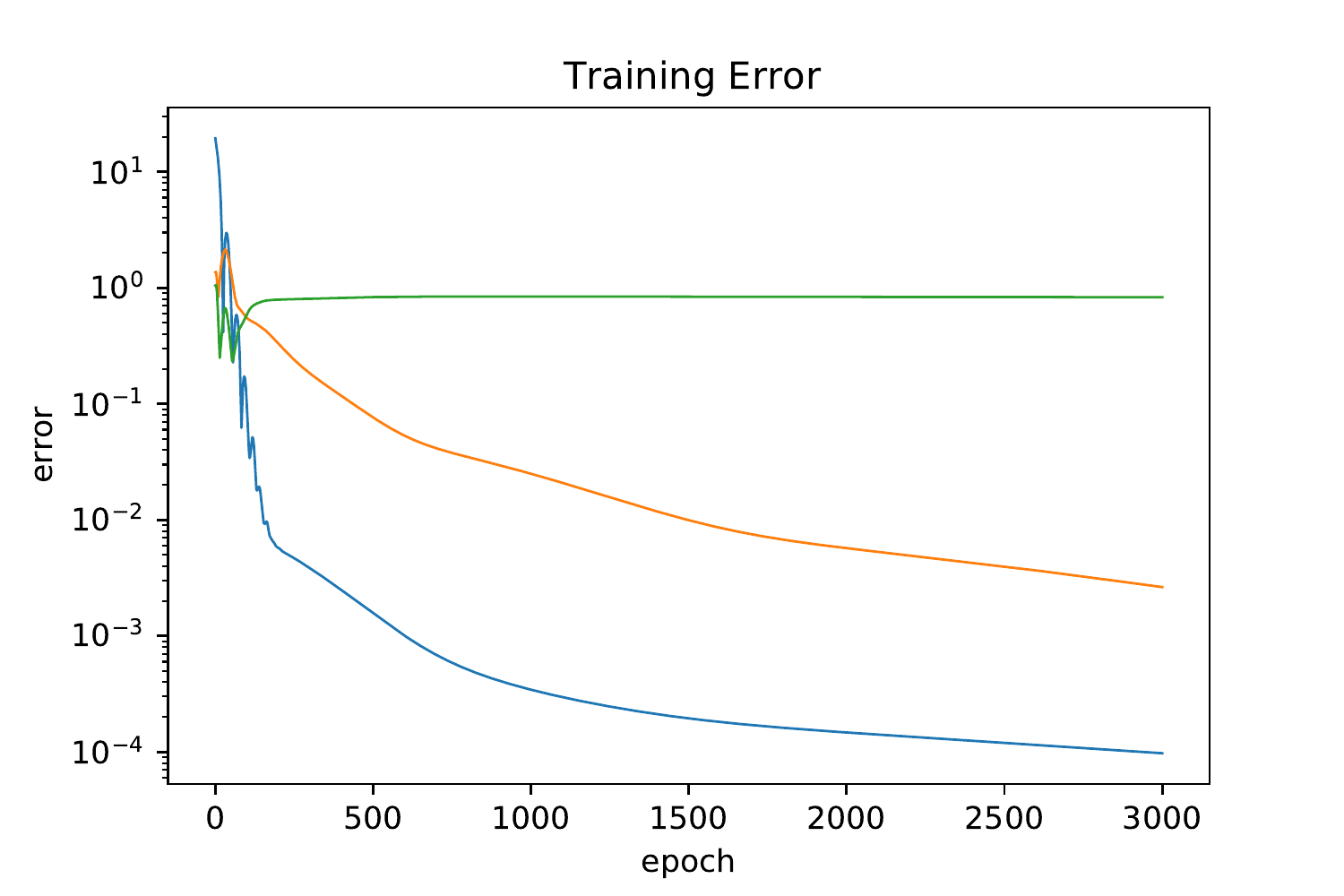}\hspace{-8pt}
 \includegraphics[scale=0.28]{1013/TrainingError.pdf}
 	\vspace{-3pt}
	\caption{$\sigma(p)=\text{sin}(p)$ (left), $\sigma(p)=\text{ Hat}(p)$ (right)}
	\label{sinvshat}
\end{minipage}
\end{figure}
Figure \ref{sinvshat} shows the convergent process of each frequency component of 
sin neural network and Hat neural network.  Green line, yellow line, and blue line denote the convergent process of $\Delta_{f,u} (1)$, $\Delta_{f,u} (3)$, and $\Delta_{f,u} (5)$ respectively. 
 Spectral bias is obviously 
observed for sin neural network (see left of Figure \ref{sinvshat}), 
while the spectral bias does not hold for Hat neural network as shown in right of Figure \ref{sinvshat}. 
\end{experiment}

\begin{experiment}
We run the Experiment 1 with fewer neurons, namely size 3000 (all the other settings are the same as Experiment 1), and obtained similar results as follows (See Figure \ref{tanhvshatfewer}): 
\begin{figure}[!htbp]
 \centering
  \includegraphics[scale=0.28]{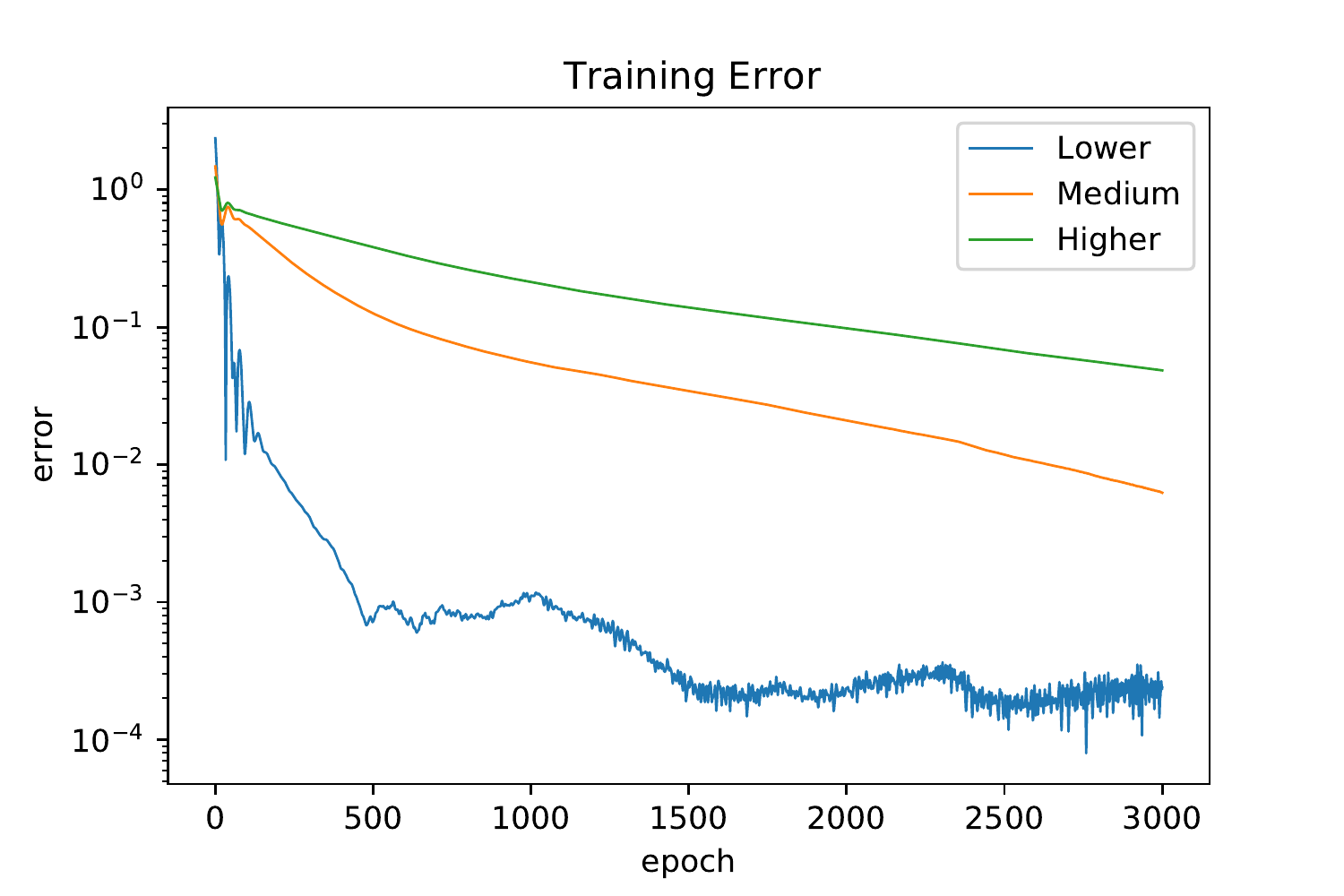}\hspace{-8pt}
  \includegraphics[scale=0.28]{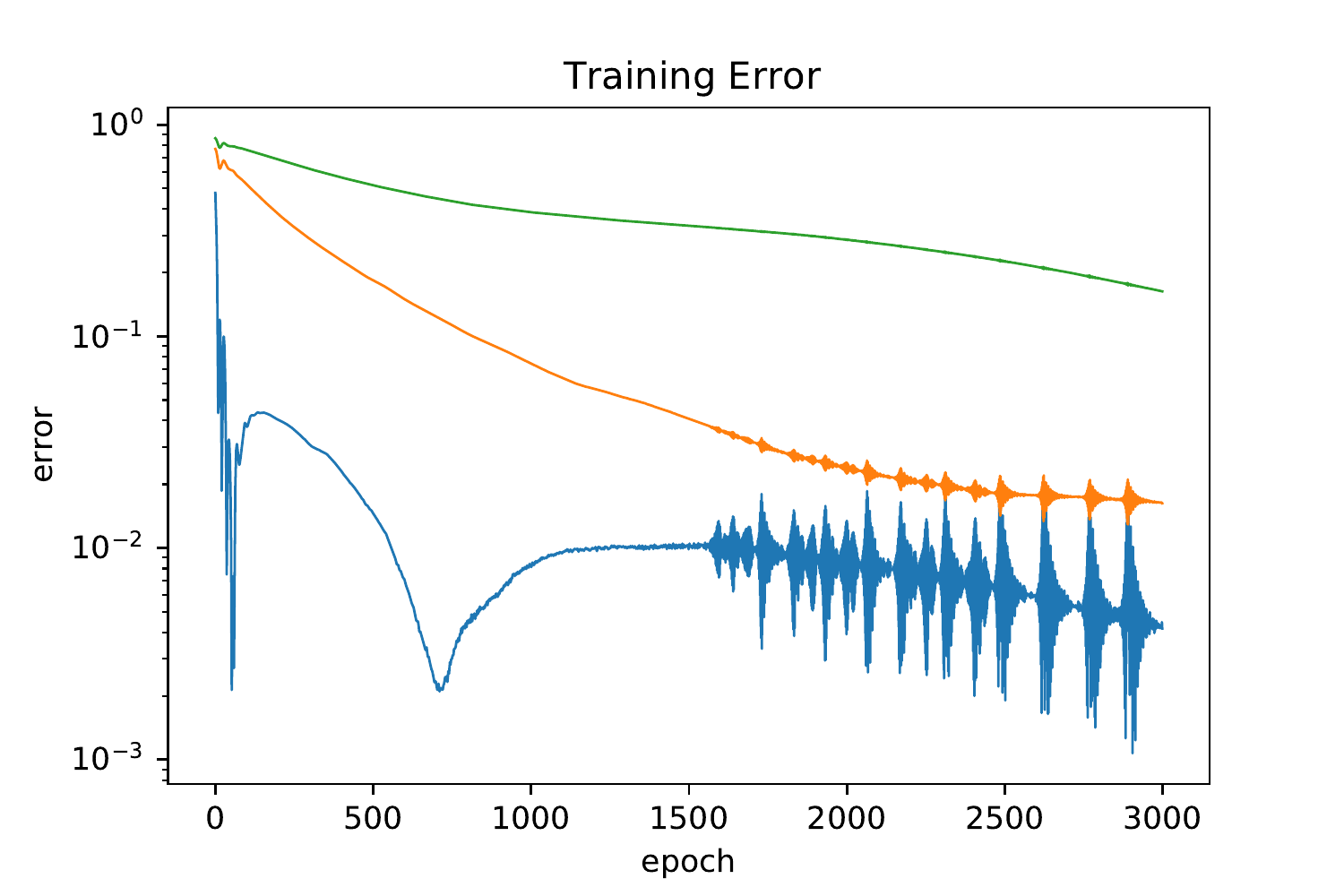}\hspace{-8pt}
  \includegraphics[scale=0.28]{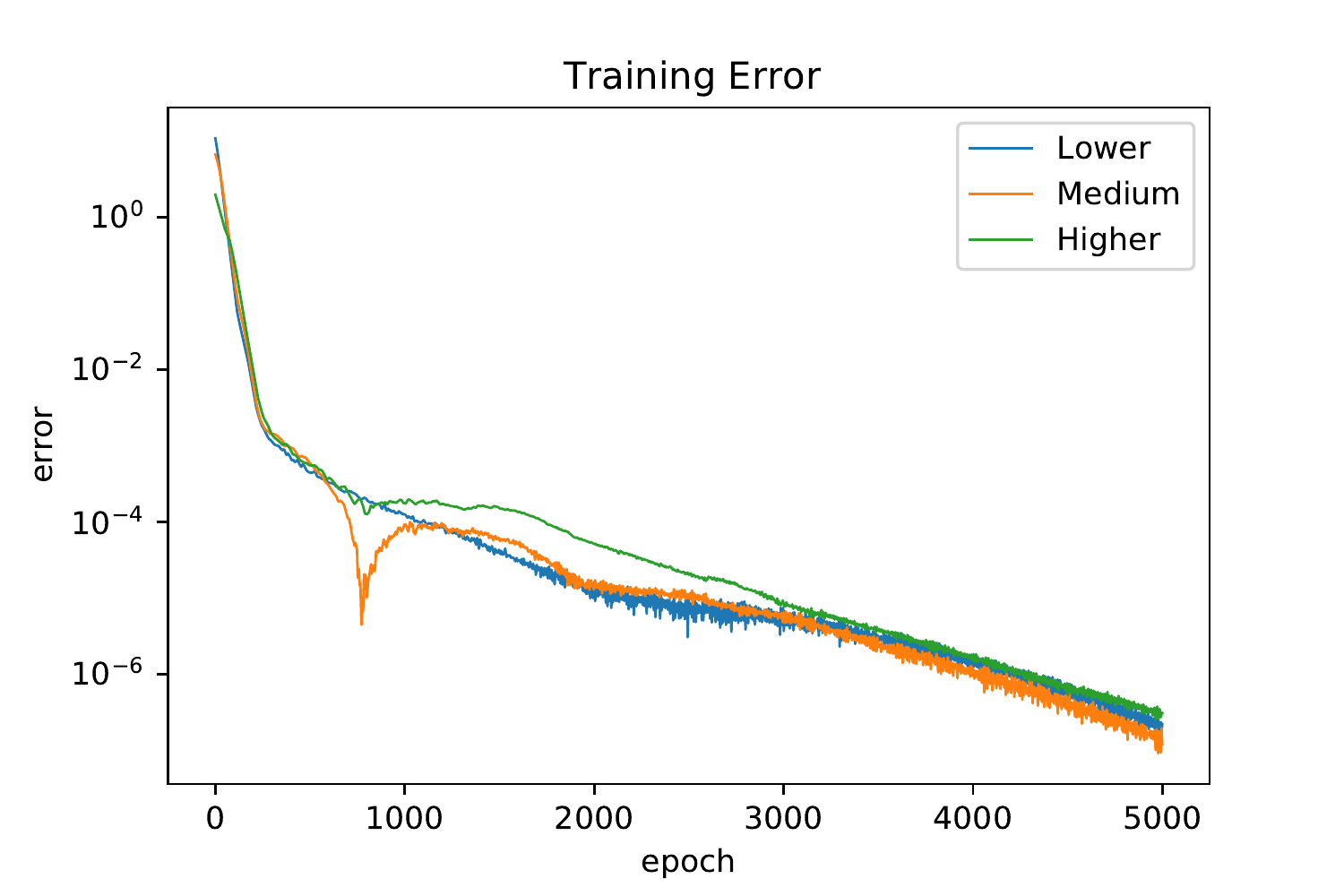}
 	\vspace{-3pt}
	\caption{$\sigma(p)=\text{tanh}(p)$ (left), $\sigma(p)=\text{ReLU}(p)$ (middle), $\sigma(p)=\text{ Hat}(p)$ (right)}
	\label{tanhvshatfewer}
\end{figure}
\\
We also run the Experiment 2 with fewer neurons, namely size 2-5000-1 (all the other settings are the same as Experiment 2), and obtained similar results as follows 
(See Figure \ref{reluvshat2dfewer}): 
\begin{figure}[!htbp]
 \centering
  \includegraphics[scale=0.28]{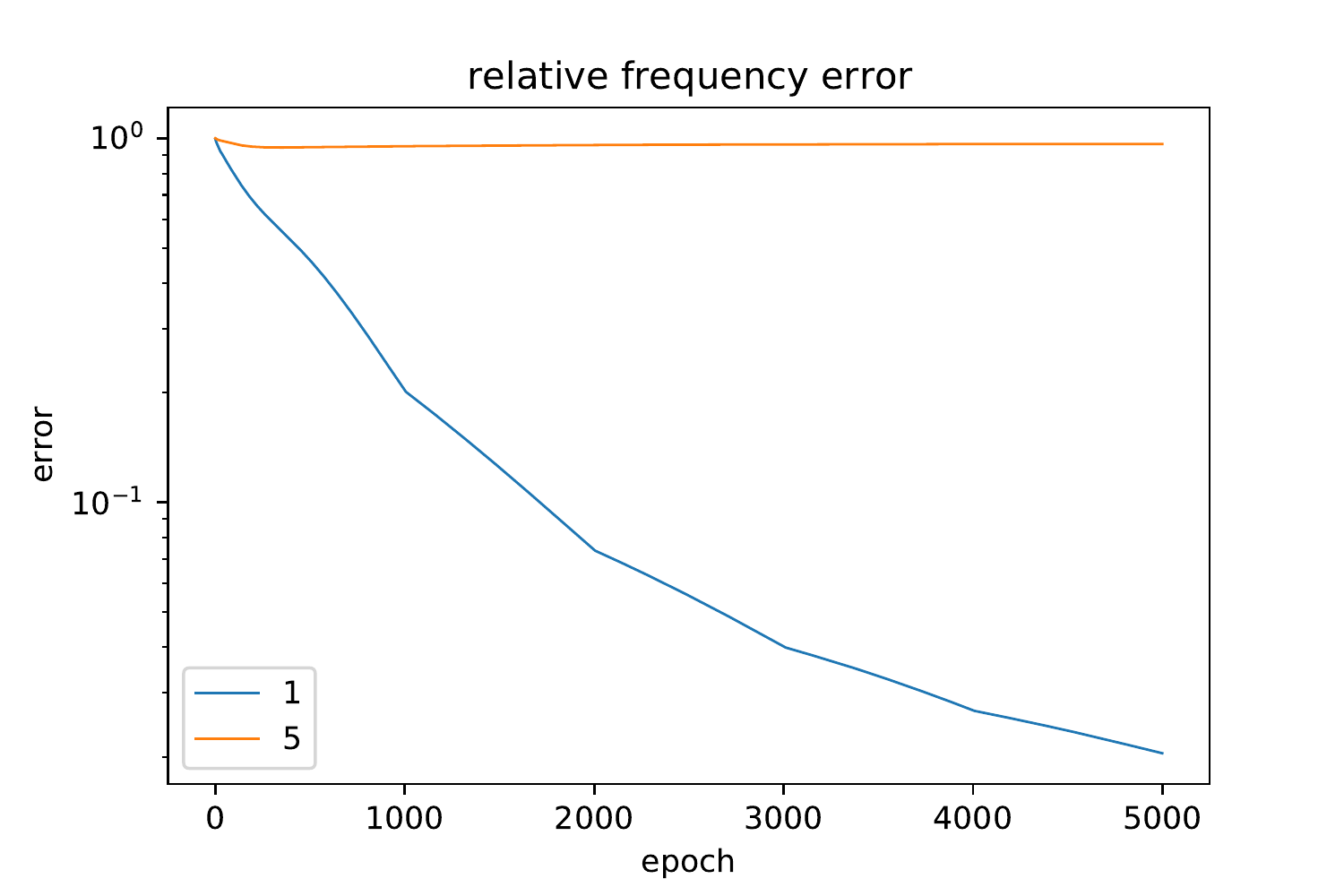}\hspace{-8pt}
  \includegraphics[scale=0.28]{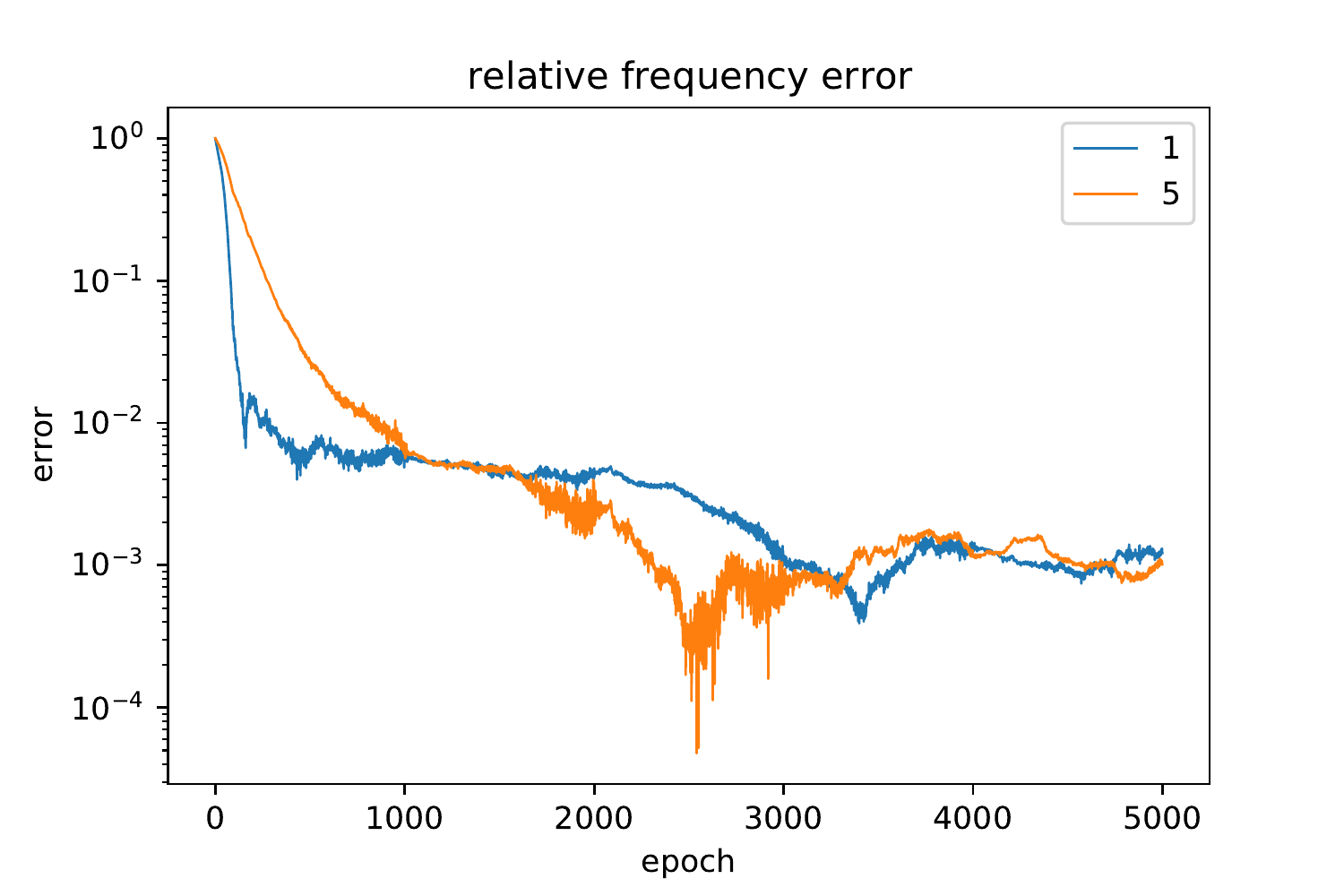}\hspace{-8pt}
 	\vspace{-3pt}
	\caption{$\sigma(p)=\text{relu}(p)$ (left), $\sigma(p)=\text{ Hat}(100p)$ (right)}
	\label{reluvshat2dfewer}
\end{figure}
\end{experiment}

\begin{experiment}
\normalfont
We use shallow neural networks with each of the activation functions in \eqref{activation-functions-experiments} to fit the following target function $u$ on $[-\pi, \pi]$:
\begin{equation}
u(x) = \sin(x) + \sin(3x)+\sin(5x).
\end{equation}
For each activation function our network has one hidden layer with size 8000.
The mean square error (MSE) loss is given by
\begin{equation}\label{MSEloss}
L(f,u) = \frac1N\sum_{i=1}^{N} (f(x_i)-u(x_i))^2,
\end{equation}
where $x_i$ is a uniform grid of size $N=201$ on $[-\pi,\pi]$. The three networks
are trained using SDG with a learning rate of 0.001. When using a tanh or ReLU activation function, all parameters are 
initialized following a Gaussian distribution with mean 0 and 
standard deviation 0.1, while the network with Hat activation function is 
initialized following a Gaussian distribution with mean 0 and 
standard deviation 0.8. 

Denote 
\begin{equation}\label{delta-definition-experiments}
\Delta_{f,u} (k) = \left| \hat{f}_k - \hat{u}_k 
\right|/\left|\hat{u}_k \right|,
\end{equation}
where $k$ represents the frequency, $\left| \cdot \right|$ represents the 
norm of a complex number and $\hat{\cdot}$ represents discrete Fourier 
transform. We plot $\Delta_{f,u} (1)$, $\Delta_{f,u} (3)$, 
and $\Delta_{f,u} (5)$ in Figure \ref{tanhvshatGD} for each of the three networks.
\begin{figure}[!htbp]
 \centering
 \includegraphics[scale=0.28]{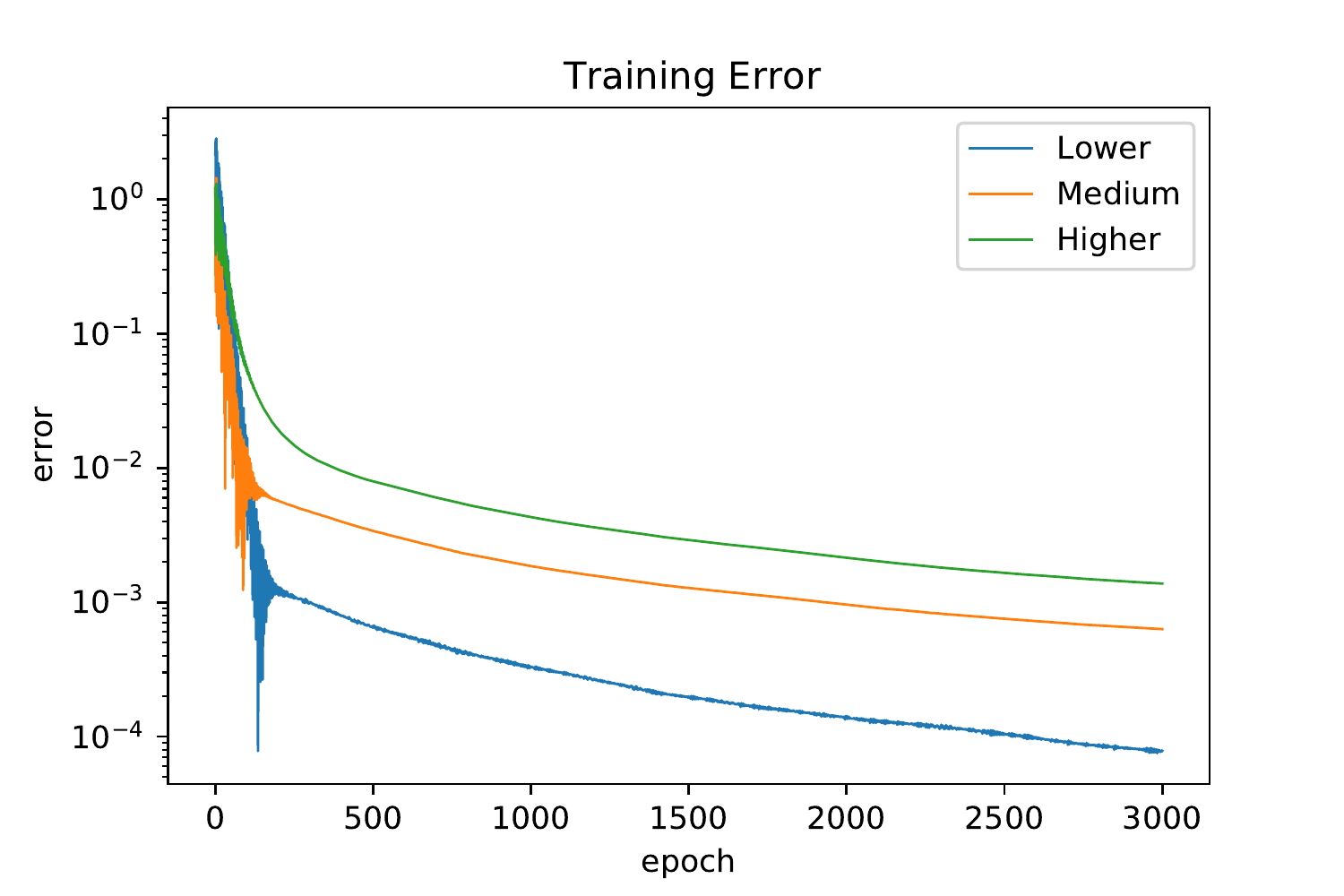}\hspace{-8pt}
  \includegraphics[scale=0.28]{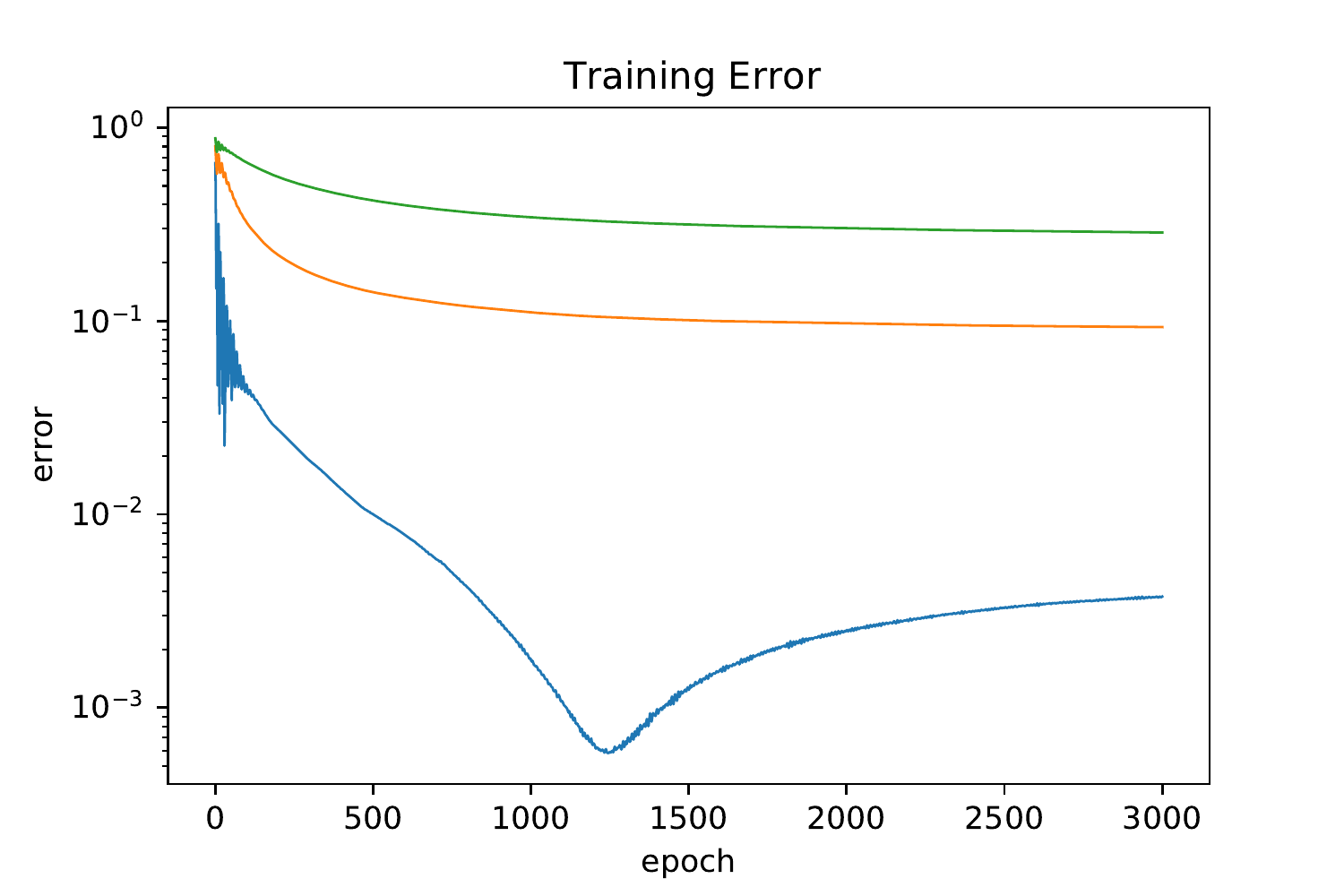}\hspace{-8pt}
 \includegraphics[scale=0.28]{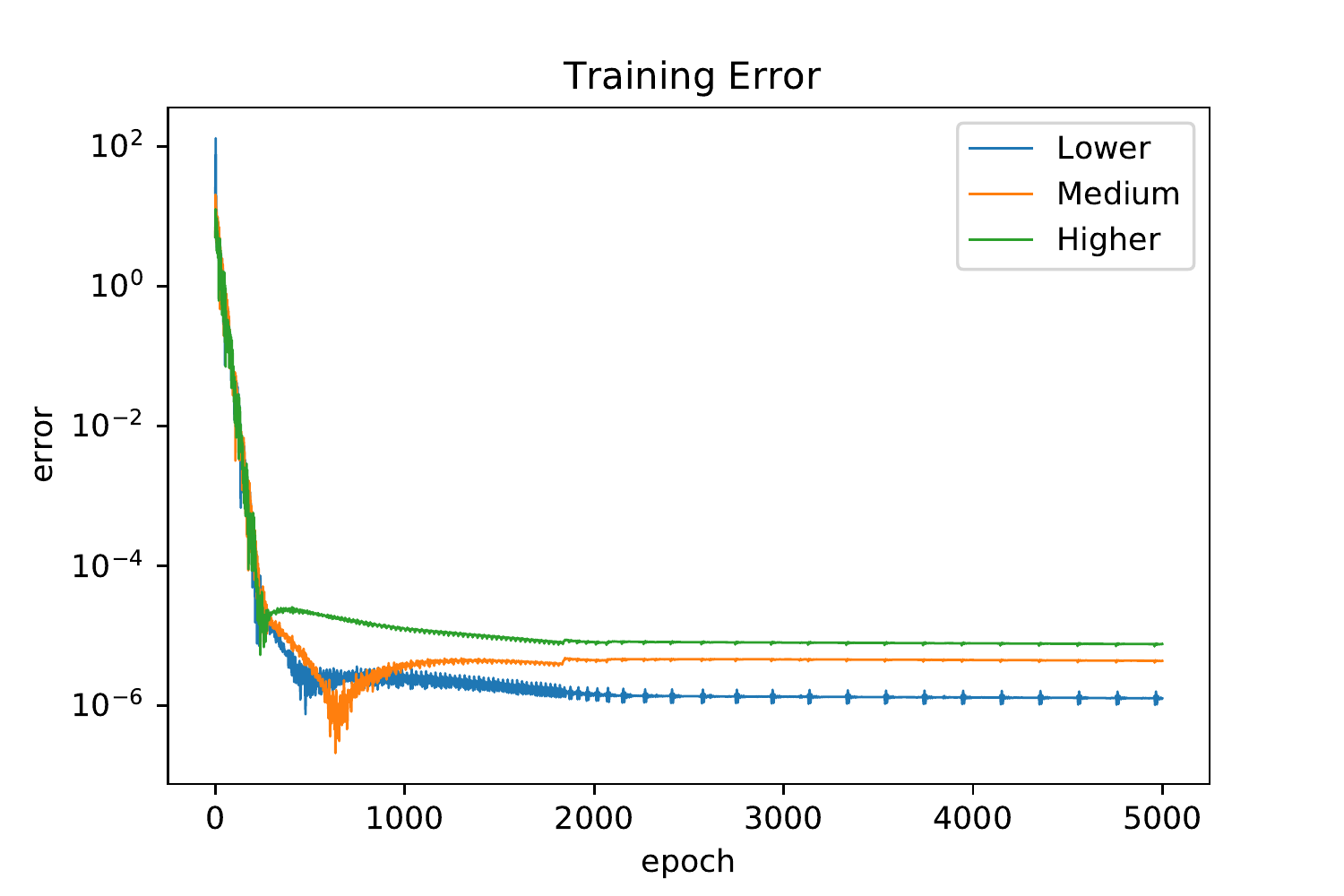}
 	\vspace{-3pt}
	\caption{$\sigma(p)=\text{tanh}(p)$ (left), $\sigma(p)=\text{ReLU}(p)$ (middle), $\sigma(p)=\text{ Hat}(p)$ (right)}
	\label{tanhvshatGD}
\end{figure}
From these results, we 
observe the spectral bias for tanh and ReLU neural networks (see left and middle of Figure \ref{tanhvshatGD}), 
while there is no spectral bias for the Hat neural network as shown in right of Figure \ref{tanhvshatGD}. These observations are similar to the results in Experiment 1 when ADAM is used to training the network. 
\end{experiment}

\begin{experiment}
We run the Experiment 1 using shallow ReLU neuron network with different sizes $n=500, n=2000$ and $n=8000$ (all the other settings are the same as Experiment 1), and obtained results as follows (See Figure \ref{ReLUwidth}): 
\begin{figure}[!htbp]
 \centering
  \includegraphics[scale=0.28]{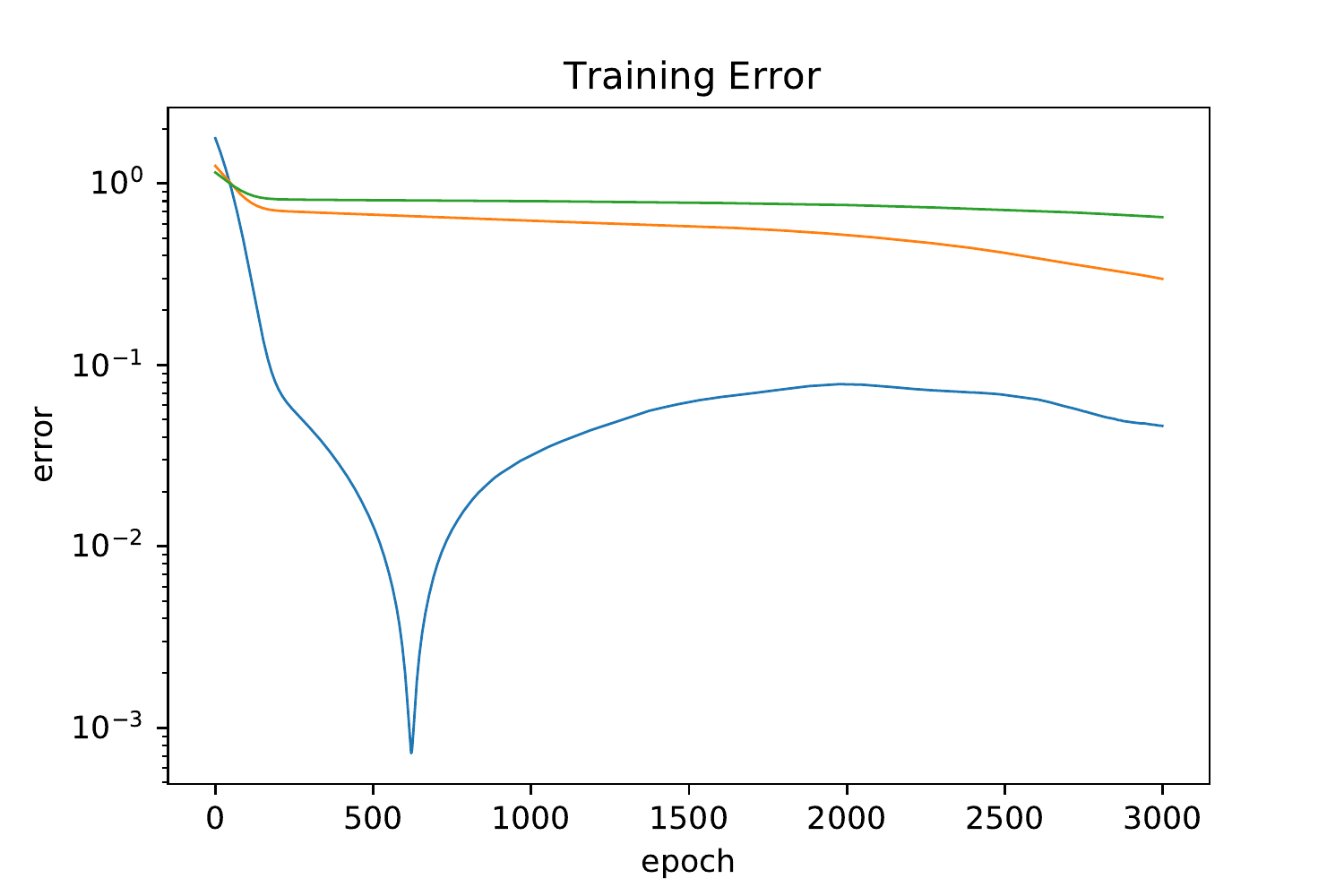}\hspace{-8pt}
  \includegraphics[scale=0.28]{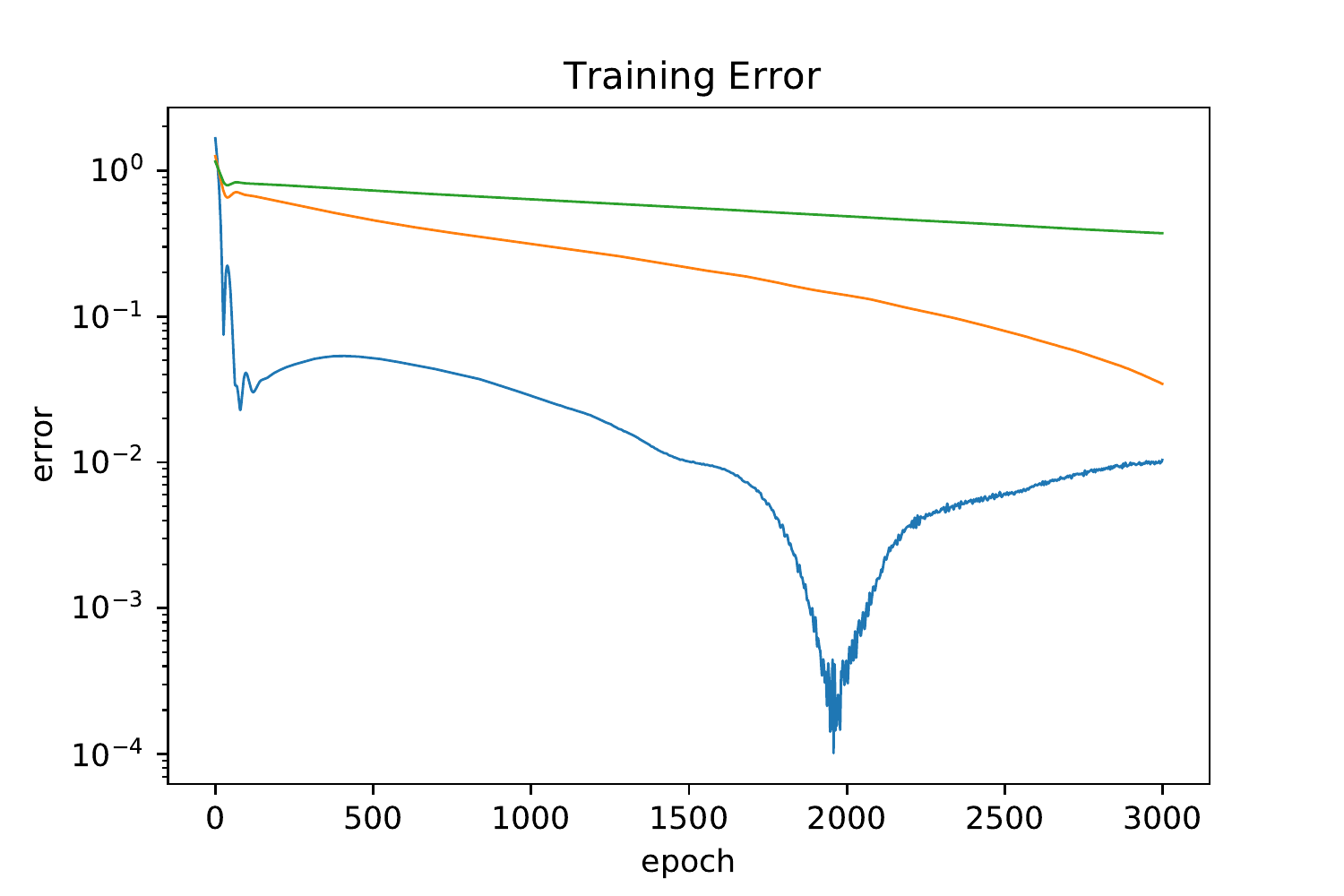}\hspace{-8pt}
 \includegraphics[scale=0.28]{220731/Exp1_TrainingError_relu.pdf}
 	\vspace{-3pt}
	\caption{$n=500$ (left), $n=2000$ (middle), $n=8000$ (right)}
	\label{ReLUwidth}
\end{figure}
From Figure \ref{ReLUwidth}, we can see that when the size of shallow ReLU neuron network increases, the spectral bias becomes stronger. 
\end{experiment}
\subsection{Finite element bases}
Let ${\cal T}_h$ be a uniform mesh on $[0,1]$ with $n$ grid points and mesh size $h=\frac{1}{n}$. Define
\begin{equation*}
\begin{aligned}
V_n=\{v_h:\; v_h~\mbox{is continuous and}~
\mbox{piecewise linear w.r.t.} {\cal T}_h, v_h(0)=0\}.
\end{aligned}
\end{equation*}
The space $V_n$ is a standard linear finite element space in one dimension and has been well-studied, see \cite{ciarlet2002finite}, for instance.

We denote two bases of $V_n$ (see Figure \ref{twobases}), as follows: 
\begin{itemize}
\item ReLU basis: $\psi_i(x)={\rm ReLU}(\frac{x-x_{i-1}}{h})
$
\begin{equation}
	\label{Relu-basis}
	\psi_i(x)=\left\{\begin{array}{cl}
	\frac{x-x_{i-1}}{h}, & x>x_{i-1};\\
	0, &x\le x_{i-1}.
\end{array}\right.
\end{equation}
where ${\rm ReLU(x)}=\max\{0,x\}$.	
\item Hat basis: $\varphi_i(x) ={\rm Hat}(nx-i+1)
$
\begin{equation}
	  \label{Vh-basis}
	\varphi_i(x)=\left\{\begin{array}{cl}
		\frac{x-x_{i-1}}{h}, & x\in[x_{i-1},x_i];\\
		\frac{x_{i+1}-x}{h}, & x\in[x_{i},x_{i+1}];\\
		0, &\mbox{elsewhere}.
	\end{array}\right.
\end{equation}
where ${\rm Hat}(x)=\left\{\begin{array}{cl}
		x, & x\in[0,1];\\
		2-x, & x\in[1,2];\\
		0, &\mbox{elsewhere}.
	\end{array}\right.$
\end{itemize}
\begin{figure}
		\centering
		\includegraphics[scale=0.45]{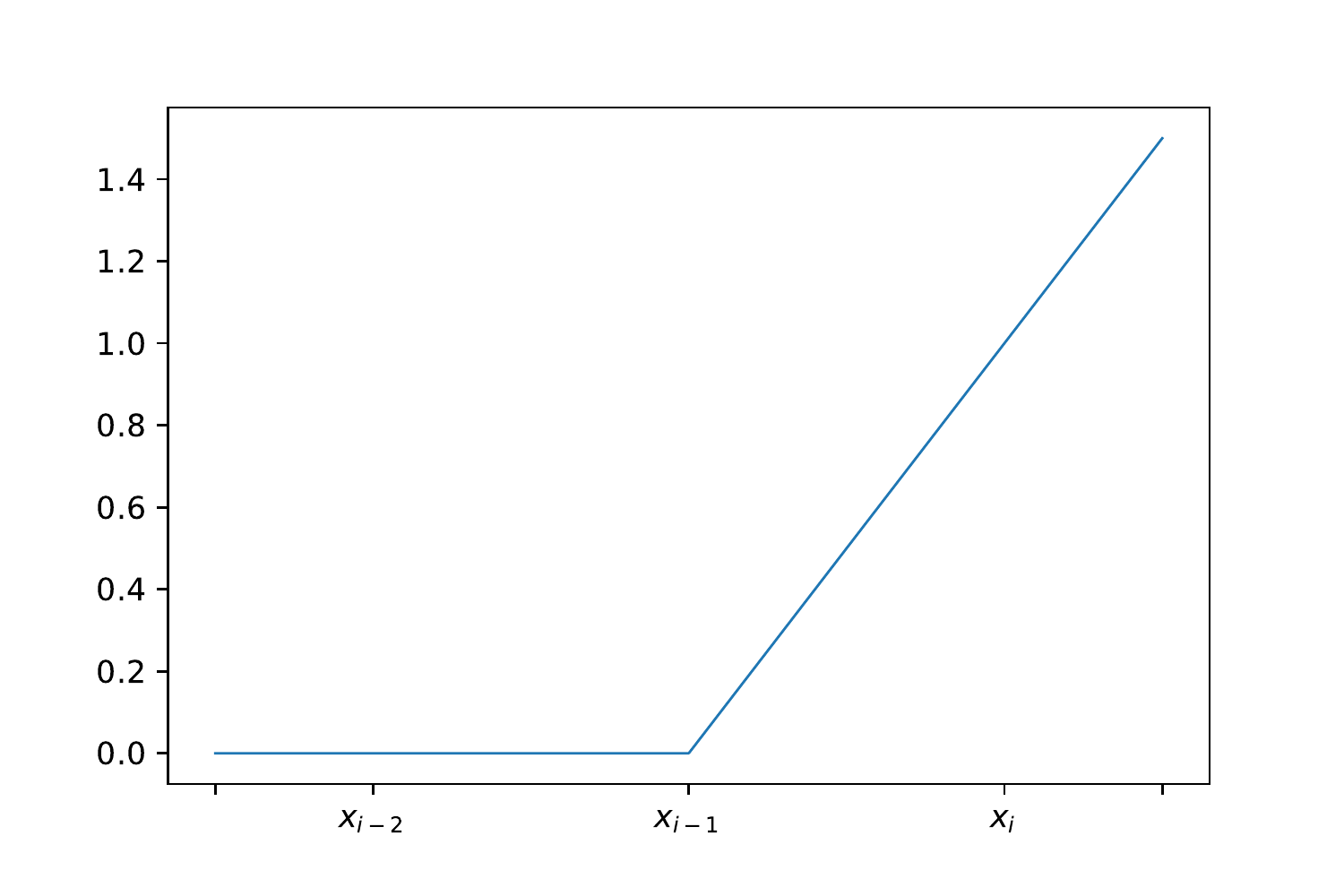} 
		\includegraphics[scale=0.45]{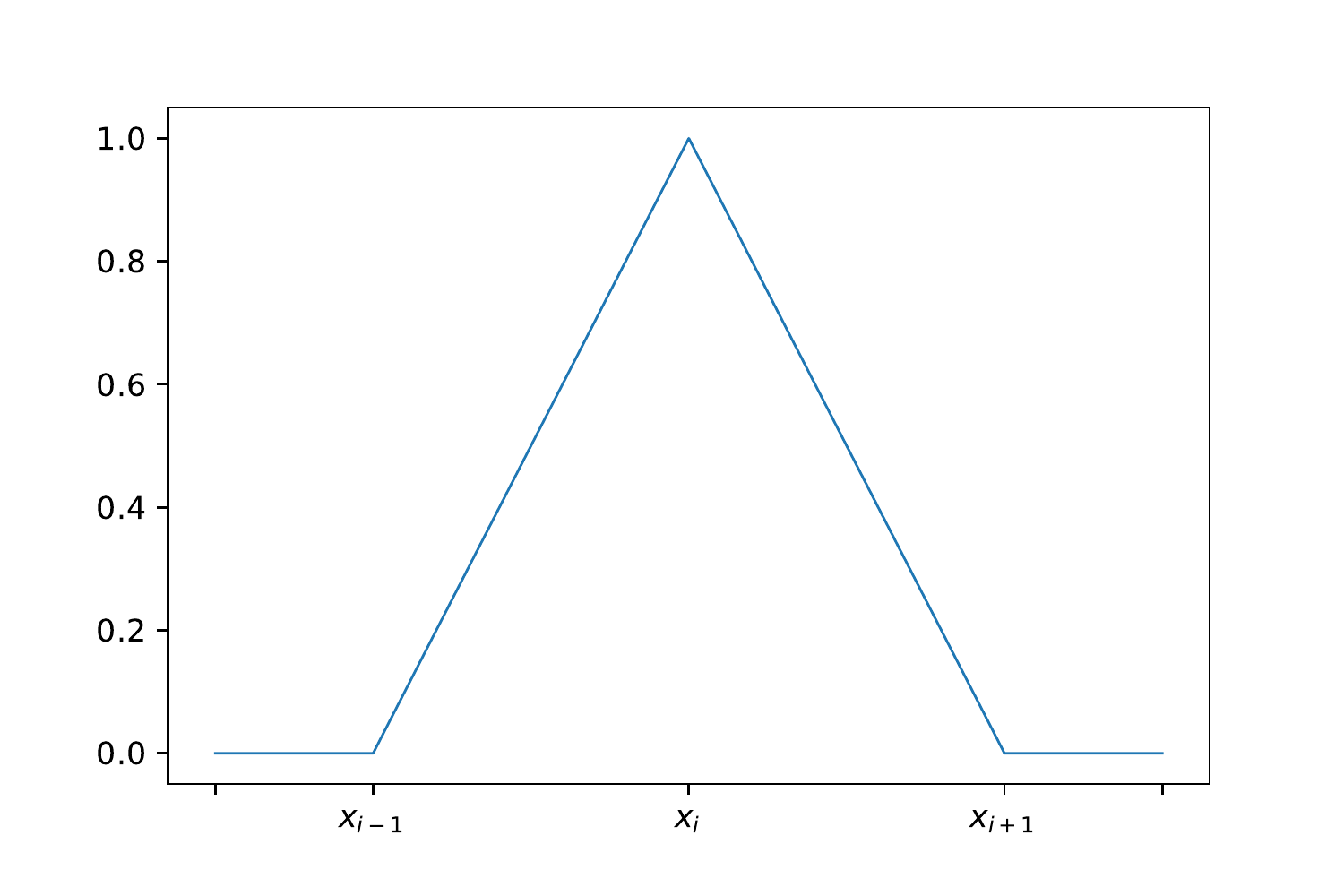}
		\caption{ReLU basis function $\psi_i(x)$ and Hat basis function $\varphi_i(x)$}
		\label{twobases}
	\end{figure}
Note that the Hat basis is the standard basis typically used in the theory of finite element methods, while the ReLU basis is based upon the common rectified linear activation function used in deep learning \cite{nair2010rectified}.
Let $\Psi (x)= \left( \psi_1(x), \psi_2(x), \cdots, \psi_{n}(x)\right)^T$ and $\Phi(x) = \left( \varphi_1(x), \varphi_2(x), \cdots, \varphi_n(x) \right) ^T$.
It is easy to verify that 
\begin{equation}\label{CBasis}
 	\Phi = C\Psi,
\end{equation}
and the change of basis matrix which converts between these bases is given by
\begin{equation}\label{matrixC}
	C = \begin{pmatrix}
		1 & -2 & 1   &    && \\
		&  1 & -2 & 1   && \\
		&&\ddots&\ddots&\ddots& \\
		&&    & 1 & -2  & 1& \\ 
		&&    &    & 1 &-2 \\
		&&    &    &  &1
	\end{pmatrix}.
\end{equation}
and
\begin{equation}
C^{-1} = 
\begin{pmatrix}
1 &2&3&4&\cdots&n \\
0&1&2&3&\cdots&n-1 \\
0&0&1&2&\cdots&n-2 \\
0&0&0&1&\cdots&n-3\\ 
\vdots&\vdots&\vdots&\vdots&\ddots&\vdots \\
0&0&0&0&\cdots&1
\end{pmatrix}.
\end{equation}

\begin{itemize}
\item When the basis in linear finite element space is chosen as Hat basis, which motivates the activation function is chosen as $\sigma(x)={\rm Hat}(x)$, we denote the mass matrix $M_\sigma=M_\Phi$. 
Then it is easy to see that 
\begin{equation}\label{MphiM}
M_\Phi = \frac{h}{6}M,
\end{equation}
 where
\begin{equation}\label{Mphi}
 M=\begin{pmatrix}
4 & 1 &  &  &     \\
1&  4 & 1 &  &   \\
&\ddots&\ddots&\ddots \\
&    & 1 & 4  & 1 \\ 
&    &    & 1 &2 \\
\end{pmatrix}
\in \mathbb{R}^{n\times n}.
\end{equation} 
\item When the basis in linear finite element space is chosen as ReLU basis, which corresponds to the activation function is chosen as $\sigma(x)=\mbox{ReLU}(x)=\max\{0,x\}$, we denote the mass matrix $M_\sigma=M_\Psi$.
\item Due to \eqref{CBasis}, the relationship between $M_\Phi$ and $M_\Psi$ is given by 
\begin{equation}\label{MM}
M_{\Psi}=C^{-1}M_\Phi C^{-T}.
\end{equation}
\end{itemize}

\subsection{Spectral analysis of the matrices $M_\Phi$ and $M_{\Psi}$}\label{spectral}
First we have estimates of the eigenvalues of matrix $M$:
\begin{theorem}\label{MPhieigen}
All the eigenvalues of $M$ are in $[\frac{1}{6},1]$. 
\end{theorem}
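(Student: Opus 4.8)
The plan is to bound the eigenvalues of $M$ by a direct application of the Gershgorin circle theorem. Since $M$ is real and symmetric its eigenvalues are real, so it suffices to exhibit a real interval containing the union of the Gershgorin discs. Recall from \eqref{MphiM} that $M_\Phi=\tfrac h6 M$; consequently showing that every eigenvalue of $M$ lies in an interval $[a,b]$ is the same as showing that every eigenvalue of the normalized mass matrix $h^{-1}M_\Phi=\tfrac16 M$ lies in $[\tfrac a6,\tfrac b6]$, and we will obtain $[a,b]=[1,6]$, which is exactly $[\tfrac16,1]$ after this normalization.

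First I would write down the $n$ row discs of $M$. A generic interior row $2\le i\le n-1$ has diagonal entry $4$ and off-diagonal absolute-value sum $1+1=2$, hence contributes the interval $[2,6]$. The first row has diagonal entry $4$ and off-diagonal sum $1$, contributing $[3,5]$. The last row has diagonal entry $2$ and off-diagonal sum $1$, contributing $[1,3]$. Taking the union, every eigenvalue of $M$ lies in $[1,3]\cup[2,6]\cup[3,5]=[1,6]$, and therefore every eigenvalue of $\tfrac16 M$ lies in $[\tfrac16,1]$. The degenerate cases $n=1$ (single disc $\{2\}$) and $n=2$ (discs $[3,5]$ and $[1,3]$) are covered by the same reasoning, so the conclusion is uniform in $n$.

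There is no genuinely hard step here; the one point that needs attention is that the first and last rows must be treated separately from the generic interior row, since they have a different nonzero pattern — in particular it is precisely the modified last diagonal entry $2$, coming from the half-hat basis function $\varphi_n$ at the free boundary node $x_n=1$, that produces the lower endpoint $1$ rather than $2$. Positive definiteness of $M$ (already known from Lemma \ref{loss-function-lemma}, or from the fact that $M_\Phi$ is the Gram matrix of the linearly independent functions $\varphi_1,\dots,\varphi_n$) only guarantees a positive lower bound; the value of Gershgorin is that it makes this bound explicit and independent of $n$. An alternative, more ``finite element'' derivation would identify $c^\top M_\Phi c=\big\|\sum_i c_i\varphi_i\big\|_{L^2([0,1])}^2$ and invoke the standard norm equivalence $c_1 h\|c\|_{\ell^2}^2\le\big\|\sum_i c_i\varphi_i\big\|_{L^2([0,1])}^2\le c_2 h\|c\|_{\ell^2}^2$ for nodal finite element functions on a uniform mesh, with absolute constants $c_1,c_2>0$; this yields the same $n$-independent two-sided bound, which is the only property actually needed in Theorem \ref{hat-matrix-theorem}. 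I would present the Gershgorin argument, as it is elementary and produces explicit constants with no computation.
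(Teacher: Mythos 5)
Your proof is correct and uses exactly the same tool as the paper: the Gershgorin circle theorem applied row by row, with the first, interior, and last rows treated separately. The only difference is cosmetic --- you apply Gershgorin to $M$ itself (obtaining $[1,6]$) and then rescale by $\tfrac16$, whereas the paper applies it directly to the normalized matrix $\tfrac16 M$; your version has the minor virtue of making explicit the normalization that the paper's statement of the theorem leaves implicit.
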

\begin{proof}
Let $\lambda_{k,M}$ be an eigenvalue of $M$, by the Gershgorin circle theorem, we know that 
$$
\lambda_{k,M}\in \{|\lambda-\frac{2}{3}|\le \frac{1}{6}\}\cup\{|\lambda-\frac{2}{3}|\le \frac{1}{3}\}\cup \{|\lambda-\frac{1}{3}|\le \frac{1}{6}\}
$$
namely 
$\lambda_{k,M}\in[\frac{1}{6}, 1]$ for $k=1,2,\cdots, n$. 
\end{proof}

From Theorem \ref{MPhieigen}, the estimate of the eigenvalues of $M_\Phi$ can be obtained immediately by \eqref{MphiM} and hence Theorem \ref{hat-matrix-theorem} is obtained. 

Next, we estimate the eigenvalues of $M_{\Psi}$. We introduce the following matrix $A$ which is related to $C$. 
\begin{equation}\label{matrixA}
 A=
\begin{pmatrix}
	2 & -1 &    &    & \\
	-1&  2 & -1 &    & \\
	  &\ddots&\ddots&\ddots& \\
	  &    & -1 & 2  & -1 \\ 
	  &    &    & -1 &1
\end{pmatrix}
\in \mathbb{R}^{n\times n}.
\end{equation}
The eigenvalues and corresponding eigenvectors of the matrix $A$, using the result in  \cite{yueh2005eigenvalues} can be obtained:
\begin{lemma} \cite{yueh2005eigenvalues}\label{eigenAphi}
The eigenvalues $\lambda_{k,A}, 1\le k\le n$ and corresponding eigenvectors $\xi_{A}^k=(\xi_{A,j}^k)_{j=1}^n, 1\le k\le n $ of $A$ are 
\begin{equation}
	\lambda_{k,A} = 4\cos^2\frac{k\pi}{2n+1},\quad \xi_{A,j}^k =- \sin\left((n+\frac12-k)t_j\pi\right)\quad 
\end{equation}
with $\quad t_j=\frac{2j}{2n+1}, 1\le j\le n$. 
\end{lemma}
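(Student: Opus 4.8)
The plan is to prove Lemma \ref{eigenAphi} by the standard method for (boundary-perturbed) tridiagonal Toeplitz matrices: turn the eigenvalue equation $A\xi=\lambda\xi$ into a second-order linear recurrence whose boundary conditions come from the first and last rows of $A$, solve the recurrence explicitly in terms of a trigonometric function, and read off the admissible $\lambda$'s from the boundary conditions. Since $A$ is real symmetric of size $n$, it suffices to produce $n$ distinct eigenvalues together with nonzero eigenvectors; that automatically exhausts the spectrum.

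First I would write $A\xi=\lambda\xi$ componentwise. Rows $1\le j\le n-1$ give $-\xi_{j-1}+2\xi_j-\xi_{j+1}=\lambda\xi_j$ once we adopt the convention $\xi_0=0$, which is exactly what the first row $2\xi_1-\xi_2=\lambda\xi_1$ encodes. The last row $-\xi_{n-1}+\xi_n=\lambda\xi_n$ — note the diagonal entry $1$ rather than $2$ — becomes, after extending the recurrence to a virtual index $\xi_{n+1}$, the discrete Neumann (reflecting) condition $\xi_{n+1}=\xi_n$. Writing $\lambda=2-2\cos\phi$, the solution of the recurrence vanishing at $j=0$ is $\xi_j=c\sin(j\phi)$ up to a scalar $c$. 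Imposing $\xi_{n+1}=\xi_n$ and using $\sin((n+1)\phi)-\sin(n\phi)=2\cos\!\big(\tfrac{(2n+1)\phi}{2}\big)\sin\!\big(\tfrac{\phi}{2}\big)$ forces $\cos\!\big(\tfrac{(2n+1)\phi}{2}\big)=0$ (the factor $\sin(\phi/2)$ cannot vanish without killing the whole vector), i.e. $\phi=\tfrac{(2m+1)\pi}{2n+1}$ for integers $m$.

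To recover the stated normalization I would reindex via $2m+1=2n+1-2k$, i.e. $m=n-k$, as $k$ runs over $1,\dots,n$. Then $\cos\phi=-\cos\tfrac{2k\pi}{2n+1}$ gives $\lambda=2+2\cos\tfrac{2k\pi}{2n+1}=4\cos^2\tfrac{k\pi}{2n+1}$, and $\phi=\tfrac{(2n+1-2k)\pi}{2n+1}$ makes $\xi_j=c\sin(j\phi)=c\sin\!\big((n+\tfrac{1}{2}-k)t_j\pi\big)$ with $t_j=\tfrac{2j}{2n+1}$; the lemma's choice is $c=-1$. Finally, for $k=1,\dots,n$ the arguments $\tfrac{k\pi}{2n+1}$ lie in $(0,\pi/2)$, where $\cos$ is strictly decreasing and positive, so these $n$ eigenvalues are distinct and the eigenvectors (whose first entry is $-\sin\phi\neq 0$) are nonzero; by symmetry of $A$ the list is complete.

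The calculation is entirely routine, so there is no genuine obstacle; the one place that rewards care is the bottom-right corner — the entry $1$ in place of $2$ must be translated into the reflecting condition $\xi_{n+1}=\xi_n$ rather than a Dirichlet-type $\xi_{n+1}=0$ (the latter would instead yield eigenvalues $4\sin^2\tfrac{k\pi}{2(n+1)}$) — and one must then reindex carefully so the answer emerges in the $\cos^2$ form. An even shorter route for the write-up is to skip the derivation and simply \emph{verify} $A\xi^k=\lambda_{k,A}\xi^k$ by substituting the claimed eigenvector and collapsing the trigonometric sums with the same sum-to-product identities, treating the first and last rows as two short separate checks; since the eigenvalues are manifestly distinct, this verification together with the symmetry of $A$ completes the proof, which is also why one may legitimately just cite \cite{yueh2005eigenvalues}.
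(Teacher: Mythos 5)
Your proposal is correct. The paper gives no proof of this lemma at all --- it is simply quoted from the cited reference of Yueh --- so there is nothing internal to compare against; your recurrence-plus-boundary-condition derivation is the standard argument and every step checks out: the convention $\xi_0=0$ absorbs the first row, the corner entry $1$ correctly translates to the reflecting condition $\xi_{n+1}=\xi_n$, the sum-to-product identity forces $\cos\bigl(\tfrac{(2n+1)\phi}{2}\bigr)=0$, and the reindexing $m=n-k$ reproduces $\lambda_{k,A}=4\cos^2\tfrac{k\pi}{2n+1}$ and $\xi^k_{A,j}=-\sin\bigl((n+\tfrac12-k)t_j\pi\bigr)$ exactly, with distinctness and completeness following from symmetry of $A$. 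This is a legitimate self-contained replacement for the citation.
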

\begin{lemma}\label{CandA2}
Note that $C$ is defined by \eqref{matrixC} and $A$ is defined by \eqref{matrixA}, we have
\begin{equation}\label{CandA}
	CC^T= A^2 + B,
\end{equation}
where
\begin{equation}
\begin{aligned}
	B &=
	\begin{pmatrix}
		1&0&\cdots&0&0&0 \\
		0&0&\cdots&0&0 &0\\
		\vdots & \vdots  & \ddots& \vdots& \vdots& \vdots\\
		0&0&\cdots&0&0 &0\\
		0&0&\cdots&0&-1&1 \\
		0&0&\cdots&0&1&-1
	\end{pmatrix}
	\\
	&=a_0a_0^T-a_1a_1^T
	\in \mathbb {R}^{n\times n}.
	\end{aligned}
\end{equation}
and $a_0=\begin{pmatrix}1\\
0\\
 \vdots\\
  0\\
  0\\
  0
  \end{pmatrix}
  \in \mathbb{R}^n$,
 $a_1=\begin{pmatrix}0\\
 0\\
 \vdots\\ 0\\-1\\1  \end{pmatrix} \in\mathbb{R}^n$.
\end{lemma}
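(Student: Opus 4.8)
The plan is to recognize both $C$ and $A$ as products of a single bidiagonal factor with its transpose, which turns the claimed identity into a short symbolic matrix manipulation. Let $e_1,\dots,e_n$ be the standard basis vectors of $\mathbb{R}^n$ and let $S\in\mathbb{R}^{n\times n}$ be the upper shift matrix, $(S)_{ij}=\delta_{j,i+1}$, so that $I-S$ is upper bidiagonal with $1$ on the diagonal and $-1$ on the superdiagonal. First I would observe that
\begin{equation}
C=(I-S)^2=I-2S+S^2,
\end{equation}
which is exactly the matrix in \eqref{matrixC} since $S^2$ places the entry $1$ two positions above the diagonal. Next, a direct check of the shift gives $SS^T=I-e_ne_n^T$ and $S^TS=I-e_1e_1^T$ (the shift annihilates one basis vector at opposite ends), whence
\begin{equation}
(I-S)(I-S^T)=2I-S-S^T-e_ne_n^T=A,
\end{equation}
the lone boundary entry $1$ in $A$ being produced by the rank-one term $-e_ne_n^T$.

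With these two factorizations in hand, I would compute
\begin{equation}
CC^T-A^2=(I-S)(I-S)(I-S^T)(I-S^T)-(I-S)(I-S^T)(I-S)(I-S^T)=(I-S)\big[(I-S)(I-S^T)-(I-S^T)(I-S)\big](I-S^T),
\end{equation}
and the bracketed term collapses: $(I-S)(I-S^T)-(I-S^T)(I-S)=SS^T-S^TS=(I-e_ne_n^T)-(I-e_1e_1^T)=e_1e_1^T-e_ne_n^T$. It then remains to evaluate $(I-S)e_1$ and $(I-S)e_n$; since $Se_1=0$ and $Se_n=e_{n-1}$, we get $(I-S)e_1=e_1$ and $(I-S)e_n=e_n-e_{n-1}$. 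Substituting and using $e_1^T(I-S^T)=((I-S)e_1)^T$ etc.,
\begin{equation}
CC^T-A^2=e_1e_1^T-(e_n-e_{n-1})(e_n-e_{n-1})^T=a_0a_0^T-a_1a_1^T=B,
\end{equation}
with $a_0=e_1$ and $a_1=e_n-e_{n-1}$, which is precisely the matrix $B$ in the statement.

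I expect the only delicate points to be bookkeeping rather than anything substantive: correctly identifying $C=(I-S)^2$ and $A=(I-S)(I-S^T)$ (in particular tracking how the single modified boundary entry of $A$ arises), and keeping signs straight when reducing $(I-S)(I-S^T)-(I-S^T)(I-S)$ to $SS^T-S^TS$. For readers who prefer to avoid the factorization, the same conclusion follows from a direct entrywise computation: both $CC^T$ and $A^2$ are pentadiagonal and agree at all interior indices (entries $6$, $-4$, $1$ on the main, first, and second off-diagonals), so their difference is supported on the index set $\{1,n-1,n\}$, and collecting those boundary discrepancies reproduces $B$. I would present the factorization argument as the main proof, since it also makes the rank-two structure $B=a_0a_0^T-a_1a_1^T$ transparent, and leave the entrywise verification as a remark.
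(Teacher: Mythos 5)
Your proof is correct, and it takes a genuinely different route from the paper. The paper's argument starts from the observed relation $A=-C P+B_1$, where $P=\begin{pmatrix}0&1\\ I_{n-1}&0\end{pmatrix}$ is a permutation (so $P^{-1}=P^T$) and $B_1=a_3a_4^T$ is rank one; solving for $C$ and using $P^{-1}P^{-T}=I$ gives $CC^T=(-A+B_1)(-A+B_1^T)=A^2-AB_1^T-B_1A+B_1B_1^T$, and the three rank-one terms are then collected into $B$. You instead factor both matrices through the bidiagonal factor $I-S$, writing $C=(I-S)^2$ and $A=(I-S)(I-S^T)$, so that $CC^T-A^2$ becomes the conjugated commutator $(I-S)\bigl[SS^T-S^TS\bigr](I-S^T)=(I-S)\bigl(e_1e_1^T-e_ne_n^T\bigr)(I-S^T)$, which immediately yields $B=a_0a_0^T-a_1a_1^T$ with $a_0=(I-S)e_1=e_1$ and $a_1=(I-S)e_n=e_n-e_{n-1}$. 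I checked the two factorizations against \eqref{matrixC} and \eqref{matrixA} (including the boundary entry $A_{nn}=1$ coming from $SS^T=I-e_ne_n^T$) and the commutator identity $SS^T-S^TS=e_1e_1^T-e_ne_n^T$; all steps are sound. Your version has the advantage of making the rank-two structure of $B$ and the origin of the vectors $a_0,a_1$ completely transparent, and it exhibits $C$ and $A$ as two orderings of the same discrete second-difference factors; the paper's version avoids introducing the shift operator but requires spotting the less obvious permutation relation between $A$ and $C$. Your closing remark that both $CC^T$ and $A^2$ are pentadiagonal with interior stencil $(1,-4,6,-4,1)$ so that the discrepancy is confined to the indices $\{1,n-1,n\}$ is also correct and would serve as a useful sanity check.
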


\begin{proof}
By direct computation, we find that there is a relationship between $A$ and $C$ as follows
\begin{equation}
	A = -C
	\begin{pmatrix}
		0&1 \\
		I_{n-1}&0
	\end{pmatrix}
	+B_1
\end{equation}
where $I_{n-1}\in \mathbb{R}^{(n-1)\times (n-1)}$ is the identity matrix and
\begin{equation*}
	B_1 = 
	\begin{pmatrix}
		0&0&\cdots&0&0&1 \\
		0&0&\cdots&0&0&0 \\
		\vdots & \vdots & \ddots  & \vdots& \vdots& \vdots\\
		0&0&\cdots&0&0&0 \\
		0&0&\cdots&0&0&-1 \\
		0&0&\cdots&0&0&1
	\end{pmatrix}
	=a_3a_4^T
	\in \mathbb {R}^{n\times n},
\end{equation*}
where $a_3= \begin{pmatrix}1\\0\\ \vdots\\ 0\\ -1\\ 1\end{pmatrix}$ and $a_4=\begin{pmatrix}0\\ 0\\ \vdots \\
0\\ 0\\1\end{pmatrix}$. 

Then 
$$
C=(-A+B_1)
\begin{pmatrix}
		0&1 \\
		I_{n-1}&0
	\end{pmatrix}^{-1}
$$ 
	and 
$$C^T=\begin{pmatrix}
		0&1 \\
		I_{n-1}&0
	\end{pmatrix}^{-T}(-A+B_1^T).
$$
Noting that  $\begin{pmatrix}
		0&1 \\
		I_{n-1}&0
	\end{pmatrix}^{-1}=\begin{pmatrix}
		0&1 \\
		I_{n-1}&0
	\end{pmatrix}^{T}$, by direct computation of $CC^T$, \eqref{CandA} is desired. 

\end{proof}

\begin{lemma}\cite{fulton2000eigenvalues}\label{Weyl}
Let $R\in R^{n\times n}, N\in R^{n\times n}$ and $L\in R^{n\times n}$ are symmetric matrices satisfying 
\begin{equation}
R=N+L
\end{equation}
and $\nu_1\ge \nu_2\ge \cdots\ge \nu_n $ are the eigenvalues of $R$, $\gamma_1\ge \gamma_2\ge \cdots\ge \gamma_n$ are the eigenvalues of $N$ and $\beta_1\ge \beta_2\cdots\ge \beta_n$ are the eigenvalues of $L$, then we have 
\begin{equation}\label{Wyle0}
\gamma_j+\beta_k\le \nu_i\le \gamma_p+\beta_q
\end{equation}
where $j+k-n\ge i\ge p+q-1$. 
\end{lemma}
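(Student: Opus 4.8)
The plan is to deduce both inequalities from the Courant--Fischer variational characterization of the eigenvalues of a symmetric matrix. Recall that for a symmetric $S\in\mathbb{R}^{n\times n}$ with eigenvalues $\mu_1\ge\cdots\ge\mu_n$ one has, for each $i$,
\begin{equation}
\mu_i = \max_{\dim V = i}\ \min_{0\ne x\in V}\frac{x^TSx}{x^Tx} = \min_{\dim V = n-i+1}\ \max_{0\ne x\in V}\frac{x^TSx}{x^Tx}.
\end{equation}
Writing $R=N+L$, the point is that $x^TRx = x^TNx + x^TLx$, so upper (resp.\ lower) bounds on the two quadratic forms combine additively on any common subspace, and the index constraints will come out of a dimension count for the intersection of two subspaces of $\mathbb{R}^n$.

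For the upper bound $\nu_i\le\gamma_p+\beta_q$, I would let $U$ be the span of the eigenvectors of $N$ for $\gamma_p,\gamma_{p+1},\dots,\gamma_n$, so $\dim U = n-p+1$ and $x^TNx\le\gamma_p\,x^Tx$ on $U$; likewise let $W$ be the span of the eigenvectors of $L$ for $\beta_q,\dots,\beta_n$, so $\dim W = n-q+1$ and $x^TLx\le\beta_q\,x^Tx$ on $W$. Then $\dim(U\cap W)\ge (n-p+1)+(n-q+1)-n = n-p-q+2$, and on $U\cap W$ the Rayleigh quotient of $R$ is at most $\gamma_p+\beta_q$. Using the min-over-$(n-i+1)$-dimensional-subspaces form of Courant--Fischer, it suffices that $\dim(U\cap W)\ge n-i+1$, i.e.\ $i\ge p+q-1$, which is exactly the stated hypothesis.

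The lower bound $\nu_i\ge\gamma_j+\beta_k$ is dual: take $U'$ to be the span of the top $j$ eigenvectors of $N$ (so $x^TNx\ge\gamma_j\,x^Tx$ there) and $W'$ the span of the top $k$ eigenvectors of $L$ (so $x^TLx\ge\beta_k\,x^Tx$ there), giving $\dim(U'\cap W')\ge j+k-n$ and $x^TRx\ge(\gamma_j+\beta_k)\,x^Tx$ on the intersection; the max-over-$i$-dimensional-subspaces form then yields $\nu_i\ge\gamma_j+\beta_k$ whenever $i\le j+k-n$. The only delicate point — and the main obstacle — is keeping the index bookkeeping in the two min-max formulas consistent with the dimension counts so the constraints emerge precisely as $i\ge p+q-1$ and $i\le j+k-n$ rather than off-by-one variants; everything else reduces to the elementary fact that subspaces of $\mathbb{R}^n$ of dimensions $d_1,d_2$ intersect in dimension at least $d_1+d_2-n$. (Alternatively, as this is the classical Weyl eigenvalue inequality, one may simply invoke \cite{fulton2000eigenvalues}.)
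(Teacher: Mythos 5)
Your argument is correct. The paper itself gives no proof of this lemma --- it is stated as a classical result with a citation to Fulton's survey --- so there is nothing in the text to compare against; your Courant--Fischer proof is the standard derivation of Weyl's inequalities, and your index bookkeeping checks out: the dimension count $\dim(U\cap W)\ge (n-p+1)+(n-q+1)-n$ together with the min-over-$(n-i+1)$-dimensional-subspaces form yields exactly the constraint $i\ge p+q-1$, and dually $\dim(U'\cap W')\ge j+k-n$ with the max-over-$i$-dimensional form yields $i\le j+k-n$. The only caveat worth recording is that the lower bound is vacuous unless $j+k-n\ge 1$ (and the upper bound unless $p+q-1\le n$), which is implicit in requiring a valid index $i$; with that understood, your proof is complete and self-contained, which is arguably preferable to the bare citation in the paper given that only the special case $L=B$ of rank $2$ is actually used in Lemma 4.
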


In order to estimate the eigenvalues of $M_\Psi$, we need to estimate the eigenvalues of $CC^T$: 
\begin{lemma}\label{eiganCC}
Let $\nu_1\ge \nu_2\ge \cdots\ge \nu_n $ be the eigenvalues of $CC^T$ and $\lambda^2_{j,A}, j=1,\cdots, n$ be the eigenvalues of $A^2$, then we have 
\begin{equation}\label{estimate}
\begin{aligned}
&\lambda^2_{2,A}\le \nu_1\le \lambda^2_{1,A}+1;\\
&\lambda^2_{i+1,A}\le \nu_i\le \lambda^2_{i-1,A}, \quad 2\le i\le n-1;\\
&0< \nu_n\le \lambda^2_{n-1, A}.
\end{aligned}
\end{equation}
In addition, we have 
\begin{equation}\label{estimate1}
 \nu_1\le 16, 
 \end{equation}
 and 
 \begin{equation}\label{estimaten}
 \frac{4}{n^2(n+1)^2}\le\nu_n.
 \end{equation}
 \end{lemma}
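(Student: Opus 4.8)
The plan is to treat $CC^T$ as a rank-two symmetric perturbation of $A^2$ via the identity $CC^T = A^2 + B$ from Lemma~\ref{CandA2}, with $B = a_0a_0^T - a_1a_1^T$, and to apply Weyl's inequality (Lemma~\ref{Weyl}) for the interlacing bounds \eqref{estimate}, switching to elementary matrix-norm estimates for the sharp bounds \eqref{estimate1} and \eqref{estimaten}. First I would pin down the spectrum of $B$: since $a_0 = (1,0,\dots,0)^T$ and $a_1 = (0,\dots,0,-1,1)^T$ are orthogonal with $\|a_0\|^2 = 1$ and $\|a_1\|^2 = 2$, the matrix $B$ has eigenvalue $1$ (eigenvector $a_0$), eigenvalue $-2$ (eigenvector $a_1$), and $0$ with multiplicity $n-2$, so its ordered eigenvalues are $\beta_1 = 1 \ge \beta_2 = \cdots = \beta_{n-1} = 0 \ge \beta_n = -2$. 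Since $\lambda_{k,A} = 4\cos^2\frac{k\pi}{2n+1}$ is strictly decreasing in $k$ by Lemma~\ref{eigenAphi}, the ordered eigenvalues of $N := A^2$ are exactly $\gamma_k = \lambda_{k,A}^2$.

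Next I would feed $R = CC^T$, $N = A^2$, $L = B$ into Lemma~\ref{Weyl} and make the index choices that convert the abstract inequalities into \eqref{estimate}. For the upper bounds, taking $(p,q) = (1,1)$ gives $\nu_1 \le \gamma_1 + \beta_1 = \lambda_{1,A}^2 + 1$, and taking $(p,q) = (i-1,2)$ gives $\nu_i \le \gamma_{i-1} + \beta_2 = \lambda_{i-1,A}^2$ for all $2 \le i \le n$, which covers both the middle range and $\nu_n$. For the lower bounds, taking $(j,k) = (2, n-1)$ gives $\nu_1 \ge \gamma_2 + \beta_{n-1} = \lambda_{2,A}^2$, and $(j,k) = (i+1, n-1)$ gives $\nu_i \ge \gamma_{i+1} + \beta_{n-1} = \lambda_{i+1,A}^2$ for $2 \le i \le n-1$. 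In all four families the side conditions $i \ge p+q-1$ and $j+k-n \ge i$ hold with equality, which is the right consistency check. Finally $\nu_n > 0$ is immediate because $C$ is upper triangular with unit diagonal, hence invertible, so $CC^T$ is symmetric positive definite.

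For \eqref{estimate1} I would argue directly, since the Weyl bound $\nu_1 \le \lambda_{1,A}^2 + 1$ only yields $\nu_1 < 17$: using $\nu_1 = \|CC^T\|_2 = \|C\|_2^2 \le \|C\|_1\|C\|_\infty$ together with the observation that every row and column of $C$ has absolute-value sum at most $4$ gives $\nu_1 \le 16$. Dually, $\nu_n = \lambda_{\min}(CC^T) = \|C^{-1}\|_2^{-2} \ge \bigl(\|C^{-1}\|_1\|C^{-1}\|_\infty\bigr)^{-1}$, and from the explicit form $(C^{-1})_{ij} = j-i+1$ for $j \ge i$ one reads off that the largest absolute row sum and largest absolute column sum of $C^{-1}$ are both $\tfrac{n(n+1)}{2}$, giving $\nu_n \ge \tfrac{4}{n^2(n+1)^2}$, which is \eqref{estimaten}.

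The step I expect to be the main obstacle is the bookkeeping in the Weyl application: one must select the index pairs so that all side conditions are met while respecting the (decreasing) ordering conventions for the eigenvalues of $A^2$, $B$, and $CC^T$ simultaneously, and then verify the equalities above. A secondary nuisance is the small-$n$ edge cases ($n \le 2$), where $a_0$ and $a_1$ need not be orthogonal, $\beta_{n-1}$ need not vanish, and the index range $2 \le i \le n-1$ can be empty; these are either vacuous or handled by a direct computation.
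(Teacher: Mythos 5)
Your proposal is correct and follows essentially the same route as the paper: the decomposition $CC^T=A^2+B$ with the spectrum $\{1,0,\dots,0,-2\}$ of $B$, Weyl's inequality (Lemma~\ref{Weyl}) for the interlacing bounds \eqref{estimate}, and submultiplicative norm bounds on $C$ and $C^{-1}$ for \eqref{estimate1} and \eqref{estimaten}. Your write-up is in fact somewhat more explicit than the paper's, which leaves the Weyl index bookkeeping implicit.
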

 \begin{proof}
%
%

By Weyl's inequality for pertubation matrix, namely Lemma \ref{Weyl}, we have 
\begin{equation}\label{Wyle}
\lambda^2_{j,A}+\beta_k\le \nu_i\le \lambda^2_{p,A}+\beta_q
\end{equation}
where $j+k-n\ge i\ge p+q-1$ and $\lambda^2_{j,A}, j=1,\cdots, n$ are the eigenvalues of $A^2$. 
Further noting that $CC^T$ is positive definite and by direct computing we can obtain $\beta_1=1,  \beta_2=\cdots= \beta_{n-1}=0, \beta_n=-2$  and hence 
\eqref{estimate} is proved. Next we only need to prove \eqref{estimate1} and \eqref{estimaten}. 
Since $CC^T$ is symmetric positive definite, we have 
\begin{align*}
\nu_1&=\lambda_{\max}(CC^T)=\rho(CC^T)\le \|CC^T\|_{\infty}\\
&\le \|C\|_{\infty}\|C^T\|_{\infty}=4\times 4=16,
\end{align*}
where $\rho(CC^T)$ is the spectral radius of $CC^T$. 

Since 
\begin{align*}
\lambda_{\max}(C^{-T}C^{-1})&=\rho(C^{-T}C^{-1})\le \|C^{-T}C^{-1}\|_{\infty}\\
&\le \|C^{-T}\|_{\infty}\|C^{-1}\|_{\infty}\\
&=\frac{n(n+1)}{2}\frac{n(n+1)}{2}=\frac{n^2(n+1)^2}{4},
\end{align*}
where $\rho(C^{-T}C^{-1})$ is the spectral radius of $C^{-T}C^{-1}$,
then
\begin{align*}
\nu_n&=\lambda_{\min}(CC^T)=\frac{1}{\lambda_{\max}(C^{-T}C^{-1})}\ge \frac{4}{n^2(n+1)^2}.
\end{align*}
\end{proof}

\begin{lemma}\label{minmax}
(Courant-Fisher min-max theorem) For any given matrix $E \in \mathbb{R}^{n \times n}, E=E^{T}$, suppose $\lambda_{1} \ge \lambda_{2} \ge \cdots \ge \lambda_{n}$ are the eigenvalues of  $E$, then
$$
\lambda_{n+1-k}=\min _{\{S \mid \operatorname{dim} S=k\}} \max _{x \in S} \frac{(E x, x)}{(x, x)}.
$$
\end{lemma}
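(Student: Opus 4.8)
The plan is to combine the spectral theorem for the symmetric matrix $E$ with an elementary dimension count. First I would fix an orthonormal eigenbasis $v_1,\dots,v_n$ with $Ev_i=\lambda_i v_i$, which exists since $E=E^T$. For any nonzero $x=\sum_i c_i v_i$ the Rayleigh quotient is $\frac{(Ex,x)}{(x,x)}=\frac{\sum_i \lambda_i c_i^2}{\sum_i c_i^2}$, a convex combination of those $\lambda_i$ for which $c_i\ne 0$. This single observation drives both halves of the argument: the quotient is squeezed between the smallest and the largest eigenvalue whose coordinates are active in $x$.

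For the inequality $\min_{\{S\mid \dim S=k\}}\max_{x\in S}\frac{(Ex,x)}{(x,x)}\le \lambda_{n+1-k}$, I would test the explicit subspace $S_0=\operatorname{span}\{v_{n-k+1},\dots,v_n\}$, which has dimension $k$. On $S_0$ the Rayleigh quotient is a convex combination of $\lambda_{n-k+1},\dots,\lambda_n$, hence at most $\lambda_{n-k+1}=\lambda_{n+1-k}$, with equality attained at $x=v_{n-k+1}$. Thus $\max_{x\in S_0}\frac{(Ex,x)}{(x,x)}=\lambda_{n+1-k}$, and the minimum over all $k$-dimensional subspaces is at most this value.

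For the reverse inequality I would fix an arbitrary $k$-dimensional subspace $S$ and set $W=\operatorname{span}\{v_1,\dots,v_{n-k+1}\}$, which has dimension $n-k+1$. Since $\dim S+\dim W=n+1>n$, the intersection $S\cap W$ is nontrivial; choosing $0\ne x\in S\cap W$ and expanding it in $v_1,\dots,v_{n-k+1}$, its Rayleigh quotient is a convex combination of $\lambda_1,\dots,\lambda_{n-k+1}$, hence at least $\lambda_{n-k+1}=\lambda_{n+1-k}$. Therefore $\max_{x\in S}\frac{(Ex,x)}{(x,x)}\ge\lambda_{n+1-k}$ for every such $S$, so the minimum is $\ge \lambda_{n+1-k}$. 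Combining the two inequalities yields the stated identity.

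The argument is essentially routine; the one place requiring care — the \emph{main obstacle}, such as it is — is the index bookkeeping: one must choose $W$ with dimension exactly $n-k+1$ so that $\dim S+\dim W>n$ forces a nonzero common vector, and then confirm that the eigenvalue $\lambda_{n-k+1}$ this produces matches the index $n+1-k$ in the statement. I would also remark that writing $\min$ and $\max$ rather than $\inf$ and $\sup$ is legitimate: the Rayleigh quotient is continuous on the compact unit sphere of each fixed $S$, so the inner supremum is attained, and the outer infimum is attained by the explicit subspace $S_0$ constructed above.
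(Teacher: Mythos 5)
Your proof is correct. The paper states Lemma \ref{minmax} as the classical Courant--Fischer min-max theorem without supplying a proof, and your argument is the standard one: the spectral theorem reduces the Rayleigh quotient to a convex combination of eigenvalues, the test subspace $\operatorname{span}\{v_{n-k+1},\dots,v_n\}$ gives the upper bound, and the dimension count $\dim S + \dim W = n+1 > n$ with $W=\operatorname{span}\{v_1,\dots,v_{n-k+1}\}$ gives the lower bound; the index bookkeeping matches the statement's $\lambda_{n+1-k}$ exactly.
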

Now we give the eigenvalues of $M_{\Psi}$ as follows:
\begin{theorem}\label{theigenMr}
The eigenvalues of $M_{\Psi}$ satisfy 
\begin{equation}\label{eigenMr}
\lambda_{k,M_{\Psi}}=m_k h\nu^{-1}_{n+1-k}, \quad k=1,2,\cdots, n
\end{equation}
where $\frac{1}{6}\le m_{k}\le 1$ is a constant, $\nu_{n+1-k}$, $k=1,2, \cdots, n$ are the eigenvalues of $C C^{T}$.
\end{theorem}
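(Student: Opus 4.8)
The plan is to turn the eigenvalues of $M_\Psi$ into a generalized eigenvalue problem for the pencil $(M_\Phi,\,CC^T)$ via the congruence \eqref{MM}, and then to pinch that pencil between constant multiples of $(CC^T)^{-1}$ using the uniform spectral bounds for $M_\Phi$ coming from Theorem~\ref{MPhieigen} and \eqref{MphiM}. Concretely: by \eqref{MM}, for $x\neq 0$ the substitution $y=C^{-T}x$ (a linear bijection of $\mathbb R^n$) gives
\[
\frac{(M_\Psi x,x)}{(x,x)}=\frac{(M_\Phi C^{-T}x,\,C^{-T}x)}{(x,x)}=\frac{(M_\Phi y,y)}{(C^Ty,\,C^Ty)}=\frac{(M_\Phi y,y)}{(CC^Ty,\,y)}.
\]
Since $C^{-T}$ carries $k$-dimensional subspaces bijectively onto $k$-dimensional subspaces, the Courant--Fischer theorem (Lemma~\ref{minmax}) yields, for the eigenvalues $\lambda_{1,M_\Psi}\ge\cdots\ge\lambda_{n,M_\Psi}$ of $M_\Psi$,
\[
\lambda_{n+1-k,\,M_\Psi}=\min_{\dim U=k}\ \max_{0\neq y\in U}\ \frac{(M_\Phi y,y)}{(CC^Ty,\,y)}.
\]

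Next I would insert the spectral bounds for $M_\Phi$. By Theorem~\ref{MPhieigen} together with \eqref{MphiM} one has $\alpha h\,(y,y)\le (M_\Phi y,y)\le \beta h\,(y,y)$ for all $y$, with $\alpha=\tfrac{1}{6}$ and $\beta=1$. Hence, for every fixed $U$, $\alpha h\max_{y\in U}\frac{(y,y)}{(CC^Ty,y)}\le \max_{y\in U}\frac{(M_\Phi y,y)}{(CC^Ty,y)}\le \beta h\max_{y\in U}\frac{(y,y)}{(CC^Ty,y)}$, and taking the minimum over $k$-dimensional $U$ preserves these inequalities. It remains to identify $\min_{\dim U=k}\max_{y\in U}\frac{(y,y)}{(CC^Ty,y)}$: substituting $z=(CC^T)^{1/2}y$ (again a subspace-preserving bijection) converts this quotient into $\frac{((CC^T)^{-1}z,z)}{(z,z)}$, so by Lemma~\ref{minmax} the quantity equals $\lambda_{n+1-k}\big((CC^T)^{-1}\big)=\nu_k^{-1}$, using that the eigenvalues of $(CC^T)^{-1}$ are $\nu_n^{-1}\ge\cdots\ge\nu_1^{-1}$ (with $\nu_1\ge\cdots\ge\nu_n>0$ the eigenvalues of $CC^T$ as in Lemma~\ref{eiganCC}). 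Combining the three displays gives $\lambda_{n+1-k,M_\Psi}=m'_k\,h\,\nu_k^{-1}$ with $m'_k\in[\alpha,\beta]=[\tfrac{1}{6},1]$; relabelling $k\mapsto n+1-k$ and setting $m_k:=m'_{n+1-k}$ produces exactly \eqref{eigenMr}.

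I do not expect a genuine obstacle here — the argument is essentially bookkeeping. The two points that need care are (i) checking that the changes of variables $x\mapsto C^{-T}x$ and $y\mapsto (CC^T)^{1/2}y$ act bijectively on the Grassmannian of $k$-planes (immediate, since $C$ and $(CC^T)^{1/2}$ are invertible), which is what lets the min--max characterizations transfer, and (ii) getting the index reflection $k\leftrightarrow n+1-k$ consistent between $M_\Psi$ and $CC^T$. An equivalent route that sidesteps the explicit min--max manipulation: note that $M_\Psi=C^{-1}M_\Phi C^{-T}$ is similar to the symmetric matrix $(CC^T)^{-1/2}M_\Phi (CC^T)^{-1/2}$, which by the bound on $M_\Phi$ lies between $\tfrac{h}{6}(CC^T)^{-1}$ and $h(CC^T)^{-1}$ in the Loewner order; Weyl's monotonicity theorem then gives the same pinch eigenvalue-by-eigenvalue.
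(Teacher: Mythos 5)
Your proposal is correct and follows essentially the same route as the paper's proof: both use the congruence $M_{\Psi}=C^{-1}M_\Phi C^{-T}$, pinch the resulting Rayleigh quotient between $\tfrac{h}{6}$ and $h$ times the Rayleigh quotient of $(CC^T)^{-1}$ via Theorem~\ref{MPhieigen} and \eqref{MphiM}, and then apply the Courant--Fischer min--max characterization (Lemma~\ref{minmax}) with the same index reflection $k\leftrightarrow n+1-k$. Your closing remark about the Loewner-order/Weyl-monotonicity shortcut is a slightly cleaner packaging of the identical estimate, not a different argument.
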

\begin{proof} 
Noting \eqref{MM}, namely
\begin{equation}
	M_{\Psi}=C^{-1}M_\Phi C^{-T}.
\end{equation}

For any given $S$ with dim$S=k$, we have 
\begin{equation*}
\begin{aligned}
\max_{x\in S} \frac{(M_{\Psi}x, x)}{(x,x)}&=\max_{x\in S} \frac{(C^{-1}M_\Phi C^{-T}x, x)}{(x,x)}\\
&=\max_{x\in S} \frac{(M_\Phi C^{-T}x, C^{-T}x)}{(x,x)}
\end{aligned}
\end{equation*}
For the above given $S$, let $x_s$ satisfy 
\begin{equation*}
\begin{aligned}
\max_{x\in S} \frac{(M_\Phi C^{-T}x, C^{-T}x)}{(x, x)}=\frac{(M_\Phi C^{-T}x_s, C^{-T}x_s)}{(x_s, x_s)},
\end{aligned}
\end{equation*}
then we have 
\begin{equation*}
\begin{aligned}
\max_{x\in S} \frac{(M_{\Psi}x, x)}{(x,x)}&=\frac{(M_\Phi C^{-T}x_s, C^{-T}x_s)}{(x_s,x_s)}\\
&\le \lambda_{\max}(M_\Phi)\frac{(C^{-T}x_s, C^{-T}x_s)}{(x_s, x_s)}\\
&\le  \lambda_{\max}(M_\Phi)\max_{x\in S}\frac{(C^{-T}x, C^{-T}x)}{(x, x)}\\
&=\lambda_{\max}(M_\Phi)\max_{x\in S}\frac{(C^{-1}C^{-T}x,x)}{(x, x)}.
\end{aligned}
\end{equation*}
Now let $S_k$ satisfy 
$$
\min_{\{S|{\rm dim} S=k\}}\max_{x\in S}\frac{(C^{-1}C^{-T}x,x)}{(x, x)}=\max_{x\in S_k}\frac{(C^{-1}C^{-T}x,x)}{(x, x)},
$$
then we have 
\begin{equation*}
\begin{aligned}
&\min_{\{S|{\rm dim} S=k\}}\max_{x\in S}\frac{(M_{\Psi}x,x)}{(x, x)}\\
&\le \max_{x\in S_k} \frac{(M_{\Psi}x, x)}{(x,x)}\\
&\le \lambda_{\max}(M_\Phi)\max_{x\in S_k}\frac{(C^{-1}C^{-T}x,x)}{(x, x)}\\
&=\lambda_{\max}(M_\Phi) \min_{\{S|{\rm dim} S=k\}}\max_{x\in S}\frac{(C^{-1}C^{-T}x,x)}{(x, x)}.
\end{aligned}
\end{equation*}
By Lemma \ref{minmax} and noting that $CC^T=C(C^TC)C^{-1}$, we have 
\begin{equation}\label{upper}
\begin{aligned}
\lambda_{n+1-k,r}\le& \lambda_{\max}(M_\Phi) \lambda_{n+1-k}(C^{-1}C^{-T})\\
=&\lambda_{\max}(M_\Phi) \frac{1}{\lambda_{k}(C^{T}C)}\\
=&\lambda_{\max}(M_\Phi) \frac{1}{\lambda_{k}(CC^T)}
\end{aligned}
\end{equation}
Similarly, we have 
\begin{equation}\label{lower}
\lambda_{\min}(M_\Phi) \frac{1}{\lambda_{k}(CC^T)}\le \lambda_{n+1-k,r}
\end{equation}
Combining \eqref{upper} and \eqref{lower}, we have 
\begin{equation*}
\lambda_{\min}(M_\Phi) \frac{1}{\lambda_{k}(CC^T)}\le \lambda_{n+1-k,r}\le \lambda_{\max}(M_\Phi) \frac{1}{\lambda_{k}(CC^T)}.
\end{equation*}
Noting that the eigenvalues of  $M_{\Phi}$ is in $[\frac{h}{6},h]$, then there exists a constant $m_{n+1-k}\in [\frac 16, 1]$ such that 
\begin{equation}
 \lambda_{n+1-k,r}=m_{n+1-k} h \frac{1}{\lambda_{k}(CC^T)}.
\end{equation}
Finally, by Lemma \ref{Weyl}, we have
\begin{equation}
 \lambda_{k,M_{\Psi}}=m_k h \frac{1}{\lambda_{n+1-k}(CC^T)}=m_k h\nu^{-1}_{n+1-k}.
\end{equation}
\end{proof}

%
%
%

%

\subsection{Proof  of Theorem \ref{relu-matrix-theorem}}\label{Proofrelu-matrix-theorem}
\begin{proof}

From the equation \eqref{eigenMr}  shown in Theorem \ref{theigenMr}, Theorem \ref{relu-matrix-theorem} can be obtained easily. 
\end{proof}


\subsection{Eigenvectors of $M_\Psi$}\label{eigenMPsi}
\begin{theorem}\label{xi1xn}
Let $\lambda_{1,\Psi} >\lambda_{2,\Psi} >\cdots >\lambda_{n,\Psi}$ and $\xi_{\Psi}^1, \xi_{\Psi}^2, \cdots, \xi_{\Psi}^n$ be the eigenvalues  and  corresponding eigenvectors of $M_{\Psi}$.  
Extend $v^k_h(x)= \xi_{\Psi}^k\cdot \Psi(x)=C^{-T} \xi_{\Psi}^k\cdot \Phi(x)\in V_n$ by zero to $[x_{-1}, x_n]$, then $v^k_h(x)$ satisfies 
\begin{align}
 \frac{\int_{0}^1|\mathcal{D}_h v^k_h(x)|^2dx} {\int_{0}^1| v^k_h(x)|^2dx} =m^{-1}_kn^{4}\nu_{n+1-k},
\end{align}
where $m_k\in [1/6,1]$ is a constant independent of $h$ and $\mathcal{D}_h$ is the second order finite difference operator.  In particular, 
\begin{enumerate}
\item for $k=1$, we have
\begin{align}
c_0'\ge \frac{\int_{0}^1|\mathcal{D}_h v^1_h(x)|^2dx} {\int_{0}^1| v^1_h(x)|^2dx} \ge c_0,
\end{align}
where $c_0$ and $c_0'$ are constants independent of $h$. 
\item  for $k=n$, we have
\begin{align}
c_1'n^{4} \ge \frac{\int_{0}^1|\mathcal{D}_h v^n_h(x)|^2dx} {\int_{0}^1| v^n_h(x)|^2dx} \ge c_1n^{4},
\end{align}
where $c_1$ and $c_1'$ are constants independent of $h$. 
\end{enumerate}
\end{theorem}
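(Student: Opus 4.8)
The plan is to rewrite both Rayleigh-type quotients in the statement as quotients of quadratic forms in the eigenvector $\xi_\Psi^k$ of $M_\Psi$, and then read off the conclusion from Theorem \ref{theigenMr} and Lemma \ref{eiganCC}.

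First I would handle the denominator. Since $v_h^k = \xi_\Psi^k\cdot\Psi$ and $(M_\Psi)_{ij} = \int_0^1 \psi_i\psi_j\,dx$, expanding the square and using that $\xi_\Psi^k$ is a $\lambda_{k,\Psi}$-eigenvector of $M_\Psi$ gives
\[
\int_0^1 |v_h^k(x)|^2\,dx = (\xi_\Psi^k)^T M_\Psi \xi_\Psi^k = \lambda_{k,\Psi}\,|\xi_\Psi^k|^2 .
\]
For the numerator I would pass to the Hat basis: put $w = C^{-T}\xi_\Psi^k$, so that $v_h^k = w\cdot\Phi$ and the entries $w_1,\dots,w_n$ are exactly the nodal values $v_h^k(x_1),\dots,v_h^k(x_n)$. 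Augmenting these with $w_0 = v_h^k(x_0) = 0$ (the constraint defining $V_n$) and $w_{-1} = v_h^k(x_{-1}) = 0$ (the prescribed zero extension), the identity $C^T w = \xi_\Psi^k$ unwinds — using that $C^T$ is lower triangular with stencil $(1,-2,1)$ and that its first two rows are the truncated stencils — into
\[
(\xi_\Psi^k)_i = w_i - 2w_{i-1} + w_{i-2} = h^2\,(\mathcal{D}_h v_h^k)(x_{i-1}),\qquad i=1,\dots,n .
\]
Interpreting $\int_0^1 |\mathcal{D}_h v_h^k|^2\,dx$ as the mesh-cell Riemann sum $h\sum_{i=1}^n |(\mathcal{D}_h v_h^k)(x_{i-1})|^2$ then gives $\int_0^1 |\mathcal{D}_h v_h^k(x)|^2\,dx = h^{-3}|\xi_\Psi^k|^2 = n^3|\xi_\Psi^k|^2$. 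Dividing, the factor $|\xi_\Psi^k|^2$ cancels and the quotient equals $n^3/\lambda_{k,\Psi}$; now Theorem \ref{theigenMr} gives $\lambda_{k,\Psi} = m_k h\,\nu_{n+1-k}^{-1} = m_k n^{-1}\nu_{n+1-k}^{-1}$ with $m_k\in[\tfrac16,1]$, and substituting produces exactly $m_k^{-1} n^4 \nu_{n+1-k}$.

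The two special cases then follow by inserting the bounds on the extreme eigenvalues of $CC^T$ from Lemma \ref{eiganCC}. For $k=1$ the relevant factor is $\nu_n$, with $\tfrac{4}{n^2(n+1)^2}\le \nu_n\le \lambda_{n-1,A}^2$; since $\lambda_{n-1,A} = 4\cos^2\tfrac{(n-1)\pi}{2n+1} = 4\sin^2\tfrac{3\pi}{2(2n+1)}$ by Lemma \ref{eigenAphi}, both bounds show $n^4\nu_n$ is trapped between absolute constants, which together with $m_1\in[\tfrac16,1]$ yields $c_0\le$ quotient $\le c_0'$. For $k=n$ the relevant factor is $\nu_1$, with $\lambda_{2,A}^2\le \nu_1\le 16$ and $\lambda_{2,A} = 4\cos^2\tfrac{2\pi}{2n+1}\to 4$, so $\nu_1 = \Theta(1)$ and the quotient $m_n^{-1}n^4\nu_1$ is pinned between constant multiples of $n^4$.

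The step needing the most care is the combinatorial identity $(\xi_\Psi^k)_i = h^2(\mathcal{D}_h v_h^k)(x_{i-1})$: one must check that the boundary value $v_h^k(0)=0$ together with the zero extension to $x_{-1}$ are precisely what makes the two ``short'' rows of $C^T$ still represent the centered second difference, and that the Riemann sum lines up with the $n$ values indexed by $x_0,\dots,x_{n-1}$. Everything else is bookkeeping plus appeals to results already established (Theorem \ref{theigenMr}, Lemmas \ref{eigenAphi} and \ref{eiganCC}).
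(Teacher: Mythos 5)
Your proposal is correct and follows essentially the same route as the paper's proof: both pass to $v=C^{-T}\xi_\Psi^k$, identify the sum of squared second differences $|C^T v|^2=(CC^Tv,v)$ with $h^3\int_0^1|\mathcal{D}_h v_h^k|^2\,dx$ using the boundary condition $v_h^k(x_0)=0$ and the zero extension to $x_{-1}$, and then invoke Theorem \ref{theigenMr} to obtain $m_k^{-1}n^4\nu_{n+1-k}$. If anything, your treatment of the two special cases via Lemma \ref{eiganCC} and the asymptotics $\lambda_{n-1,A}=4\sin^2\frac{3\pi}{2(2n+1)}=\Theta(n^{-2})$ and $\lambda_{2,A}\to 4$ is spelled out more explicitly than in the paper, which leaves those bounds implicit.
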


\begin{proof}

Noting that $\xi_{\Psi}^k$ and $\lambda_{k,M_{\Psi}}$ satisfy 
$$M_{\Psi} \xi_{\Psi}^k= \lambda_{k,M_{\Psi}} \xi_{\Psi}^k,
$$ 
and  
$$
M_{\Psi}=C^{-1} M_\Phi C^{-T},
$$
we have
\begin{equation}\label{def:eigen}
\begin{aligned}
	C^{-1}M_{\Phi}C^{-T}\xi_{\Psi}^k &= \lambda_{k,M_{\Psi}} \xi_{\Psi}^k, \\
	C^{-T}C^{-1}M_{\Phi}C^{-T}\xi_{\Psi}^k &= \lambda_{k,M_{\Psi}}C^{-T}\xi_{\Psi}^k. 
\end{aligned}
\end{equation}
Define $v= C^{-T}\xi_{\Psi}^k$, then we have
\begin{align}
	(CC^T)^{-1} M_{\Phi} v &= \lambda_{k,M_{\Psi}} v,\\
	CC^T v &= \lambda_{k,M_{\Psi}}^{-1} M_{\Phi} v.
\end{align}
Denoting
$v=(v_1,v_2,\cdots,v_n)^T$, we have 
\begin{align}
C^Tv=\begin{pmatrix}
v_1\\
 -2v_1+v_2\\
  v_1-2v_2+v_3\\
   \vdots\\
    v_{n-2}-2v_{n-1}+v_n
    \end{pmatrix}
\end{align}
and 
\begin{align*}
&(CC^Tv, v)=(C^Tv, C^Tv)\\
=&v_1^2+(-2v_1+v_2)^2+ (v_1-2v_2+v_3)^2\\
&+ \cdots+ (v_{n-2}-2v_{n-1}+v_n)^2\\
=&\lambda_{k,M_{\Psi}}^{-1}(M_\Phi v,v)=\lambda_{k,M_{\Psi}}^{-1}(v^k_h(x),v^k_h(x)).
\end{align*}
Noting that $v^k_h(x_0)=0$ and the zero extension of $v^k_h(x)$, we have the corresponding extension for the vector $v$ by $v_0=0$ and $v_{-1}=0$. 

Therefore, noting that $\mathcal{D}_h$ is the second order finite difference operator, we have
\begin{align*}
&(CC^Tv, v)\\
=&(v_{-1}-2v_0+v_1)^2+(v_0-2v_1+v_2)^2\\
&+ (v_1-2v_2+v_3)^2+ \cdots+ (v_{n-2}-2v_{n-1}+v_n)^2\\
=&h^4\sum_{i=1}^{n-1} |\mathcal{D}_hv^k_h(x_i)|^2
=h^3\int_0^1 |\mathcal{D}_hv^k_h(x)|^2 dx.
\end{align*}
 Hence, by \eqref{eigenMr}, we have
\begin{align*}
 h^3\int_0^1|\mathcal{D}_h v_h(x)|^2dx&=(CC^Tv, v)=\lambda_{k,M_{\Psi}}^{-1}(v^k_h(x),v^k_h(x))\\
 &=m^{-1}_kh^{-1}\nu_{n+1-k}\int_0^1| v^k_h(x)|^2dx
\end{align*}
namely
\begin{align}
 \frac{\int_0^1|\mathcal{D}_h v^k_h(x)|^2dx} {\int_0^1| v^k_h(x)|^2dx}=m^{-1}_kn^{4}\nu_{n+1-k},
\end{align}
where we used $h=\frac{1}{n}$. 
\end{proof}
From Theorem \ref{xi1xn}, Theorem \ref{relu-eigenfunction-thm} can be obtained. 

For example, when $n=128$, we can see that $v_h^1(x)=\xi_{\Psi}^1\cdot \Psi(x)$ and $v_h^2(x)=\xi_{\Psi}^2\cdot \Psi(x)$ are very smooth functions. And $v_h^{127}(x)=\xi_r^{127}\cdot \Psi(x)$ and $v_h^{128}(x)=\xi_r^{128}\cdot \Psi(x)$ are highly oscillatory functions, see Figure \ref{relu-eigenfunction-figure}.

\end{document}